\newtheorem{lemma}{Lemma}
\newtheorem{example}{Example}
\newtheorem{definition}{Definition}
\let\cite\citep
\begin{document}
\runninghead{Suh and Pang et al.}
\title{Dexterous Contact-Rich Manipulation via the Contact Trust Region}

\author{
H.J. Terry Suh\affilnum{*, 1}, 
Tao Pang\affilnum{*, 2}, 
Tong Zhao\affilnum{2},
and Russ Tedrake\affilnum{1} 
\affiliation{%
  \affilnum{1} Massachusetts Institute of Technology, USA. \\
  \affilnum{2} RAI Institute (formerly Boston Dynamics AI Institute), USA.\\
  * These authors contributed equally to this work.}
\corrauth{Tao Pang, RAI Institute, USA.}
\email{pangtao@csail.mit.edu}
}

\newcommand{\dt}{h}
\newcommand{\qu}{{q^\mathrm{o}}}
\newcommand{\qa}{{q^\mathrm{a}}}
\newcommand{\plus}{{\texttt{+}}}
\newcommand{\qaplus}{{q^\mathrm{a}_\plus}}
\newcommand{\quplus}{{q^\mathrm{o}_\plus}}
\newcommand{\qplus}{{q_\plus}}
\newcommand{\qps}{{q_\plus^\star}}
\newcommand{\qbps}{{\bar{q}_{\plus}^\star}}
\newcommand{\qbpu}{{\bar{q}_{\plus}^\mathrm{o}}}
\newcommand{\qbpss}{{\bar{q}_{\plus\kappa}^{\star}}}
\newcommand{\qhpss}{{\hat{q}_{\plus\kappa}^{\star}}}
\newcommand{\qpss}{{q_\plus^{\star\kappa}}}
\newcommand{\lambss}[1][]{{\lambda^{\star}_{{\kappa}_{#1}}}}
\newcommand{\lambssb}[1][]{{\bar{\lambda}^{\star}_{{\kappa}_{#1}}}}
\newcommand{\lambssh}[1][]{{\hat{\lambda}^{\star}_{{\kappa}_{#1}}}}
\newcommand{\qacmd}{{q^\mathrm{a}_\mathrm{cmd}}}
\newcommand{\dqu}{h{\delta q^\mathrm{o}}}
\newcommand{\dqa}{h{\delta q^\mathrm{a}}}
\newcommand{\vU}{{v^\mathrm{o}}}
\newcommand{\vA}{{v^\mathrm{a}}}
\newcommand{\du}{{\delta u}}
\newcommand{\dq}{{\delta q}}
\newcommand{\Ka}{\mathbf{K}_\mathrm{a}}
\newcommand{\ka}{k_\mathrm{a}}
\newcommand{\Mu}{\mathbf{M}_\mathrm{o}}
\newcommand{\Ba}{\mathbf{B}^\mathrm{a}}
\newcommand{\Bu}{\mathbf{B}^\mathrm{o}}
\newcommand{\BuBundle}{\mathbf{\hat{B}}_\mathrm{o}}
\newcommand{\tauU}{\tau^\mathrm{o}}
\newcommand{\tauA}{\tau^\mathrm{a}}
\newcommand{\J}{\mathbf{J}}
\newcommand{\Ju}[1][]{\mathbf{J}_{\mathrm{o}_{#1}}}
\newcommand{\Ja}[1][]{\mathbf{J}_{\mathrm{a}_{#1}}}
\newcommand{\Jn}[1][]{\mathbf{J}_{\mathrm{n}_{#1}}}
\newcommand{\Jt}[1][]{\mathbf{J}_{\mathrm{t}_{#1}}}
\newcommand{\JuActive}{\tilde{\mathbf{J}}_\mathrm{o}}
\newcommand{\JaActive}{\tilde{\mathbf{J}}_\mathrm{a}}
\newcommand{\R}[1][]{\mathbb{R}^{#1}}
\newcommand{\nU}{{n_{q_\mathrm{o}}}}
\newcommand{\nA}{n_{q_\mathrm{a}}}
\newcommand{\nC}{n_\mathrm{c}}
\newcommand{\nCG}{n_\mathrm{cg}}
\newcommand{\RbarU}[1]{\bar{\mathcal{R}}^\mathrm{u}_{#1}}
\newcommand{\norm}[1]{\left\lVert{#1}\right\rVert}
\newcommand{\SigmaInverse}{\hat{\mathbf{\Sigma}}^{-1}}
\newcommand{\distanceU}{\hat{d}^\mathrm{u}}
\newcommand{\qNominalU}{\bar{q}^\mathrm{o}}
\newcommand{\qNominalA}{\bar{q}^\mathrm{a}}
\newcommand{\qNominalACmd}{\bar{q}^\mathrm{a}_\mathrm{cmd}}
\newcommand{\muHatU}{\hat{\mu}^\mathrm{u}}
\newcommand{\xgoal}{x_\text{goal}}
\newcommand{\qugoal}{q^\mathrm{o}_g}

\newcommand{\minimize}{\mathrm{min}.}
\newcommand{\DfDx}[2]{\frac{\partial {#1}}{\partial {#2}}}
\newcommand{\DfDxLine}[2]{\partial {#1} / \partial {#2}}
\newcommand{\A}{\mathbf{A}}
\newcommand{\B}{\mathbf{B}}
\newcommand{\I}{\mathbf{I}}
\newcommand{\code}[1]{{\fontfamily{cmss}\selectfont {#1}}}

\newcommand{\Nearest}{\mathtt{Nearest}}
\newcommand{\Extend}{\mathtt{Extend}}
\newcommand{\ContactSample}{\mathtt{ContactSample}}
\newcommand{\Rue}{\mathcal{R}^\mathrm{u}_\varepsilon}

\newcommand{\russ}[1]{\textcolor{teal}{russ: {#1}}}
\newcommand{\pang}[1]{\textcolor{red}{pang: {#1}}}
\newcommand{\terry}[1]{\textcolor{blue}{terry: {#1}}}
\newcommand{\tong}[1]{\textcolor{purple}{tong: {#1}}}

\begin{abstract}
What is a good local description of contact dynamics for contact-rich manipulation, and where can we trust this local description? While many approaches often rely on the Taylor approximation of dynamics with an ellipsoidal trust region, we argue that such approaches are fundamentally inconsistent with the unilateral nature of contact. As a remedy, we present the Contact Trust Region (CTR), which captures the unilateral nature of contact while remaining efficient for computation. 
With CTR, we first develop a Model-Predictive Control (MPC) algorithm capable of synthesizing local contact-rich plans.
Then, we extend this capability to plan globally by stitching together local MPC plans, enabling efficient and dexterous contact-rich manipulation.
To verify the performance of our method, we perform comprehensive evaluations, both in high-fidelity simulation and on hardware, on two contact-rich systems: a planar \code{IiwaBimanual} system and a 3D \code{AllegroHand} system. 
On both systems, our method offers a significantly lower-compute alternative to existing RL-based approaches to contact-rich manipulation.
In particular, our Allegro in-hand manipulation policy, in the form of a roadmap, takes fewer than 10 minutes to build offline on a standard laptop \emph{using just its CPU}, with online inference taking just a few seconds.
Experiment data, video and code are available at \href{ctr.theaiinstitute.com}{\color{magenta} \code{ctr.theaiinstitute.com}}.

\end{abstract}
\maketitle

\section{Introduction}
Robots today rarely leverage their embodiment to the fullest due to the limitations of our computational algorithms: robot arms only establish contact with the end-effectors and only perform collision-free motion planning, and robot hands often only establish contact with the fingertips instead of leveraging the entire surface of the hand. This stands in stark contrast to humans, as we are able to utilize every part of our body to strategically establish contact with the environment. In order to address this gap, dexterous contact-rich manipulation, where a robot must autonomously decide where to establish contact without restricting possible contacts, remains an important problem for us to solve.

At the heart of many iterative algorithms for manipulation lies the question: what is a good local description of contact mechanics for contact-rich manipulation that (i) faithfully captures local behavior, and (ii) is sufficiently simple for efficient and scalable computation? 
Many classical works have considered how contact forces, subject to friction cone constraints, can locally affect changes in manipuland configurations via the contact Jacobian \citep{craig, mls, mason-salisbury}. 
These mechanics have been used heavily as local representations of contact for planning and control in subsequent works. For instance, classical grasping analysis and synthesis relies on local wrench-space arguments \cite{robotics-handbook-grasping, han-grasp-analysis}. 
More recently, many contact-implicit trajectory optimization (CITO) formulations have considered both friction cone and the unilateral nature of contact via complementarity constraints \cite{yanran, posa, wensing, hank2025cito}.
When iteratively solving the resulting optimization problems, the constraints are linearized, revealing similar contact Jacobians as their classical counterparts. 

At surface level, these approaches seem different from those that use a differentiable simulator \cite{diffsim, pangsimulator, gqdp, dojo, brax, warp, drake} to build Taylor approximations of the dynamics, which replace the explicit consideration of friction cone, non-penetration and complementarity constraints with a simple linear equality constraint.
Previous works have utilized this locally-linear first-order Taylor approximation for the purposes of trajectory optimization and control \cite{yuval, optopt, bundled, gqdp, kurtzilqr, kong}.

\begin{figure}[t]
\centering\includegraphics[width = 0.48\textwidth]{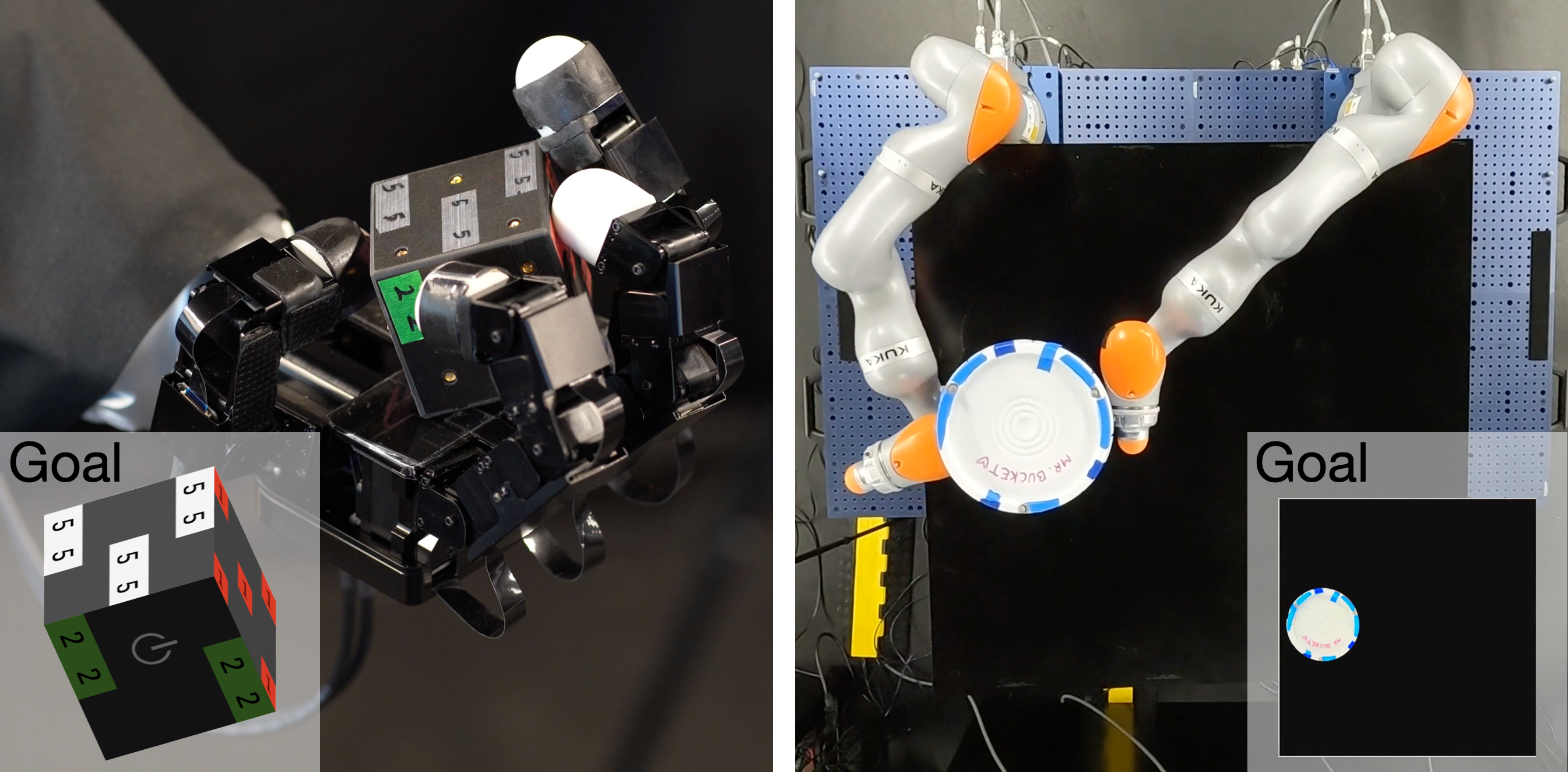}
\caption{Hardware experiments illustrating the utility of our proposed method in contact-rich manipulation. Left: Dexterous in-hand manipulation with the Allegro hand moving a cube. Right: Whole-body manipulation with bimanual iiwas moving a bucket.} 
\label{fig:banner}
\vskip -0.2 true in
\end{figure}

The apparent discrepancy between these two approaches raises the question of how they are related. 
In fact, analytically deriving the gradients from differentiable simulators shows that the Taylor expansions they produce and the complementarity constraints ubiquitous in contact-implicit trajectory optimization are fundamentally connected \cite[Example 5]{gqdp}:
the Taylor expansions are obtained by applying the Implicit Function Theorem to the force-balance and complementarity conditions, which form a \emph{subset} of the contact dynamics constraints typically included in CITO formulations \cite{fiacco1983introduction, giorgi2018tutorial, le2023single}.

However, this connection hides a deeper question: \emph{where can we trust the Taylor expansions}? In optimization, this region is known as the \emph{trust region} \cite{trust-region}.
Intuitively, the trust region describes where the local model closely approximates the original function, making it reliable for local improvements. As the quality of Taylor approximations typically degrades as we move further from the nominal point, previous works have often employed an \emph{ellipsoidal trust region} (ETR) \cite{trust-region, nocedal-wright, trust-region-generalization}.

Our first contribution (\Cref{sec:local-approximation}) is to reveal the inconsistency between a standard, \emph{symmetric} ETR and the \emph{unilateral} nature of contact. We address this by proposing the Contact Trust Region (CTR), which consists of a collection of \emph{convex} constraints that correct this flaw. The key insight is that while the Implicit Function Theorem uses a subset of contact constraints to derive the Taylor expansion, the remaining feasibility constraints (e.g., friction) are merely ignored. The CTR re-incorporates these crucial, omitted constraints, augmenting the Taylor expansion to correctly characterize the local valid region. Not only does this connect the Taylor expansion to classical concepts like the wrench set, but it can also be readily derived from standard components of existing differentiable simulators. Furthermore, despite its convexity, we show that CTR is locally equivalent to the \emph{full} set of smoothed contact dynamics constraint commonly used in CITO, allowing us to circumvent the numerically challenging complementarity constraints \cite{schwartz} when modeling contact.

To build intuition, consider a simple example of a spherical robot pushing a box across a table (see \Cref{fig:1d-pushing-schematic} for a illustration of this system). 
At a state where the robot is touching the box, an ETR, due to its symmetric nature, would locally suggest a ``pulling'' action is as plausible as a ``pushing'' action.
However, pulling on the object is physically impossible due to the nature of contact forces.
The core idea of CTR is to build the physical constraint of contact directly into the trust region representation, pruning away physically infeasible actions like pulling. 

In this work, we develop CTR based on the quasidynamic differentiable simulator proposed in \citet{gqdp}, called the Convex Quasidynamic Differentiable Contact (CQDC) contact model.
The CQDC model, recapped in \Cref{sec:cqdc}, is very representative of approaches taken by modern differentiable simulators through contact \cite{gqdp, dojo, diffsimzhong, werling2021fast}, which makes extending CTR to full second-order dynamics conceptually straightforward.
The CQDC model treats simulation of contact as a convex optimization problem whose primal solution becomes the next state $x_+$, and gives gradients by performing sensitivity analysis in convex optimization. 

However, while a bulk of previous approaches \cite{bundled, gqdp, optopt, kurtzilqr} have only focused on building a Taylor approximation of the \emph{configurations} based on sensitivity gradients, our CTR also builds a linear model over the \emph{contact forces} by utilizing sensitivity analysis to obtain gradients with respect to dual variables. A linear model over the contact forces can further qualify the ellipsoidal trust region by imposing friction cone constraints which are often present in works that directly encode contact complementarity constraints.

The CTR and its connection to convex formulations of contact dynamics also allows us to seamlessly integrate the advances in contact smoothing for planning and control \cite{dojo, bundled, gqdp, posa, aykut}. Without contact smoothing, many classical methods as well as local trajectory optimizers are limited to fixed contact-mode sequences. Thus, their ability to describe contact mechanics locally has been limited to the relationship between possible contact forces and changes in state \emph{within} a contact mode. The myopicness of this local model is inconsistent with the \emph{non-smooth} nature of contact, which also spells trouble for gradient-based Taylor approximations of contact dynamics. 

While complementarity-based representations \cite{alp-lcs} and mixed-integer formulations \cite{tobia-recovery, marcucci2019mixed, marcucci2024shortest, graesdal2024towards} attempt to alleviate this limitation by considering contact mode changes globally, optimizing through these models often require searching through all possible contact-mode sequences. Given the near-exponential scalability of contact modes \cite{huang-mode-enumeration}, these methods have struggled to tackle problems that are rich in contact. Thus, many successful algorithms go through an approximate representation in which complementarity is omitted in practice \cite{alp-admm, mujoco-convex-invertible, jin-complementarity-free, kurtz2023inverse}. 

To alleviate these limitations, smoothing-based methods relax contact dynamics by reasoning about the average result of all the contact modes around a nominal state given some distribution of state and action disturbances. Recent works \cite{bundled,diffsim} have attributed the success and scalability of Reinforcement Learning (RL) to this mechanism. Surprisingly, however, \citet{gqdp, shenao-abs, dojo} show that for contact dynamics simulated by convex optimization, this process can be done without Monte-Carlo by using \emph{barrier smoothing} which performs interior-point relaxation of the original contact dynamics and produces force from a distance. However, despite occasional successes, naively applying classical control methods such as LQR on smoothed linearization of contact dynamics often leads to catastrophic failures \cite{shirai2025linear}.

CTR advances the discussion on smoothing by indicating where the smoothed dynamics remain reliable. As our approach draws directly on convex optimization concepts, particularly the sensitivity analysis of primal and dual solutions, we can enjoy the same smoothing benefits by applying an interior-point relaxation that perturbs the original problem’s Karush-Kuhn-Tucker (KKT) optimality conditions.

As our second contribution, we present a highly efficient, contact-implicit Model Predictive Control (MPC) method \cite{simon-fastmpc, alp-admm} in \Cref{sec:local-planning-control}. Specialized for contact-rich manipulation, it can be viewed as a natural extension of iterative LQR \cite{li2004iterative}: we leverage Taylor expansions from CQDC dynamics as linear dynamics constraints, and we include the CTR as additional constraints to capture local contact dynamics. By formulating the CTR as a convex set—specifically, an intersection of multiple second-order cone constraints—each iteration of the proposed MPC remains a convex optimization problem.

We show the efficacy of the proposed CTR and MPC method on two representative contact-rich manipulation problems in \Cref{fig:banner}: (i) whole-body manipulation on the bimanual iiwa and cylinder system, and (ii) in-hand reorientation on the Allegro hand and cube system. For both systems, we aim to answer the following two questions: (i) is the MPC method successful in planning for goals under quasidynamic equations of motion (\Cref{sec:local-planning-results}), and (ii) can the MPC method successfully stabilize quasidynamic plans under true second-order dynamics (\Cref{sec:stabilization-results})? Through $2000$ trajectory runs in simulation and $100$ runs on hardware, our results delve into the efficacy and the limitations of our approach, as well as the benefits of CTR over the ellipsoidal one in the context of planning and control.

While our MPC is highly effective in local planning, the goals it can achieve are fundamentally limited by local reachability. For further goals that require non-trivial exploration, its capacity is limited in the absence of a good initial guess. Furthermore, when the MPC encounters such goals, its performance is relatively hard to characterize. 


As our final contribution, we address global planning by chaining local trajectories discovered by MPC. To do this, we follow a roadmap \cite{kavraki-prm} approach where we seed each node with a stable object configuration. Then, we connect these nodes by first finding good actuator configurations for local MPC to start from using CTR (\Cref{sec:actuator-placement}), then rolling out the MPC plan. To demonstrate the constructed roadmap's robustness, we perform on the Allegro hand system $150$ consecutive edge traversals before the hardware shuts down due to overheating. For new goals that are not in the roadmap, we simply find the closest node to the goal, then connect them by the same process of finding a good initial actuator configuration and then rolling out MPC.

Our proposed global planning method (\Cref{sec:roadmap}) enables efficient dexterous re-orientation on the Allegro hand, and has several advantages compared to existing approaches. Compared to single-query algorithms \cite{xianyi, hidex, gqdp, khandate}, most of the computation time for our algorithm happens offline in the roadmap-building stage, enabling fast online inference. Moreover, the offline roadmap construction only takes minutes on a standard laptop, which is orders-of-magnitude less than approaches based on deep RL\cite{openai-hand-demo, taojiepulkit, nvidia2022dextreme}. 
%

Through our three contributions of (i) the contact trust region that approximates contact mechanics efficiently (\Cref{sec:cqdc},\labelcref{sec:local-approximation}), (ii) a highly effective gradient-based MPC controller specialized to local contact-rich manipulation (\Cref{sec:local-planning-control}-\labelcref{sec:stabilization-results}), and (iii) a global planner that stitches together local trajectories (\Cref{sec:actuator-placement},\labelcref{sec:roadmap}), 
our method succeeds in dramatically reducing computation time compared to state-of-the-art methods in RL \cite{openai-hand-demo, taojiepulkit, nvidia2022dextreme} or sampling-based MPC methods \cite{mjpc, li2024_drop} without the need for parallelization. We hope digging deep into the structure of contact mechanics can bring efficiency to the current state-of-the-art approaches for whole-body manipulation and dexterous in-hand reorientation.

\begin{table}[h]
\centering
\small
\setlength{\tabcolsep}{3pt}
\caption{List of Symbols}
\begin{tabular}{lcl}
\toprule
{\footnotesize Symbol} & {\footnotesize Shape} & {\footnotesize Meaning} \\
\midrule
$n_q$ & $\mathbb{N}$ & {\footnotesize number of positions in a system} \\
$n_{q^\mathrm{a}}$ & $\mathbb{N}$ & {\footnotesize  number of actuated positions }\\
$n_{q^\mathrm{o}}$ & $\mathbb{N}$ & {\footnotesize number of object positions } \\
$q$ & $\R[n_q]$ & {\footnotesize system configuration vector,  $q \coloneqq (\qa, \qu)$} \\
$\qa$ & $\R[n_{q_\mathrm{a}}]$ & {\footnotesize actuated (robot) configuration vector} \\
$\qu$ & $\R[n_{q_\mathrm{o}}]$ & {\footnotesize object configuration vector} \\
$q_+^{\mathrm{o}/\mathrm{a}}$ & $\R[n_{q_\mathrm{o}} /n_{q_\mathrm{a}}]$ & {\footnotesize object/robot configuration at the next step} \\
$u$ & $\R[n_{q_\mathrm{a}}]$ & {\footnotesize control input} \\
$\lambda_i$ & $\R[3]$ & {\footnotesize contact force at contact $i$, $\lambda_i \coloneqq (\lambda_{\mathrm{n}_i}, \lambda_{\mathrm{t}_i})$} \\
$\lambda_{\mathrm{n}_i}$ & $\R$ & {\footnotesize normal contact force at contact $i$} \\
$\lambda_{\mathrm{t}_i}$ & $\R[2]$ & {\footnotesize tangential contact force at contact $i$} \\
$\phi_i$ & $\R$ & {\footnotesize signed distance at contact $i$, \eqref{eq:friction_constraints:primal_feasibility}} \\
$\tau^{\mathrm{o}/\mathrm{a}}$ & $\R[n_{q_\mathrm{o}} /n_{q_\mathrm{a}}]$ & {\footnotesize non-contact torque acting on object/robot} \\
$h$ & $\R_+$ & {\footnotesize step size in seconds}\\
$\mathbf{K}_\mathrm{a}$ & $\mathbf{S}^{n_{q_\mathrm{a}}}_{++}$ & {\footnotesize robot stiffness matrix} \\
$\mathbf{M}_\mathrm{o}$ & $\mathbf{S}^{n_{q_\mathrm{o}}}_{++}$ & {\footnotesize object mass matrix} \\
$\mathbf{J}_i$ & $\R[3 \times n_{q}]$ & {\footnotesize contact Jacobian at contact $i$, \eqref{eq:contact_jacobian_i}} \\
$\mathcal{K}_i$  & $\R[3]$ & {\footnotesize feasible velocity cone at contact $i$, \eqref{eq:contact_cones:v}} \\
$\mathcal{K}_i^\star$  & $\R[3]$ & {\footnotesize friction cone at contact $i$, \eqref{eq:contact_cones:lambda} }\\ 
$\mu_i$ & $\R_+$ &  {\footnotesize friction coefficient at contact $i$ } \\
$\nu_i$ & $\R[3]$ & {\footnotesize \eqref{eq:friction_constraints:primal_feasibility} }\\
$\mathbf{P}$ & $\mathbf{S}^{n_{q}}_{++}$ & {\footnotesize quadratic cost in contact dynamics, \eqref{eq:contact_dynamics_quadratic_cost}} \\
$b$ & $n_{q}$ & {\footnotesize linear cost in contact dynamics, \eqref{eq:contact_dynamics_linear_cost}} \\
$c_i$ & $\R[3]$ & {\footnotesize \eqref{eq:q_dynamic_socp:c_definition}} \\
$\kappa$ & $\mathbb{R}_+$ & {\footnotesize contact dynamics smoothing parameter} \\
\multirow{2}{*}{$f_{(\kappa)}(\cdot,\cdot)$} & $\R[n_q] \times \R[n_{q_\mathrm{a}}]$  & \multirow{2}{*}{\footnotesize ($\kappa$-smoothed) contact dynamics, (\eqref{eq:q_dynamic_f_kappa})\eqref{eq:q_dynamic_f} }\\
& $\rightarrow \R[n_q]$ &  \\
\midrule
$\mathbf{A}_{(\kappa)}$ & $\R[n_q \times n_q]$ & {\footnotesize $\partial f_{(\kappa)} / \partial q $} \\
$\mathbf{B}_{(\kappa)}$ & $\R[n_q \times n_{q_\mathrm{a}}]$ & {\footnotesize $\partial f_{(\kappa)} / \partial u $} \\
$\mathbf{C}_{(\kappa), i}$ & $\R[n_q \times n_q]$ & {\footnotesize $\partial \lambda_{(\kappa),i} / \partial q $} \\
$\mathbf{D}_{(\kappa), i}$ & $\R[n_q \times n_{q_\mathrm{a}}]$ & {\footnotesize $\partial \lambda_{(\kappa), i} / \partial u $} \\
$\bar{\cdot}$ & N.A. & {\footnotesize nominal quantities, e.g. $\bar{q}$, see \eqref{eq:non_smooth_taylor_expansions}}\\
$\delta{\cdot}$ & N.A. & {\footnotesize deviation from nominal quantities, e.g. $\delta{q}$} \\
$\hat{\cdot}$ & N.A. & {\footnotesize Taylor expansions, e.g. $\hat{q}_+$}\\
$\mathbf{\Sigma}$ & $\mathbf{S}_{++}^{(n_q + n_{q_\mathrm{a}})}$ & {\footnotesize trust region radius} \\
$\mathcal{E}_\mathbf{\Sigma}$ & $\R[n_q+ n_{q_\mathrm{a}}]$ & {\footnotesize ETR with radius $\mathbf{\Sigma}$, \eqref{eq:ellpsoidal_trust_region}}\\
$\mathcal{S}_{\mathbf{\Sigma},\kappa}$ & $\R[n_q+ n_{q_\mathrm{a}}]$ & {\footnotesize CTR with radius $\mathbf{\Sigma}$ and smoothing $\kappa$, \eqref{eq:full_contact_trust_region}}\\
$\mathcal{S}_{\mathbf{\Sigma},\kappa}^\mathcal{A}$ & $\R[n_{q_\mathrm{a}}]$ & {\footnotesize Action-only CTR, \eqref{eq:action-contact-trust-region}}\\
$\tilde{\mathcal{S}}_{\mathbf{\Sigma},\kappa}$ & $\R[n_q+ n_{q_\mathrm{a}}]$ & {\footnotesize Relaxed CTR, \eqref{eq:relaxed_contact_trust_region}}\\
$\tilde{\mathcal{S}}_{\mathbf{\Sigma},\kappa}^\mathcal{A}$ & $\R[n_{q_\mathrm{a}}]$ & {\footnotesize Relaxed Action-only CTR, \eqref{eq:relaxed-action-contact-trust-region}}  \\
$\mathcal{M}_\mathbf{\Sigma,\kappa}^{(\mathcal{A})}$ & $\R[n_q]$ & {\footnotesize (action-only) motion set, (\eqref{eq:action-only-motion_set})\eqref{eq:motion_set}} \\
$\mathcal{C}^\mathcal{A}_{\mathbf{\Sigma},\kappa,i}$ & $\R[3]$ & {\footnotesize contact force set at contact $i$, \eqref{eq:contact_force_set}}\\
$\mathcal{JC}^\mathcal{A}_{\mathbf{\Sigma},\kappa,i}$ & $\R[n_{q_\mathrm{o}}]$ & {\footnotesize generalized friction cone at contact $i$, \eqref{eq:generalized_friction_cone}}\\
$\mathcal{W}^\mathcal{A}_{\mathbf{\Sigma}, \kappa}$ & $\R[n_{q_\mathrm{o}}]$ & {\footnotesize wrench set, \eqref{eq:wrench_set}} \\
$\mathcal{M}^{\mathcal{A},\mathrm{o}}_{\mathbf{\Sigma},\kappa}$ & $\R[n_{q_\mathrm{o}}]$ & {\footnotesize motion set projected to the space of $\qu$, \eqref{eq:motion-sets-classical-derivation}}\\
\midrule
$T$ & $\mathbb{N}$ & {\footnotesize CtrTrajOpt planning horizon in \Cref{alg:trajopt_ctr}} \\
$n_\mathrm{max}$ & $\mathbb{N}$ & {\footnotesize CtrTrajOpt iteration limit in \Cref{alg:trajopt_ctr}}\\
$H$ & $\mathbb{N}$ & {\footnotesize MPC rollout horizon in \Cref{alg:mpc}} \\
$r$ & $\R_+$ & {\footnotesize scalar trust region radius in \Cref{sec:trust_region_radius}} \\
$\mathbf{Q}$ & $\mathbf{S}_+^{n_q}$ & {\footnotesize state tracking quadratic in \eqref{eq:linear_trajopt}} \\
$\mathbf{R}$ & $\mathbf{S}_+^{n_{q_\mathrm{a}}}$ & {\footnotesize control action quadratic cost in \eqref{eq:linear_trajopt}} \\
$N$ & $\mathbb{N}$ & {\footnotesize number of MPC re-plans in \Cref{alg:mpc_real_dynamics}} \\
\bottomrule
\end{tabular}
\vskip -0.2 true in
\end{table}

\section{Preliminaries: Contact Dynamics as Convex Optimization}\label{sec:cqdc}

Many state-of-art methods for simulating contact solve a constrained optimization problem at every step \cite{stewarttrinkle,anitescu,mujoco,pangsimulator,dojo,sap,gqdp}. In this paper, we use the quasi-dynamic formulation of contact dynamics presented in \cite{gqdp}, where we assume the system is highly damped by frictional forces. 

\subsection{Setup \& Notation} We assume the state consists of configurations, which we denote by $q\in\mathbb{R}^{n_q}$, and omit velocities due to the quasi-dynamic assumption. These configurations are divided into actuated configurations $\qa\in\mathbb{R}^{\nA}$ that belong to the robot, and object configurations $\qu\in\mathbb{R}^{\nU}$ which are unactuated. Furthermore, the actions are represented as a position command $u\in\mathbb{R}^{\nA}$ to a stiffness controller with a diagonal gain matrix $\Ka\in\mathbb{R}^{\nA \times \nA}$.

\subsection{Equations of Motion}
\label{sec:cqdc:equations_of_motion}
At a configuration $q$, the equations of motion are framed as a search for next configurations $q_\texttt{+}=(\qaplus,\quplus)$ such that the following force balance equations holds,
\begin{subequations}\label{eq:balance}
\begin{align}
\Ka (\qaplus - u) & = \tauA + \sum^{n_c}_{i=1}(\Ja[i](q))^\top \lambda_i \label{eq:actuatedbalance}\\
\epsilon\Mu(q) \frac{\quplus - \qu}{\dt^2} & = \tauU + \sum^{n_c}_{i=1}(\Ju[i](q))^\top \lambda_i        \label{eq:unactuatedbalance}
\end{align} 
\end{subequations}
where $h\in\mathbb{R}_{>0}$ is the timestep, $\tauA\in\mathbb{R}^{\nA}$ and $\tauU\in\mathbb{R}^{\nU}$ are external torques acting on actuated and unactuated bodies respectively (e.g. gravity), and $\epsilon\in\mathbb{R}_{\geq 0}$ is a regularization constant. 

In addition, we define an index set $\mathcal{I}_c$ over pairs of contact geometries, which we can obtain from a collision detection algorithm. For each contact pair $i\in\mathcal{I}_c$, $\lambda_i\in\mathbb{R}^3$ corresponds to the average contact force over the timestep $h$ at contact $i$, so that $\dt \lambda_i$ is the net impulse generated at contact $i$ during the timestep. By convention, we denote $\lambda_i = (\lambda_{\mathrm{n}_i},\lambda_{\mathrm{t}_i}$) where $\lambda_{\mathrm{n}_i}\in\mathbb{R}$ corresponds to contact forces along the normal direction of the contact frames, and $\lambda_{\mathrm{t}_i}\in\mathbb{R}^2$ corresponds to the frictional forces along the tangential planes.

We also define the contact Jacobians $\J_i\in\mathbb{R}^{3\times n_q}$ as local mappings from the configurations $q$ to the contact frame. The top row corresponds to the normal direction of the contact frame, where as the bottom two rows orthogonally span the tangential plane. The Jacobians can further be decomposed into mappings into actuated and unactuated DOFs,
\vskip -0.1 true in
\begin{equation}
\label{eq:contact_jacobian_i}
\J_i \coloneqq [\Ju[i], \Ja[i]] \coloneqq 
\begin{bmatrix}
\Jn[i] \\
\Jt[i]
\end{bmatrix}
\in \R[3 \times n_q].
\end{equation}

With this notational setup, \eqref{eq:actuatedbalance} describes that the configuration of the actuated bodies at the next step, $\qaplus$, will be decided such that the sum of the forces must be balanced by the force experienced by the stiffness controller at the next step. Similarly, \eqref{eq:unactuatedbalance} states that the unbalanced forces result in relative displacements of the object. We can write \eqref{eq:balance} more succinctly as 
\begin{subequations}
\label{eq:balance_single_equation}
\begin{align}
&\mathbf{P}(q)\qplus + b(q, u) - \sum_{i=1}^{n_c}\mathbf{J}_i(q)^\top \lambda_i = 0, \; \text{where}\\
&\mathbf{P}(q) \coloneqq \begin{bmatrix} \epsilon\Mu(q)/\dt^2 & 0 \\ 0 & \Ka \end{bmatrix}, \label{eq:contact_dynamics_quadratic_cost}\\
&b(q, u) \coloneqq - \begin{bmatrix} \epsilon\Mu(q)\qu/h^2+\tauU \\ \Ka u  + \tauA \end{bmatrix}. \label{eq:contact_dynamics_linear_cost}
\end{align}
\end{subequations}

\subsection{Contact Constraints}
\label{sec:cqdc:contact_constraints}
Additional constraints further qualify the relationship between the next configurations $\qplus$ and contact forces $\lambda_i$:
i) \emph{non-penetration}: next configuration is non-penetrating;
ii) \emph{friction cone}: $\lambda_i$ must be inside the friction cone;
iii) \emph{complementarity}: contact cannot be applied from a distance, and the direction of friction opposes the movement (i.e. maximum dissipation).

To impose these constraints, we first define the feasible velocity cone $\mathcal{K}_i$, and then introduce its dual cone $\mathcal{K}_i^\star$, which corresponds to the friction cone:
\begin{subequations}
\label{eq:contact_cones}
\begin{align}
\mathcal{K}_i &\coloneqq \left\{\nu_i = (\nu_{\mathrm{n}_i}, \nu_{\mathrm{t}_i}) \in \R[3] | \nu_{\mathrm{n}_i} \geq \mu_i \sqrt{\nu_{\mathrm{t}_i}^\top \nu_{\mathrm{t}_i}} \right\},\label{eq:contact_cones:v} \\
\mathcal{K}_i^\star &\coloneqq \left\{\lambda_i = (\lambda_{\mathrm{n}_i}, \lambda_{\mathrm{t}_i}) \in \R[3] | \mu_i \lambda_{\mathrm{n}_i} \geq \sqrt{\lambda_{\mathrm{t}_i}^\top \lambda_{\mathrm{t}_i}} \right\} \label{eq:contact_cones:lambda}.
\end{align}
\end{subequations}

Anistescu's relaxation \cite{anitescu} allows us to write down these contact constraints in a convex manner by introducing a mild non-physical artifact, where sliding in the tangential plane results in separation in the normal direction,
\begin{subequations}
\label{eq:friction_constraints}
\begin{align}
\nu_i \coloneqq
\J_i(q)(\qplus - q)
+
[\phi_i, 0, 0]^\top
&\in \mathcal{K}_i,\label{eq:friction_constraints:primal_feasibility}\\
\lambda_i &\in \mathcal{K}_i^\star, \label{eq:friction_constraints:dual_feasibility}\\
\nu_i^\top \lambda_i &= 0,\label{eq:friction_constraints:complementary_slackness}
\end{align}
\end{subequations}
where $\phi_i$ is the current signed distance for contact pair $i$.

\subsection{Contact Dynamics as Convex Optimization}
We can frame the equations of motion with contact as a search for the next configurations $\qplus$ that satisfy the constraints introduced in \Cref{sec:cqdc:equations_of_motion} and \Cref{sec:cqdc:contact_constraints},
\begin{subequations}\label{eq:q_dynamics_kkt}
\begin{align}
    \mathrm{find} \quad & q_+ \\
    \text{s.t.} \quad & \eqref{eq:balance_single_equation},\eqref{eq:friction_constraints:primal_feasibility},\eqref{eq:friction_constraints:dual_feasibility},\eqref{eq:friction_constraints:complementary_slackness}.
\end{align}
\end{subequations}

Remarkably, this is equivalent to the KKT conditions of the following Second-Order Cone Program (SOCP), 
\begin{subequations}
\label{eq:q_dynamic_socp}
\begin{align}
&\underset{\qplus}{\minimize} \; \frac{1}{2}\qplus^\top \mathbf{P}(q)\qplus + b(q, u)^\top \qplus, \; \text{subject to} \label{eq:q_dynamic_socp:cost}\\
&\qquad \J_i(q) \qplus
+
c_i(q)
\in \mathcal{K}_i, \; \forall i \in \mathcal{I}_c, \; \text{where} \label{eq:q_dynamic_socp:constraint}\\
& c_i(q) \coloneqq [\phi_i(q), 0, 0]^\top  - \mathbf{J}_i(q) q \label{eq:q_dynamic_socp:c_definition}.
\end{align}
\end{subequations}
where $\mathcal{I}_c$ is the index set over potential contact pairs. Note that stationarity \eqref{eq:balance_single_equation}, primal feasibility \eqref{eq:friction_constraints:primal_feasibility}, dual feasibility \eqref{eq:friction_constraints:dual_feasibility}, and complementary slackness \eqref{eq:friction_constraints:complementary_slackness} are the KKT conditions of this SOCP. For complementary slackness of SOCPs, we note that we do not impose element-wise slackness: the primal and dual vectors can both be non-zero as long as they are orthogonal \cite{boyd2004convex, gqdp}.

In dynamical systems and control theory, it is conventional to represent the dynamic system as a map $f$ that takes a state and input then maps it to the next state. As our state only consists of configurations $q$ for quasidynamic systems, we use the following notation for contact dynamics,
\begin{subequations}
\label{eq:q_dynamic_f}
\begin{align}
    q_+ = &  f(q,u) \\
       =  & \text{argmin}_{q_+}\;\; \frac{1}{2}q_+^\top \mathbf{P}(q)q_+ + b(q, u)^\top q_+, \nonumber \\
         &  \text{subject to}\;\;\mathbf{J}_i(q) q_+ + c_i(q) \in \mathcal{K}_i, \forall i \in \mathcal{I}_c.
\end{align}
Due to the quasidynamic assumption, velocity is omitted from the state, which only consists of the system configuration $q$.
We sometimes omit the explicit dependence of $\mathbf{P}, b,\mathbf{J}_i$, and $c_i$ on $(q, u)$ when it is clear from context.

\end{subequations}

\subsection{Smoothing of Contact Dynamics}
Despite being continuous and piecewise smooth, the contact dynamics map \eqref{eq:q_dynamic_f} is well known to have discontinuous gradients \cite{posa}, complicating the application of planning and control methods that rely on local smooth approximations. 
More specifically, gradient-based methods can be destabilized by fast-changing gradients induced by quickly switching between smooth pieces of the contact dynamics (also called \emph{contact modes}) \cite{bundled}.
In addition, the gradient landscape can have flat regions if none of the contact modes are active, leaving optimizers stranded without informative directions of improvement \cite{diffsim}.
As such, various dynamic smoothing methods have been proposed to relax this numerical difficulty \cite{dojo,gqdp,bundled,posa,aykut, quentin}.


In \citet{dojo, gqdp}, a method of smoothing out optimization-based dynamics was introduced based on the log-barrier (interior-point) relaxation of \eqref{eq:q_dynamic_socp}, which solves for the KKT equations where the complementarity condition is perturbed by a positive constant $\kappa$
\footnote{The constant 2 on the RHS of \eqref{eq:perturbed-kkt} is due to the squares inside the $\log$ in \eqref{eq:q_dynamics_log}. For more details, see the degree of generalized logarithms \cite[\S 11.6]{boyd2004convex}.
},
\begin{equation}\label{eq:perturbed-kkt}
\lambda_{\kappa, i}^\top \nu_i = 2\kappa^{-1}.
\end{equation}
The resulting solution is equivalent to solving the log-barrier relaxation of $\eqref{eq:q_dynamic_socp}$,
\begin{equation}\label{eq:q_dynamics_log}
\min_{q_+}\;\frac{1}{2}q_+^\top \mathbf{P}q_+ + b^\top q_+ - \frac{1}{\kappa} \sum^{n_c}_{i=1}\log\left(\nu_{\mathrm{n}_i}^2/\mu_i^2-\|\nu_{\mathrm{t}_i}\|^2\right),
\end{equation}
where $\nu_i=\mathbf{J}_i q_+ + c_i$, and $\nu_i = (\nu_{\mathrm{n}_i}, \nu_{\mathrm{t}_i})$ with $\nu_{\mathrm{n}_i} \in \mathbb{R}$ and $\nu_{\mathrm{t}_i} \in \R[2]$. The resulting dynamics creates a force-field effect where bodies that are not in contact apply forces inversely proportional to their distance \eqref{eq:perturbed-kkt}. 

As the optimization \eqref{eq:q_dynamics_log} is an unconstrained convex program, its optimality conditions can be obtained by simply setting the gradient of the cost function w.r.t. $q_+$ to $0$:
\begin{equation}
\label{eq:unconstrained_staionarity}
\mathbf{P} q_+ + b  
- \sum_{i=1}^{\nC} \frac{2\kappa^{-1}}{\nu_{\mathrm{n}_i}^2/\mu_i^2 - \|\nu_{\mathrm{t}_i}\|^2} \mathbf{J}_{i}^\top
\begin{bmatrix}
\nu_{\mathrm{n}_i}/\mu_i^2 \\
-\nu_{\mathrm{t}_i} \\
\end{bmatrix}
= 0.
\end{equation}

As an unconstrained optimization problem, \eqref{eq:q_dynamics_log} technically does not have dual variables. However, if we consider solving the SOCP dynamics \eqref{eq:q_dynamic_socp} with the interior point method, \eqref{eq:q_dynamics_log} corresponds to one major iteration along the \emph{central path} \cite[\S11.6]{boyd2004convex}. Therefore, we can identify average contact forces with the \emph{dual feasible points} of \eqref{eq:q_dynamic_socp} along the central path:
\begin{equation}
\label{eq:dual_feasible_point}
\lambda_{\kappa,i} = \frac{2\kappa^{-1}}{\nu_{\mathrm{n}_i}^2 / \mu_i^2-\|\nu_{\mathrm{t}_i}\|^2}
\begin{bmatrix}
    \nu_{\mathrm{n}_i} /\mu_i^2 \\
    -\nu_{\mathrm{t}_i}
\end{bmatrix}.
\end{equation}
We note that this force (i) has a direction opposing the movement in the tangential plane, (ii) has a magnitude that satisfies \eqref{eq:perturbed-kkt}, and iii) lies in the friction cone. 
In addition, plugging \eqref{eq:dual_feasible_point} into \eqref{eq:unconstrained_staionarity} yields
\begin{equation}
\label{eq:unconstrained_staionarity_with_lambda}
\mathbf{P} q_+ + b -\sum_{i=1}^{n_c} \J_i^\top \lambda_{\kappa, i} = 0,
\end{equation}
which is equivalent to the force balance equation \eqref{eq:balance_single_equation}.

Similar to \eqref{eq:q_dynamic_f}, we denote the dynamics map over the relaxed contact dynamics throughout the manuscript as 
\begin{subequations}
\label{eq:q_dynamic_f_kappa}
\begin{align}
q_+ & = f_\kappa(q,u) \\
    & = \text{argmin}_{q_+} \Big(\frac{1}{2}q^\top_+ \mathbf{P} q_+ + b^\top q_+ \nonumber \\
    &\quad \quad \quad \quad \quad - \frac{1}{\kappa} \sum^{n_c}_{i=1}\log\left(\nu^2_{\mathrm{n}_i} / \mu_i^2 - \|\nu_{\mathrm{t}_i}\|^2\right)\Big).
\end{align}
\end{subequations}

\subsection{Sensitivity Analysis of Smoothed Contact Dynamics}
The dynamics map \eqref{eq:q_dynamic_f_kappa} can be interpreted as a \emph{solution map} from the change in the state and input pair $(q, u)$ to the optimal solutions of the optimization program \eqref{eq:q_dynamics_log}. As various first-order methods in numerical optimization and optimal control often utilize the gradients of the dynamics map $f$ with respect to $q$ and $u$, we detail the computation of these gradients in this section. 

The gradients can be computed through \emph{sensitivity analysis}, which states that as we locally perturb the problem parameters, the solution of \eqref{eq:q_dynamic_f_kappa} change in a way that preserves (up to first-order) the optimality conditions of \eqref{eq:q_dynamic_f_kappa}.
This is equivalent to stating that the derivative of the equality constraints with respect to $q$ and $u$ are also zero at optimality. For instance, rewriting the derivative of \eqref{eq:unconstrained_staionarity_with_lambda} and \eqref{eq:perturbed-kkt} in matrix form ($\nu_i$ and $c_i$ need to be expanded using their definitions in \eqref{eq:friction_constraints:primal_feasibility} and \eqref{eq:q_dynamic_socp:c_definition}, respectively) gives  
\begin{align}\label{eq:kkt-sensitivity}
    \begin{bmatrix}
        \mathbf{P} & \mathcal{H}_i\left[-\mathbf{J}_i^\top\right] \\
        \mathcal{V}_i\left[\lambda^\top_i\mathbf{J}_i\right] & \mathcal{D}_i\left[\left(\mathbf{J}_iq_+ + c_i\right)^\top\right]
    \end{bmatrix}
    \begin{bmatrix}
        \DfDx{q_+}{u} \\
        \mathcal{V}_i\left[\DfDx{\lambda_{\kappa,i}}{u} \right]
    \end{bmatrix}
    =  
    \begin{bmatrix}
    -\DfDx{b}{u} \\
        0
    \end{bmatrix}
\end{align}
where $\mathcal{H}_i,\mathcal{V}_i,\mathcal{D}_i$ stands for horizontal, vertical, and diagonal stacking of the terms from all contact pairs. Then, we can use the implicit function theorem to solve the system of equations and obtain the derivatives $\partial q_+/\partial u$ and $\partial\lambda_i/\partial u$ (more details in \Cref{proof:kkt-sensitivity}). The derivatives with respect to $q$ may be obtained in a similar fashion, but are more involved due to the $\partial\mathbf{J}_i/\partial q$ term (curvature).

\section{Local Approximation of Contact Dynamics}\label{sec:local-approximation}

Consider a simple, single-horizon optimization problem that involves the contact dynamics \eqref{eq:q_dynamic_socp} as a constraint,
\begin{subequations} \label{eq:inverse_dynamics}
\begin{align}
    \min_{q_+,u} \quad& \|q_\mathrm{goal} - q_+\|^2 \\
    \text{s.t.} \quad& q_+ = f(q, u). \label{eq:inverse_dynamics:dynamics_constraint}
\end{align}
\end{subequations}

A majority of first-order algorithms for optimization and optimal control rely on iteratively making a \emph{local} approximation of the problem that is more amenable for computation. For instance, gradient-descent makes a linear approximation, and Newton's method makes a quadratic one. However, we rarely handle the case where the constraint itself is the result of an optimization.
In this section, we answer the question: what is the right local approximation of the contact dynamics that (i) is computationally convenient, and (ii) captures the correct local behavior?

\subsection{Linear Model over Smoothed Primal and Dual Variables}
Previous works on smooth dynamical systems have often resorted to a first-order Taylor approximation of the dynamics around some nominal coordinates $(\bar{q},\bar{u})$, and approximate the dynamics with a linear model. This Taylor approximation can be compactly written as
\begin{subequations}
\label{eq:non_smooth_taylor_expansions}
\begin{align}
    \hat{q}_+ & = \mathbf{A}\delta q + \mathbf{B}\delta u + f(\bar{q},\bar{u}), \\
    \mathbf{A} & \coloneqq \partial f/\partial q (\bar{q},\bar{u}),\;\;\mathbf{B}\coloneqq \partial f/\partial u(\bar{q},\bar{u}), \\
    \delta q & \coloneqq q - \bar{q}, \;\; \delta u \coloneqq u - \bar{u}.
\end{align}
\end{subequations}

However, many previous works \cite{diffsim, dojo, gqdp, shirai2025linear, posa} have noted that due to the non-smooth nature of contact, it is beneficial to smooth the dynamics where a first-order Taylor approximation would be valid beyond the contact mode to which $(\bar{q}, \bar{u})$ belongs \cite{gqdp}. A first-order Taylor expansion under such a smoothed dynamics model can be written as 
\begin{subequations}
\begin{align}
    \hat{q}_+ & = \mathbf{A}_\kappa\delta q + \mathbf{B}_\kappa\delta u + f_\kappa(\bar{q},\bar{u}), \\
    \mathbf{A}_\kappa & \coloneqq \partial f_\kappa/\partial q (\bar{q},\bar{u}),\;\;\mathbf{B}_\kappa\coloneqq \partial f_\kappa/\partial u(\bar{q},\bar{u}),
\end{align}
\end{subequations}
where $\kappa$ is the barrier smoothing parameter introduced in \eqref{eq:perturbed-kkt}.

Still, as this linear model only gives a local approximation, we would naturally expect the quality of this approximation to degrade as we get further from the nominal coordinates $(\bar{q},\bar{u})$. 
In order to more rigorously characterize the validity of the local linear model, we need to analyze the behavior of $(q_+,\lambda_{+,i})$ as we vary $(\delta q, \delta u)$, where $\lambda_{+,i}$ are the linearized dual variables defined by:
\begin{subequations}
\begin{align}
    \hat{\lambda}_{+,i} & = \mathbf{C}_{\kappa,i}\delta q + \mathbf{D}_{\kappa,i}\delta u + \lambda_{\kappa,i}(\bar{q},\bar{u}), \\
    \mathbf{C}_{\kappa,i} & \coloneqq \partial \lambda_{\kappa,i}/\partial q (\bar{q},\bar{u}),\;\;\mathbf{D}_{\kappa,i}\coloneqq \partial \lambda_{\kappa,i}/\partial u(\bar{q},\bar{u}).
\end{align}
\end{subequations}

\begin{lemma}[\bfseries Taylor Approximation]\label{lemma:taylor-approximation}\normalfont
    Consider the joint linear model of the primal and dual variables,
    \begin{align}
        \begin{bmatrix} \hat{q}_+ \\ \mathcal{V}_i[\hat{\lambda}_{+,i}] \end{bmatrix} = \begin{bmatrix} \mathbf{A}_\kappa & \mathbf{B}_\kappa \\ \mathcal{V}_i[\mathbf{C}_{\kappa,i}] & \mathcal{V}_i[\mathbf{D}_{\kappa,i}] \end{bmatrix}\begin{bmatrix} \delta q \\ \delta u \end{bmatrix} + \begin{bmatrix} f_\kappa(\bar{q},\bar{u}) \\ \mathcal{V}_i[\lambda_{\kappa,i}(\bar{q},\bar{u})] \end{bmatrix}.
    \end{align}
    Then, this linear model satisfies \eqref{eq:unconstrained_staionarity_with_lambda} and \eqref{eq:perturbed-kkt}, the optimality conditions of the perturbed SOCP \eqref{eq:q_dynamics_log}, to first order:
    \begin{align}
        \begin{bmatrix}
        \hat{\mathbf{P}} \hat{q}_+ + \hat{b} - \sum^{n_c}_{i=1} \hat{\mathbf{J}}_i^\top \hat{\lambda}_{+,i} \\
        \mathcal{V}_i\left[(\hat{\mathbf{J}}_i \hat{q}_+ + \hat{c}_i)^\top \hat{\lambda}_{+,i} - 2\kappa^{-1}\right]
        \end{bmatrix} = \mathcal{O}\left((\delta q,\delta u)^2\right)
    \end{align}
    with $\hat{\mathbf{P}}\coloneqq \mathbf{P}(\bar{q},\bar{u}) + \frac{\partial \mathbf{P}}{\partial q}\delta q + \frac{\partial \mathbf{P}}{\partial u}\delta u$, and similarly for $\hat{b}$, $\hat{\mathbf{J}}$, and $\hat{c}_i$. 
\end{lemma}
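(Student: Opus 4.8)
The plan is to read the claim as the statement that the first-order Taylor expansion of the \emph{solution map} $(q,u)\mapsto(q_+,\{\lambda_{\kappa,i}\})$ satisfies the first-order Taylor expansion of the \emph{defining optimality conditions}. I would first collect the two conditions of the perturbed problem \eqref{eq:q_dynamics_log}---the force balance \eqref{eq:unconstrained_staionarity_with_lambda} and the perturbed complementarity \eqref{eq:perturbed-kkt}---into a single residual map
\begin{equation*}
G(q,u,q_+,\{\lambda_i\}) \coloneqq \begin{bmatrix} \mathbf{P}(q)q_+ + b(q,u) - \sum_{i=1}^{n_c}\mathbf{J}_i(q)^\top\lambda_i \\ \mathcal{V}_i\!\left[(\mathbf{J}_i(q)q_+ + c_i(q))^\top\lambda_i - 2\kappa^{-1}\right] \end{bmatrix},
\end{equation*}
so that the quantity the lemma bounds is exactly $G$ evaluated at the linear models $\hat q_+,\hat\lambda_{+,i}$, with every parameter-dependent coefficient $\mathbf{P},b,\mathbf{J}_i,c_i$ replaced by its own first-order expansion $\hat{\mathbf{P}},\hat b,\hat{\mathbf{J}}_i,\hat c_i$. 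The goal is then to expand this expression in powers of $(\delta q,\delta u)$ and show that the zeroth- and first-order terms vanish, leaving only $\mathcal{O}((\delta q,\delta u)^2)$.

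The zeroth-order term is $G$ evaluated at the nominal point $(\bar q,\bar u, f_\kappa(\bar q,\bar u),\{\lambda_{\kappa,i}(\bar q,\bar u)\})$; it is zero because the nominal solution satisfies the optimality conditions exactly by the definition of $f_\kappa$. For the first-order term, the key observation is that the sensitivity system \eqref{eq:kkt-sensitivity} is precisely the statement that the total derivative of $G$ along the true solution map vanishes, i.e. the explicit parameter derivatives of $G$ balance the derivatives transmitted through $q_+$ and $\lambda$. Since the linear models are built with exactly the gradients $\mathbf{A}_\kappa,\mathbf{B}_\kappa,\mathbf{C}_{\kappa,i},\mathbf{D}_{\kappa,i}$ that solve \eqref{eq:kkt-sensitivity}, the coefficient of $(\delta q,\delta u)$ in the substituted expression is, block by block, the sensitivity relation \eqref{eq:kkt-sensitivity} rearranged to have zero right-hand side, and hence it cancels by construction.

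All that remains is to confirm that every other term is genuinely quadratic. Here I would expand each product carefully: in the force-balance row, $\hat{\mathbf{P}}\hat q_+$ and $\hat{\mathbf{J}}_i^\top\hat\lambda_{+,i}$ are each products of two factors affine in $(\delta q,\delta u)$, so their contributions beyond first order are quadratic; in the complementarity row, $(\hat{\mathbf{J}}_i\hat q_+ + \hat c_i)^\top\hat\lambda_{+,i}$ is a product of up to three such affine factors, whose leftover terms are likewise of order two or higher. I would also note that replacing the true coefficients $\mathbf{P}(q),\mathbf{J}_i(q),c_i(q)$ by their linearizations introduces only $\mathcal{O}((\delta q,\delta u)^2)$ corrections, since the two agree to first order and multiply factors that are $\mathcal{O}(1)$ at the nominal point. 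Collecting these gives the claimed residual bound.

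The step I expect to be the main obstacle is the bookkeeping that matches the first-order coefficient of the substituted expression to the content of \eqref{eq:kkt-sensitivity}. The delicate part is the $q$-dependence: the displayed form of \eqref{eq:kkt-sensitivity} only shows the $u$-derivatives, whereas the $q$-derivatives additionally carry the curvature contributions $\partial\mathbf{J}_i/\partial q$ and $\partial\mathbf{P}/\partial q$. These are exactly the terms supplied by the first-order parts of the coefficient expansions $\hat{\mathbf{P}},\hat{\mathbf{J}}_i,\hat c_i$: when multiplied by the nominal solution values $f_\kappa(\bar q,\bar u)$ and $\lambda_{\kappa,i}(\bar q,\bar u)$, they reproduce precisely the explicit-parameter-derivative blocks (the analogue of $\partial b/\partial u$ for the $q$-direction), while the first-order parts of $\hat q_+,\hat\lambda_{+,i}$ multiplied by the nominal coefficients reproduce the matrix-times-derivative blocks. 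Verifying that these two groups coincide with the left- and right-hand sides of the full sensitivity system is the crux; once established, the first-order cancellation is automatic and the lemma follows.
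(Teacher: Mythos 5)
Your proposal is correct and follows essentially the same route as the paper's (very brief) proof: the zeroth-order term vanishes because the nominal primal-dual pair satisfies the perturbed optimality conditions exactly, the first-order term vanishes by the definition of the sensitivity gradients, and the remaining terms are products of affine factors and hence quadratic. Your extra bookkeeping---in particular handling the $q$-direction curvature terms that \eqref{eq:kkt-sensitivity} leaves implicit---simply fills in details that the paper's two-sentence sketch glosses over.
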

\begin{proof}
    If we expand the Taylor-approximated equations and drop the terms above first-order, we are left with (i) the original equality conditions over nominal coordinates which must hold due to the nominal values being obtained at optimality and (ii) its first-order expansion, which must hold due to the definition of the sensitivity gradients \eqref{eq:kkt-sensitivity}. 
\end{proof}

\subsection{Ellipsoidal Trust Region (ETR)}
\Cref{lemma:taylor-approximation} implies that the linear model will no longer be accurate far away from $(\bar{q},\bar{u})$, even with the benefits of smoothing \cite{gqdp}. As such, many optimization methods are further concerned with where we can \emph{trust} the local model, and introduce the concept of a \emph{trust region} around ($\bar{q},\bar{u})$, which further qualifies the accuracy of the linear model \cite{trust-region, trust-region-generalization}. To simplify computation, these trust regions are often chosen to be simple geometric primitives. 

Following classical works, let us first consider an ellipsoidal trust region which can be described in quadratic form:
\begin{equation} \label{eq:ellpsoidal_trust_region}
\mathcal{E}_\mathbf{\Sigma}(\bar{q},\bar{u})\coloneqq \{\delta z = (\delta q,\delta u) | \delta z^\top \mathbf{\Sigma} \delta z \leq 1\}.
\end{equation}

The set of ${q}_+$ achievable with this ellipsoidal trust region can be written as
\begin{align}
\label{eq:ellpsoidal_trust_region_alternative}
\{\hat{q}_+ |\hat{q}_+ = \mathbf{A}_\kappa\delta q + \mathbf{B}_\kappa\delta u + f_\kappa(\bar{q},\bar{u}), (\delta q,\delta u)\in\mathcal{E}_\mathbf{\Sigma}(\bar{q},\bar{u})\}
\end{align}
We emphasize the set \eqref{eq:ellpsoidal_trust_region_alternative} is still an ellipsoid, as it is the image of the ellipsoid $\mathcal{E}_\mathbf{\Sigma}(\bar{q},\bar{u})$ under the linear map $(\mathbf{A}_\kappa, \mathbf{B}_\kappa)$ with a constant offset $f_\kappa(\bar{q},\bar{u})$.

In order for this linear model to be consistent with the true dynamics within the trust region, choosing an appropriate trust region parameter $\mathbf{\Sigma}$ is imperative. If $\mathbf{\Sigma}$ has too large of a volume, it is likely that there will be large errors in the behavior of the system within the trust region. On the other hand, if $\mathbf{\Sigma}$ is too small, iterative algorithms will have to take more iterations to converge. 

\subsection{Contact Trust Region (CTR)}
\label{sec:local-approximation:ctr}
However, contact is more special in that it is a \emph{unilaterally constrained} dynamical system\footnote{Bilateral constraints are often included in sensitivity analysis, so would be implicitly obeyed by the linear model.}. If points in the trust region predicts a behavior of primal and dual variables that are not feasible, this can also cause a large discrepancy in the approximated model within the trust region. In practice solvers would also try to find an ellipsoidal trust region such that all the elements within the trust region would result in a feasible prediction \cite{computing-trust-region-step, trust-region-constrained-solver}. However, this is often achieved by shrinking the ETR centered at the linearization point, resulting in overly conservative(small) trust regions.

Thus, we propose to separate the role of the ellipsoid in keeping Taylor approximation error low, and having to be inscribed within the feasible set. This is achievable by intersecting the ellipsoidal trust region and the primal and dual feasible set of the SOCP contact dynamics \eqref{eq:q_dynamic_socp} under linearized approximations of the primal and dual variables. We formalize this object in \Cref{def:ctr}.
\begin{definition}[\bfseries Contact Trust Region]\label{def:ctr}\normalfont
We define the Contact Trust Region (CTR) at $(\bar{q},\bar{u})$ as the set of all allowable perturbations that do not result in violation of the primal and dual feasibility constraints under a linear model,
\begin{subequations}
\label{eq:full_contact_trust_region}
\begin{align}
\mathcal{S}_{\mathbf{\Sigma},\kappa}(\bar{q},\bar{u}) \coloneqq \{ & (\delta q, \delta u) |  \delta z^\top\mathbf{\Sigma}\delta z \leq 1, \delta z = (\delta q, \delta u), \label{eq:full_contact_trust_region:etr}\\
\hat{q}_+ & = \mathbf{A}_\kappa\delta q + \mathbf{B}_\kappa\delta u + f_\kappa(\bar{q},\bar{u}), \label{eq:full_contact_trust_region:primal_linearization}\\
\hat{\lambda}_{+,i} & = \mathbf{C}_{\kappa,i} \delta q + \mathbf{D}_{\kappa,i} \delta u + \lambda_{\kappa,i}(\bar{q},\bar{u}), \label{eq:full_contact_trust_region:dual_linearization}\\ 
\hat{\mathbf{J}}_i\hat{q}_+ + \hat{c}_i & \in \mathcal{K}_i, \label{eq:full_contact_trust_region:primal_feasibility}\\
\hat{\lambda}_{+,i} & \in \mathcal{K}^\star_i \label{eq:full_contact_trust_region:dual_feasibility}
\}. 
\end{align}
\end{subequations}
\end{definition}

Although the linearizations \eqref{eq:full_contact_trust_region:primal_linearization} and \eqref{eq:full_contact_trust_region:dual_linearization} are taken under the smoothed, unconstrained dynamics \eqref{eq:q_dynamics_log}, enforcing the feasibility constraints of the SOCP dynamics \eqref{eq:q_dynamic_socp} is still justified. Firstly, the primal feasibility constraint \eqref{eq:full_contact_trust_region:primal_feasibility} is implied by the domain of the logarithm in \eqref{eq:q_dynamics_log}. Furthermore, although the nominal contact forces $\lambda_{\kappa, i}$ in are dual feasible (because of \eqref{eq:dual_feasible_point}), we still need to ensure the feasibility of the linearization $\hat{\lambda}_{\kappa, i}$ by including \eqref{eq:full_contact_trust_region:dual_feasibility}. 

Notably, the convex constraints \eqref{eq:full_contact_trust_region:primal_linearization}-\eqref{eq:full_contact_trust_region:dual_feasibility} are locally equivalent to the perturbed (smoothed) KKT conditions of contact dynamics \eqref{eq:q_dynamics_kkt} (which is equivalent to \eqref{eq:q_dynamic_socp}). By \Cref{lemma:taylor-approximation}, the primal and dual Taylor expansions, \eqref{eq:full_contact_trust_region:primal_linearization} and \eqref{eq:full_contact_trust_region:dual_linearization}, are equivalent to the stationarity condition \eqref{eq:unconstrained_staionarity_with_lambda} (which is equivalent to \eqref{eq:balance_single_equation}) and the perturbed complementarity condition \eqref{eq:perturbed-kkt} (which is the perturbation of \eqref{eq:friction_constraints:complementary_slackness}). Moreover, primal feasibility \eqref{eq:full_contact_trust_region:primal_feasibility} locally implies \eqref{eq:friction_constraints:primal_feasibility}, and dual feasibility \eqref{eq:full_contact_trust_region:dual_feasibility} implies \eqref{eq:friction_constraints:dual_feasibility}.

Oftentimes we are also interested in the effect of action perturbations for a \emph{fixed} nominal state. Therefore, we introduce a simplified case of the CTR that has zero perturbations in configuration $(\delta q=0)$.
\begin{definition}[\bfseries Action-only Contact Trust Region]\normalfont
\label{def:actr}
We define the Action-only Contact Trust Region (A-CTR) as 
\begin{subequations}\label{eq:action-contact-trust-region}
\begin{align}
\mathcal{S}^\mathcal{A}_{\mathbf{\Sigma},\kappa}(\bar{q},\bar{u}) \coloneqq \{ \delta u | & \delta u^\top\mathbf{\Sigma} \delta u \leq 1, \\
\hat{q}_+ & = \mathbf{B}_\kappa\delta u + f_\kappa(\bar{q},\bar{u}), \\
\hat{\lambda}_{+,i} & = \mathbf{D}_{\kappa,i} \delta u + \lambda_{\kappa,i}(\bar{q},\bar{u}), \label{eq:action-contact-trust-region:ellipsoidal}\\ 
\mathbf{J}_i \hat{q}_+ + c_i & \in \mathcal{K}_i, \label{eq:action-contact-trust-region:primal_feasibility}  \\
\hat{\lambda}_{+,i} & \in \mathcal{K}^\star_i \label{eq:action-contact-trust-region:dual_feasibility}\}.
\end{align}
\end{subequations}
\end{definition}

Both the CTR \eqref{eq:full_contact_trust_region} and the A-CTR \eqref{eq:action-contact-trust-region} are convex as they each can be expressed as an intersection of convex constraints.

\begin{figure*}[t]
\centering\includegraphics[width = 1.0\textwidth]{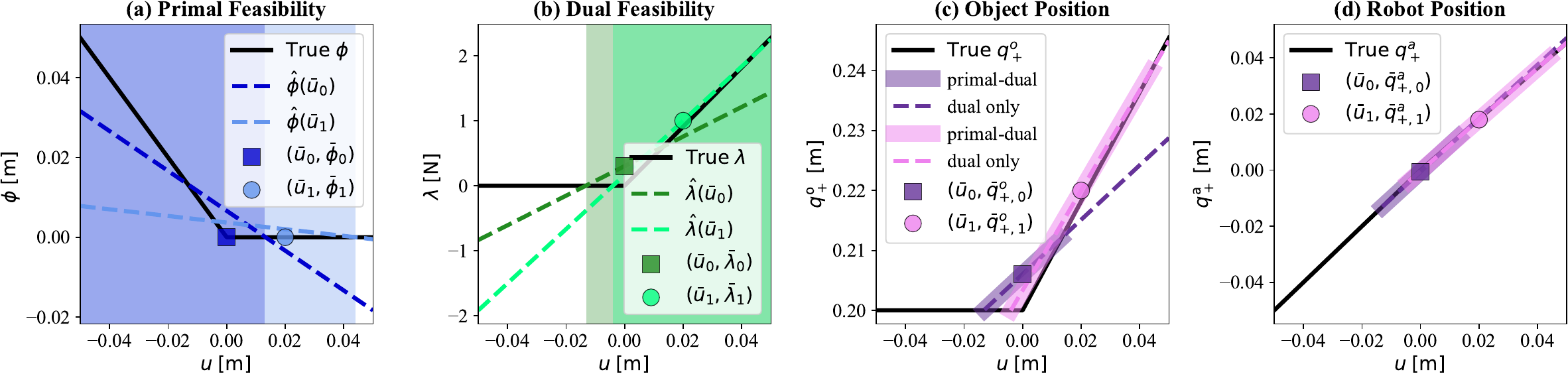}
\caption{
A-CTR illustrations for the system in \Cref{fig:1d-pushing-schematic}b at $(\bar{q}^\mathrm{o}, \bar{q}^\mathrm{a}) = (0.2, 0)$. Sub-figures show how (\textbf{a}) the signed distance function $\phi$, (\textbf{b}) the contact force $\lambda$, (\textbf{c}) the object configuration $\quplus$ and (\textbf{d}) the robot configuration $\qaplus$ change as a function of the action $u$.
In every sub-figure, the black solid line represents the true, non-smooth dynamics \eqref{eq:q_dynamic_f}.
The markers (squares and circles) represent linearization points.
In (\textbf{a}) and (\textbf{b}),
the deeper-colored dotted lines represent the linearization at $\bar{u}_0 = 0$, and the lighter-colored lines represent the linearization at $\bar{u}_1=0.02$. 
The shaded regions represent the feasible set of the corresponding color. For instance, the dark blue shaded region in (\textbf{a}) represents 
$\hat{\phi}\left(\bar{q}=(0.2, 0), \bar{u}=0\right) \geq 0$; 
the light green region in (\textbf{b}) represents
$\hat{\lambda}_+\left(\bar{q}=(0.2, 0), \bar{u}=0.02\right) \geq 0$.
In (\textbf{c}) and (\textbf{d}), the dotted lines show parts of the linearizations that satisfy only the dual constraints; the thick shaded lines around dotted lines satisfy both the primal and dual constraints.
} 
\label{fig:1d_pushing_primal_dual_feasibility}
\vskip -0.1 true in
\end{figure*}

\begin{example}[\bfseries A-CTR for 1D Pushing]\label{ex:pushing-actr}\normalfont
Consider the 1D system in \Cref{fig:1d-pushing-schematic} with two bodies of width $0.2 \mathrm{m}$, one actuated (red sphere) and one unactuated (gray box), both constrained to slide on a frictionless surface along the $x$ axis. 

In this example, we illustrate the A-CTRs for the nominal configuration in \Cref{fig:1d-pushing-schematic}b, where $\bar{q} = (\bar{q}^\mathrm{o}, \bar{q}^\mathrm{a}) = (0.2, 0)$: the ball is touching the left face of the box. 


Firstly, we note that the true, non-smooth contact dynamics (defined by \eqref{eq:q_dynamic_f} and shown as the black line segments in \Cref{fig:1d_pushing_primal_dual_feasibility}a-c) has two contact modes: (\textbf{i}) a \emph{no-contact} mode corresponding to the left linear piece with domain $u \leq 0$, and (\textbf{ii}) an \emph{in-contact} mode corresponding to the right piece with domain $u\geq 0$. The two modes are also present in the robot dynamics in \Cref{fig:1d_pushing_primal_dual_feasibility}d, but the slope difference between the two modes is barely noticeable.

Secondly, the dual feasibility constraints \eqref{eq:action-contact-trust-region:dual_feasibility} are crucial to prevent unphysical behaviors, such as pulling the object via contact forces. As shown in \Cref{fig:1d_pushing_primal_dual_feasibility}b, the linearizations (dotted lines) incorrectly suggest the box can pull the object with a negative $u$. However, once the system switches to no-contact mode according to the true dynamics, the box remains stationary. By enforcing dual feasibility on $\hat{\lambda}$, the vast majority of negative $u$ is restricted, as demonstrated by the green shaded regions in \Cref{fig:1d_pushing_primal_dual_feasibility}b. A small amount of negative $u$ is still allowed due to mismatch between the gradients of the smoothed and true dynamics.

Interestingly, we observe that although the smoothed gradient does not perfectly match the gradient of the true dynamics, the mismatch can be reduced by tweaking the linearization point. 
As shown in \Cref{fig:1d_pushing_primal_dual_feasibility}, the linearization at $\bar{u}_0 = 0.0$ under smoothed contact dynamics does not match the gradient of either mode well, as $\bar{u}_0$ is on the boundary between the two modes. As a result, the boundary of the feasible regions computed from the linearization (e.g. dark green shaded region in \Cref{fig:1d_pushing_primal_dual_feasibility}b) does not match the boundary of the in-contact mode perfectly. On the other hand, for $\bar{u}_1 = 0.02$, a linearization point further away from the boundary of the domain of the in-contact mode, both the linearization (light green dotted line) and the approximated boundary of the dual feasible set (light green shaded region) match the in-contact piece much more closely.

Lastly, we argue that enforcing the primal feasibility constraint \eqref{eq:action-contact-trust-region:primal_feasibility} can needlessly restrict robot motion. As shown in \Cref{fig:1d_pushing_primal_dual_feasibility}a, under the true signed-distance function, the entire in-contact piece satisfies the non-penetration constraint with $\phi = 0$. However, due to the slight gradient mismatch introduced by smoothing, the linear approximation $\hat{\phi}$ is never perfectly flat. Consequently, the primal-dual feasible regions (shaded regions in \Cref{fig:1d_pushing_primal_dual_feasibility}c and \Cref{fig:1d_pushing_primal_dual_feasibility}d) end just a few centimeters ahead of the linearization points, whereas the linear approximations alone remain valid over a much larger domain defined solely by the dual constraints (dashed lines in the same sub-figures).
\end{example}

However, imposing the primal and dual feasibility constraints in \eqref{eq:full_contact_trust_region} does not always reduce the trust region size, which we illustrate in \Cref{ex:squeezing-actr}.

\begin{figure}
\centering\includegraphics[width = 0.48\textwidth]{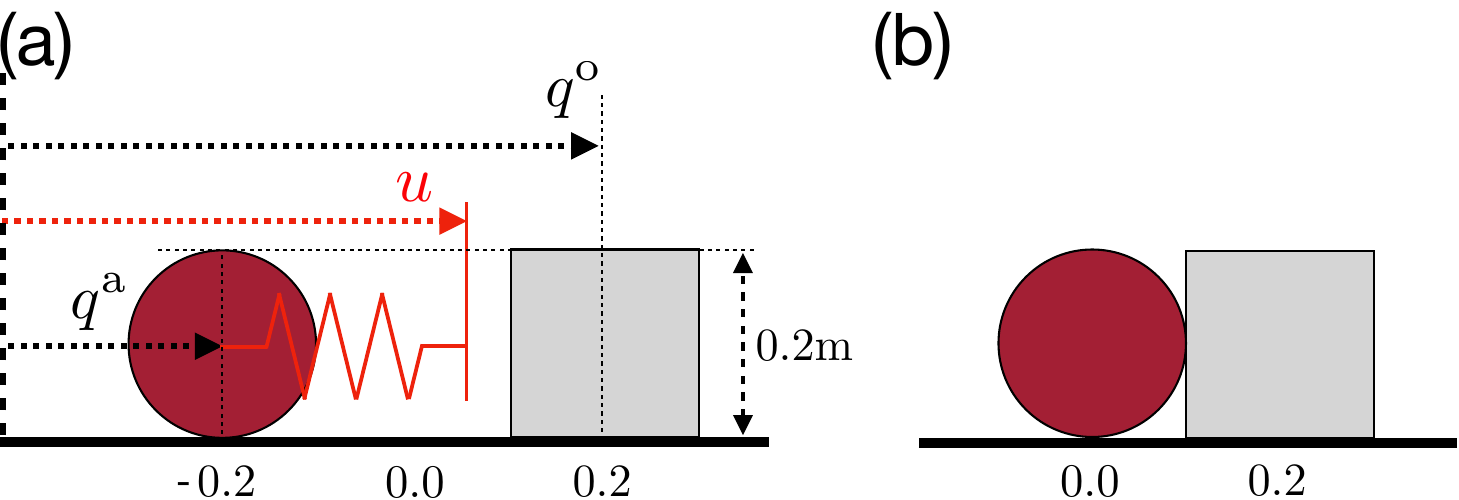}
\caption{
\textbf{(a)} Schematic of the 1-dimensional pusher system used in \Cref{ex:pushing-actr}. \textbf{(b)} corresponds to the configuration $(\qu, \qa) = (0.2, 0)$.
} 
\label{fig:1d-pushing-schematic}
\vskip -0.1 true in
\end{figure}

\begin{figure}[t]
\centering\includegraphics[width = 0.46\textwidth]{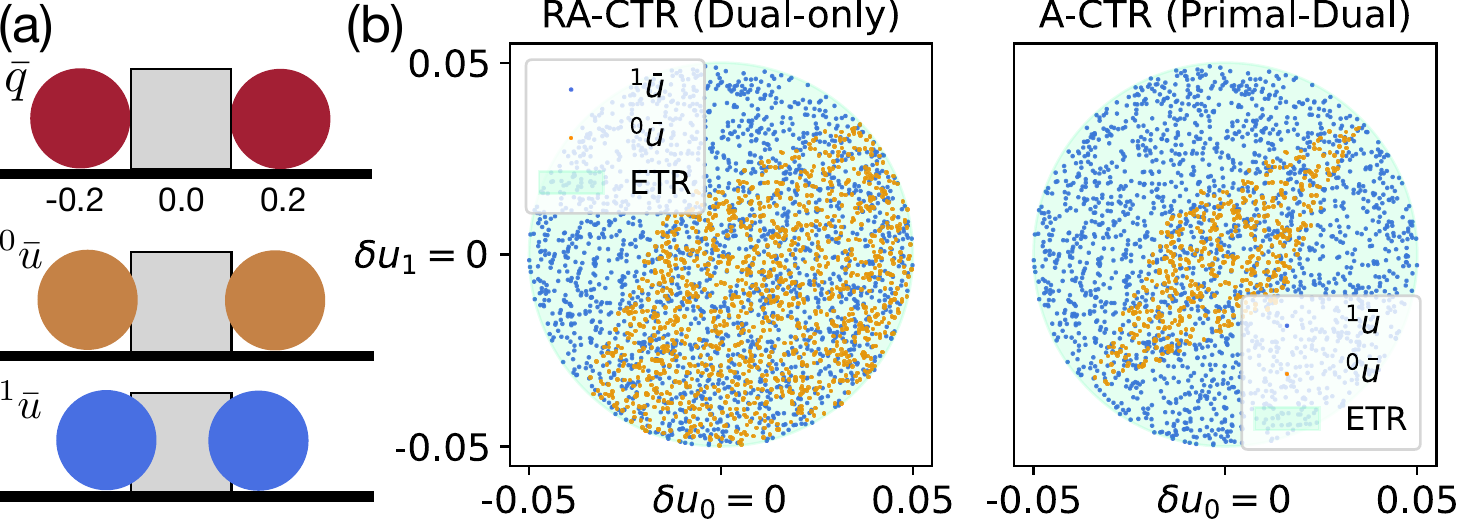}
\caption{
\textbf{(a)} Nominal configuration and actions for \Cref{ex:squeezing-actr}. $u_0$ and $u_1$ are the position commands of the left and right ball, respectively.
\textbf{(b)} Samples in the action space that satisfy the primal \eqref{eq:full_contact_trust_region:primal_feasibility} and dual \eqref{eq:full_contact_trust_region:dual_feasibility} feasibility constraints for different nominal actions. 
}
\label{fig:three_spheres_ctr}
\vskip -0.2 true in
\end{figure}

\begin{example}[\bfseries A-CTR for 1D Squeezing]\label{ex:squeezing-actr}\normalfont
Consider the 1D system in \Cref{fig:three_spheres_ctr}. Similar to \Cref{ex:pushing-actr}, both the robot and the object slide on a frictionless surface. But the robot now consists of two actuated spheres, one on each slide of the box. We illustrate the A-CTRs for the nominal configuration in \Cref{fig:three_spheres_ctr}a, where $\bar{q}^\mathrm{o} = 0.0$ and $\bar{q}^\mathrm{a}=(-0.2, 0.2)$: each side of the box has a ball touching it. The action space for this system is 2D. In the ETR constraint \eqref{eq:full_contact_trust_region:etr}, we set $\mathbf{\Sigma}=\mathrm{diag}([0.05^2, 0.05^2])^{-1}$, i.e. the ETR is a sphere of radius $0.05$ (blue spheres in \Cref{fig:three_spheres_ctr}b).

We sample 2000 points from the ETR, reject samples that violate the primal and/or dual feasibility constraints, and plot the remaining samples in \Cref{fig:three_spheres_ctr}b. For the nominal action ${}^0\bar{u} = (-0.19, 0.19)$ (orange dots), the feasibility constraints eliminate a fair amount of samples. Enforcing both primal and dual constraints eliminate more samples than enforcing only the dual constraints. 

However, as the nominal action gets deeper into penetration, e.g. ${}^1\bar{u} = (-0.16, 0.16)$ (blue dots), both the primal-dual and the dual-only feasible regions grow and completely contain the ETR, as shown by the blue dots covering the blue spheres entirely in \Cref{fig:three_spheres_ctr}b.

This is not a surprising result: as the spheres squeeze the box harder (deeper commanded penetration), there is more room to wiggle the nominal action without losing contact.
\end{example}

\subsection{Relaxed Contact Trust Region (R-CTR)}\label{sec:local-approximation:relaxed-ctr}
We believe the unnecessary restrictions on trust region sizes caused by the primal feasibility constraint in \Cref{ex:pushing-actr} are not an isolated phenomenon, and we hypothesize that imposing primal feasibility in general is too conservative. In particular, under a linear model of the smoothed dynamics, the sensitivity of the actuated bodies is often larger than the sensitivity of unactuated objects: this is a relaxation of the sensitivity of the unactuated body being $\mathbf{0}$ and the actuated body being $\mathbf{I}$ when not in contact. Therefore, the actuated body catches up to penetrate the unactuated object upon very small perturbations. 

Accordingly, we define variants of CTR and A-CTR that relax the primal feasibility constraint. 

\begin{definition}[\bfseries Relaxed Contact Trust Region]\label{def:relaxed-ctr}\normalfont
We define the Relaxed Contact Trust Region (R-CTR) at $(\bar{q},\bar{u})$ as the set of all allowable perturbations that satisfy \emph{only the dual feasibility constraints} under a linear model,
\begin{subequations}
\label{eq:relaxed_contact_trust_region}
\begin{align}
\tilde{\mathcal{S}}_{\mathbf{\Sigma},\kappa}(\bar{q},\bar{u}) \coloneqq \{ & (\delta q, \delta u) |  \delta z^\top\mathbf{\Sigma}\delta z \leq 1, \delta z = (\delta q, \delta u), \\
\hat{q}_+ & = \mathbf{A}_\kappa\delta q + \mathbf{B}_\kappa\delta u + f_\kappa(\bar{q},\bar{u}), \label{eq:relaxed_contact_trust_region:primal_linearization}\\
\hat{\lambda}_{+,i} & = \mathbf{C}_{\kappa,i} \delta q + \mathbf{D}_{\kappa,i} \delta u + \lambda_{\kappa,i}(\bar{q},\bar{u}), \label{eq:relaxed_contact_trust_region:dual_linearization}\\ 
\hat{\lambda}_{+,i} & \in \mathcal{K}^\star_i \label{eq:relaxed_contact_trust_region:dual_feasibility}
\}. 
\end{align}
\end{subequations}
\end{definition}

\begin{definition}[\bfseries Relaxed Action-only Contact Trust Region]\normalfont
\label{def:relaxed-actr}
We define the Relaxed Action-only Contact Trust Region (RA-CTR) as 
\begin{subequations}\label{eq:relaxed-action-contact-trust-region}
\begin{align}
\tilde{\mathcal{S}}^\mathcal{A}_{\mathbf{\Sigma},\kappa}(\bar{q},\bar{u}) \coloneqq \{ \delta u | & \delta u^\top\mathbf{\Sigma} \delta u \leq 1, \\
\hat{q}_+ & = \mathbf{B}_\kappa\delta u + f_\kappa(\bar{q},\bar{u}), \\
\hat{\lambda}_{+,i} & = \mathbf{D}_{\kappa,i} \delta u + \lambda_{\kappa,i}(\bar{q},\bar{u}), \\ 
\hat{\lambda}_{+,i} & \in \mathcal{K}^\star_i \label{eq:relaxed-action-contact-trust-region:dual_feasibility}\}.
\end{align}
\end{subequations}
\end{definition}
In both \Cref{def:relaxed-ctr} and \Cref{def:relaxed-actr}, we keep $\hat{q}^\mathrm{a}$ although they are not directly used in the trust region definitions: they are useful for imposing joint and torque limits, which are detailed in \Cref{sec:additional-constraints}.

Furthermore, we make the following definition for the predicted dynamics over the relaxed trust regions.
\begin{definition}[\bfseries Motion Set]\normalfont\label{def:motion-set}
We define the image of $\tilde{\mathcal{S}}(\bar{q},\bar{u})$ under the linearized primal solution map as the Motion Set,
\begin{subequations} \label{eq:motion_set}
\begin{align}
    \mathcal{M}_\mathbf{\Sigma,\kappa}(\bar{q},\bar{u})\coloneqq \{\hat{q}_+ | & \hat{q}_+ = \mathbf{A}_\kappa\delta q + \mathbf{B}_\kappa\delta u + f_\kappa(\bar{q},\bar{u}),\\
    & (\delta q,\delta u)\in \tilde{\mathcal{S}}_{\mathbf{\Sigma},\kappa}(\bar{q},\bar{u})\}.
\end{align}
\end{subequations}
\end{definition}

\begin{figure*}[t]
\centering\includegraphics[width = 1.0\textwidth]{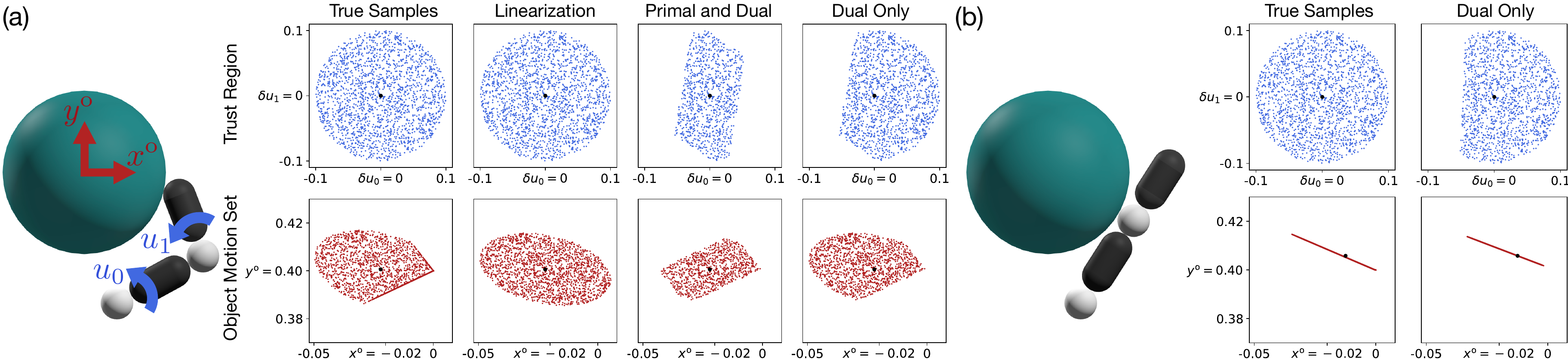}
\caption{Visualization of the A-CTR $\mathcal{S}^\mathcal{A}_{\mathbf{\Sigma},\kappa}$ (primal and dual), RA-CTR $\tilde{\mathcal{S}}^\mathcal{A}_{\mathbf{\Sigma},\kappa}$ (dual only) and the action-only object motion set $\mathcal{M}^{\mathcal{A},\mathrm{o}}_{\mathrm{\Sigma}, \kappa}$ under the different linearization points shown in the illustrations.
In the ``Trust Region" row, the ``True Samples" subplot is obtained by uniformly sampling 2000 points from the ball $\norm{\delta u} \leq 0.1$; samples in the following columns are obtained by rejecting samples that do not satisfy the respective constraints. 
In the ``Object Motion Set" row, the ``True Samples" subplot is obtained by passing the $\delta u$ samples from the trust region subplot above through the true contact dynamics \eqref{eq:q_dynamic_f}; samples in the following columns are obtained by mapping the $\delta u$ samples from the corresponding trust region subplot through the linear map defined in \eqref{eq:action-only-motion_set}.
}  
\label{fig:motion-set-visualization}
\vskip -0.15 true in
\end{figure*}

\begin{definition}[\bfseries Action-only Motion Set]\normalfont\label{def:action-only-motion-set}
We define the image of $\tilde{\mathcal{S}}^\mathcal{A}_{\Sigma, \kappa}(\bar{q},\bar{u})$ under the linearized primal solution map as the Action-only {Motion Set},
\begin{subequations} \label{eq:action-only-motion_set}
\begin{align}
    \mathcal{M}^\mathcal{A}_\mathbf{\Sigma,\kappa}(\bar{q},\bar{u})\coloneqq \{\hat{q}_+ | & \hat{q}_+ = \mathbf{B}_\kappa\delta u + f_\kappa(\bar{q},\bar{u}),\\
    & \delta u \in \tilde{\mathcal{S}}^\mathcal{A}_{\mathbf{\Sigma},\kappa}(\bar{q},\bar{u})\}.
\end{align}
\end{subequations}
\end{definition}

Similar to how we partition $q$ into $\qu$ and $\qa$, we use superscripts $\cdot^\mathrm{o}$ and $\cdot^\mathrm{a}$ to denote subsets of $\mathcal{M}_\mathbf{\Sigma,\kappa}$ corresponding to the object and robot configurations, respectively. For example, $\mathcal{M}^{\mathcal{A}, \mathrm{o}}_{\mathbf{\Sigma},\kappa}(\bar{q},\bar{u})\coloneqq\{\hat{q}^\mathrm{o}_+ | \hat{q}_+\in\mathcal{M}^\mathcal{A}_{\mathbf{\Sigma},\kappa}(\bar{q},\bar{u})\}$. In addition, as a linear map of a convex set, the motion set \eqref{eq:motion_set} is also convex.

\begin{example}[\bfseries A-CTRs and Motion Sets]\label{ex:planar_hand_one_finger}\normalfont
Consider the simple robotic system in \Cref{fig:motion-set-visualization}, where a 2-joint robot arm attempt to move a sphere which has two translational DOFs $(x^\mathrm{o}, y^\mathrm{o})$ and does not rotate. The motion sets attempt to approximate the 1-step reachable set at the nominal configurations in the figure. Nominal actions are chosen so that the robot is slightly in penetration with the object.

Firstly, the motion set with only the linearization and no feasibility constraints cannot capture the unilateral nature of contact, as ellipsoids are bi-directional.

Secondly, imposing both primal and dual feasibility constraints yields an overly conservative representation of the object’s possible motion. In contrast, enforcing only dual feasibility provides a closer match to the true object motion set. This is consistent with our observations from \Cref{ex:pushing-actr}.

Lastly, from \Cref{fig:motion-set-visualization}b, we note the proposed action-only motion set, $\mathcal{M}^{\mathcal{A},\mathrm{o}}_{\mathrm{\Sigma}, \kappa}$, respects singular robot configurations, as the sensitivity analysis procedure factor in the configuration of the manipulator. 
\end{example}

\subsection{Mechanics Derivation of the Motion Set}\label{sec:mechanics-derivation}
We can further motivate the idea that relaxing the primal feasibility constraints improves local approximations of feasible object motions by connecting the RA-CTR to classical constructs in manipulation, such as the motion cone \cite{mechanics,motion-cones} and the wrench set \cite{ferrari-canny,lynchpark}. In this section, we establish these connections by providing an alternative derivation of the action-only motion set (\Cref{def:action-only-motion-set}), starting from contact forces instead of Taylor expansions.

\textbf{The Contact Force Set.}
For a single contact pair $i$, the predicted linear model of how the contact force changes as we vary $\delta u$ is given by $\hat{\lambda}_{+,i} = \mathbf{D}_{\kappa,i}\delta u + \lambda_{\kappa,i}(\bar{q},\bar{u})$, as long as the prediction $\hat{\lambda}_{+,i}$ lies within the friction cone $\mathcal{K}^\star_i$. Thus, the set of allowable contact forces for this pair is given by 
\begin{subequations}
\label{eq:contact_force_set}
\begin{align}
    \mathcal{C}^\mathcal{A}_{\mathbf{\Sigma},\kappa,i}(\bar{q},\bar{u})\coloneqq \big\{\hat{\lambda}_{+,i} |& \hat{\lambda}_{+,i} = \mathbf{D}_{\kappa,i}\delta u + \lambda_{\kappa,i}(\bar{q},\bar{u}) \\
    &\hat{\lambda}_{+,i}  \in \mathcal{K}_i^\star, \\
    &\delta u^\top \mathbf{\Sigma}\delta u  \leq 1 \big\}.
\end{align}
\end{subequations}

\textbf{The Generalized Friction Cone.} The generalized friction cone \cite{generalized-friction-cone} is a linear mapping of the friction cone to the space of object wrenches. This can be obtained by applying the contact Jacobian that linearizes the kinematics of the contact point with respect to object coordinates,
\begin{subequations}
\label{eq:generalized_friction_cone}
    \begin{align}
        \mathcal{JC}^\mathcal{A}_{\mathbf{\Sigma},\kappa,i}(\bar{q},\bar{u})\coloneqq \left\{w_i | w_i = \Ju[i]^\top\lambda_{+,i}, \; \lambda_{+,i} \in \mathcal{C}^\mathcal{A}_{\mathbf{\Sigma},\kappa,i}\right\}.
    \end{align}
\end{subequations}

\textbf{The Wrench Set.} The wrench set \cite{ferrari-canny, lynchpark} is defined as the set of all achievable wrenches that can be applied to an object given all possible contact forces that can be applied from a given configuration. This quantity has classically served as an important metric in grasping analysis \cite{ferrari-canny,han-grasp-analysis,hongkai-phd-thesis}. While the classical wrench set has bounds on the contact force, we instead apply bounds on the actuator input  ($\|\delta u\|_\mathbf{\Sigma}^2\leq 1$). 

Our version of the wrench set is defined by taking Minkowski sums of each generalized friction cone $\mathcal{JC}^\mathcal{A}_{\mathbf{\Sigma},\kappa, i}$ for all contact pairs,

\begin{subequations}
\label{eq:wrench_set}
\begin{align}
\mathcal{W}^\mathcal{A}_{\mathbf{\Sigma}, \kappa}(\bar{q},\bar{u}) \coloneqq \{w | w & = \tau^\mathrm{o} + \textstyle\sum_i w_i\\
w_i & \in \mathcal{JC}^\mathcal{A}_{\mathbf{\Sigma},\kappa, i}(\bar{q},\bar{u})\},
\end{align}
\end{subequations}
where $h\tau^\mathrm{o}$ is some external impulse on the object (e.g. by gravity). The wrench set adequately describes the set of all wrenches that can be applied on the object from a given nominal configuration and input $(\bar{q},\bar{u})$. 

\textbf{The Motion Set.} The motion set is analogous to motion cones \cite{mechanics, motion-cones}. While classical motion cones describe the set of feasible object velocity under a single patch contact, we define motion cone as the set of all feasible displacements achievable by multiple contacts in one time step. 

We construct this object by utilizing a quasistatic relation between the object movement and the applied impulse \eqref{eq:unactuatedbalance},
\begin{subequations}\label{eq:motion-sets-classical-derivation}
\begin{align}
\mathcal{M}^{\mathcal{A},\mathrm{o}}_{\mathbf{\Sigma},\kappa}(\bar{q},\bar{u}) = \{\hat{q}_+^\mathrm{o} |&  \epsilon\Mu(\bar{q})\left(\hat{q}^\mathrm{o}_+ - \bar{q}^\mathrm{o}\right)/h = h w,\\
& w\in\mathcal{W}^\mathcal{A}_{\mathbf{\Sigma},\kappa}(\bar{q},\bar{u})\}.
\end{align}
\end{subequations}
We are now ready to draw the connection between the construction of motion sets in this section, and the definition of the motion set as the image of the trust region in \Cref{def:action-only-motion-set}.
\begin{lemma}\label{lemma:motion-wrench}\normalfont
The object motion set derived in \eqref{eq:motion-sets-classical-derivation} is equivalent to object subset of the Action-only Motion Set $\mathcal{M}^\mathcal{A}_\mathbf{\Sigma,\kappa}(\bar{q},\bar{u})$ defined in \Cref{def:action-only-motion-set}.
\end{lemma}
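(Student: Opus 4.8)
The plan is to show that the two constructions of the object motion set are parameterized by the \emph{same} action perturbation $\delta u$, ranging over the \emph{same} feasible set (the ellipsoid $\delta u^\top\mathbf{\Sigma}\delta u\le 1$ together with dual feasibility $\hat{\lambda}_{+,i}\in\mathcal{K}^\star_i$), and that the two resulting maps $\delta u\mapsto\hat{q}^\mathrm{o}_+$ coincide. Concretely, in \Cref{def:action-only-motion-set} the object component $\hat{q}^\mathrm{o}_+$ is the object block of the linear primal map $\mathbf{B}_\kappa\delta u + f_\kappa(\bar q,\bar u)$, whereas in \eqref{eq:motion-sets-classical-derivation} it is obtained from the balance relation applied to the wrench $w=\tauU+\sum_i\Ju[i]^\top\hat{\lambda}_{+,i}$. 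The bridge between the two descriptions is the object block of the force-balance equation \eqref{eq:unactuatedbalance}, so the crux is to show that this balance holds for the \emph{linearized} quantities $\hat{q}^\mathrm{o}_+$ and $\hat{\lambda}_{+,i}$.

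The key step is that, for action-only perturbations ($\delta q=0$), the object force balance holds \emph{exactly}, not merely to first order. Restricting to $\delta q=0$, the quantities $\mathbf{P}(\bar q)$, the object block of $b$, $\mathbf{J}_i(\bar q)$, $\tauU$, and $\Mu(\bar q)$ are all independent of $u$ (only the actuated block of $b$ carries the $\Ka u$ term). Hence the object block of the stationarity residual in \Cref{lemma:taylor-approximation} is a \emph{constant} matrix acting on the affine maps $\hat{q}^\mathrm{o}_+$ and $\hat{\lambda}_{+,i}$, and is therefore itself an affine function of $\delta u$. But \Cref{lemma:taylor-approximation} guarantees this residual is $\mathcal{O}(\delta u^2)$, and an affine map that is $\mathcal{O}(\delta u^2)$ near $\delta u = 0$ must vanish identically. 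Consequently the object rows of the linear primal map satisfy
\begin{equation}
\frac{\epsilon}{h^2}\Mu(\bar q)\left(\hat{q}^\mathrm{o}_+ - \bar{q}^\mathrm{o}\right) = \tauU + \sum_i \Ju[i](\bar q)^\top\hat{\lambda}_{+,i}
\end{equation}
for \emph{every} $\delta u$, which is precisely the relation defining the mechanics motion set in \eqref{eq:motion-sets-classical-derivation} with wrench $w=\tauU+\sum_i\Ju[i]^\top\hat{\lambda}_{+,i}$.

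With this exact identity in hand, equating the two sets reduces to matching feasibility. For any $\delta u\in\tilde{\mathcal{S}}^\mathcal{A}_{\mathbf{\Sigma},\kappa}$, each $\hat{\lambda}_{+,i}=\mathbf{D}_{\kappa,i}\delta u+\lambda_{\kappa,i}$ satisfies $\delta u^\top\mathbf{\Sigma}\delta u\le1$ and $\hat{\lambda}_{+,i}\in\mathcal{K}^\star_i$, i.e.\ $\hat{\lambda}_{+,i}\in\mathcal{C}^\mathcal{A}_{\mathbf{\Sigma},\kappa,i}$ from \eqref{eq:contact_force_set}; hence $\Ju[i]^\top\hat{\lambda}_{+,i}$ lies in the generalized friction cone and the total wrench $w$ lies in $\mathcal{W}^\mathcal{A}_{\mathbf{\Sigma},\kappa}$ from \eqref{eq:wrench_set}. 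The displayed balance then places $\hat{q}^\mathrm{o}_+$ in $\mathcal{M}^{\mathcal{A},\mathrm{o}}_{\mathbf{\Sigma},\kappa}$, giving the forward inclusion, and I would run the same chain in reverse to obtain equality.

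The main obstacle is precisely this reverse inclusion, because the wrench set \eqref{eq:wrench_set} is written as a Minkowski sum: formally, each generalized friction cone contributes $w_i=\Ju[i]^\top\lambda_{+,i}$ with its \emph{own} perturbation, so an element of $\mathcal{W}^\mathcal{A}_{\mathbf{\Sigma},\kappa}$ could in principle use a different $\delta u_i$ per contact, whereas $\tilde{\mathcal{S}}^\mathcal{A}_{\mathbf{\Sigma},\kappa}$ couples all contacts through a single $\delta u$. I would close this gap by observing that the object motion $\hat{q}^\mathrm{o}_+$ depends only on the \emph{total} wrench $w$, and that the contact force set, generalized friction cone, and wrench set are all generated by the common action perturbation $\delta u$; reading the Minkowski sum over this shared parameterization, the total wrench $\sum_i\Ju[i]^\top\hat{\lambda}_{+,i}$ evaluated at a single $\delta u$ exhausts the achievable wrenches, so the balance equation makes the $\delta u\mapsto\hat{q}^\mathrm{o}_+$ correspondence a bijection between the two sets. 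Making this coupling argument airtight — rather than the balance-equation algebra, which is exact and essentially immediate — is the step that demands the most care.
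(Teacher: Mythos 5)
Your proposal follows essentially the same route as the paper's proof: both invoke \Cref{lemma:taylor-approximation} to establish that the linearized primal and dual variables jointly satisfy the object force-balance equation \eqref{eq:unactuatedbalance}, and then identify the two sets by matching the shared feasibility constraints ($\delta u^\top\mathbf{\Sigma}\delta u \le 1$ and $\hat{\lambda}_{+,i}\in\mathcal{K}^\star_i$). Your two refinements — the affine-plus-$\mathcal{O}(\delta u^2)$ argument showing the balance holds \emph{exactly} rather than to first order, and the observation that the Minkowski-sum wrench set \eqref{eq:wrench_set} must be read over a single shared $\delta u$ for the equivalence to hold — address precisely the two points the paper's proof handles implicitly (it simply rewrites $\mathcal{W}^\mathcal{A}_{\mathbf{\Sigma},\kappa}$ with one common $\delta u$ and asserts the joint balance), so your write-up is, if anything, more careful on exactly the steps where the paper is terse.
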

\begin{proof}
\Cref{app:proof}.
\end{proof}

\subsection{Additional Constraints}\label{sec:additional-constraints}
We list some additional constraints and costs that may be added to this formulation to further restrict the CTR.
\subsubsection{Joint Limits}
Consider joint limits $q^\mathrm{a}_{lb}$ and $q^\mathrm{a}_{lb}$ that must be enforced by the robot. Then, we can add the constraint that limits the position command that we send to the robot,
\begin{equation}
    q^\mathrm{a}_{lb}\leq \bar{u} + \delta u \leq q^\mathrm{a}_{ub}.
\end{equation}

\subsubsection{Torque Limits}
We often want to limit the torque applied by the robot to the object by imposing lower and upper bounds $\tau_{lb}$ and $\tau_{ub}$ on joint torque. Although the quasi-dynamic formulation cannot account for dynamic transient torques, we can compute the steady-state torque experienced by the robot using the difference in the sent position command and the predicted position \cite{pang2022easing}. Thus, the set of achievable $\delta u$ that does not violate this steady-state torque limit can be written as
\begin{equation}
   \tau_{lb}\leq \mathbf{K}_\mathrm{a}^{-1}\left(\bar{q}^\mathrm{a}_+ + \mathbf{B}^\mathrm{a} \delta u - (\bar{u} + \delta u)\right)\leq \tau_{ub}.
\end{equation}
where $\tau^\mathrm{a}$ accounts for gravitational torques on the robot.

\section{Local Planning and Control}\label{sec:local-planning-control}
\noindent We now study local gradient-based planning and control, which is one of the core applications of our proposed contact trust region. Among possible formulations of this problem, we focus on computing a configuration and input sequence that moves the system towards a goal configuration $q_\text{goal}$. The full nonlinear form of this problem can be written as
\begin{subequations}\label{eq:nonlinear-to}
\begin{align}
    \min_{q_{0:T},u_{0:T-1}}\quad & \|q_\text{goal} - q_T\|^2_\mathbf{Q} + \sum^{T-1}_{t=0} \|u_t - u_{t-1}\|^2_\mathbf{R}, \label{eq:nonlinear-to:cost} \\
    \text{s.t.} \quad & q_{t+1} = f(q_t, u_t) \quad \forall t, \label{eq:nonlinear-to:dynamics}\\
                      & |u_t - u_{t-1}| \leq \eta \quad \forall t,  \label{eq:nonlinear-to:input_constraint}\\
                      & q_0 = \bar{q}_0,  \label{eq:nonlinear-to:initial_condition}
\end{align}
\end{subequations}
where \eqref{eq:nonlinear-to:dynamics} enforces dynamics constraints, \eqref{eq:nonlinear-to:input_constraint} enforces input limits (recall that $u_t$ is a position command, thus input limits are enforced in relative form), and \eqref{eq:nonlinear-to:initial_condition} enforces the initial condition.

One of the biggest challenges in solving \eqref{eq:nonlinear-to} is handling the non-smooth contact dynamics constraint \eqref{eq:nonlinear-to:dynamics}. Although MIP-based methods \cite{tobia-recovery, marcucci2019mixed} have struggled to scale up to complex, contact-rich problems, the MIP formulation reveals why the problem is hard: the search through the exponentially many contact modes.

In this section, we present a method for solving \eqref{eq:nonlinear-to} by incorporating contact dynamics smoothing and the contact trust region into an iLQR-like \cite{li2004iterative} trajectory optimization scheme. Through two toy problems, we show that the proposed method can iteratively approach an advantageous contact mode for reaching the given goals, even when the initial guess is in a contact mode with non-informative gradient. In addition, we present how the method can be used as a model predictive controller.  

\subsection{Trajectory Optimization with R-CTR} \label{sec:ctr_trajectory_optimization}
Consider a trajectory optimization scheme, where we first obtain some guess of the nominal input $\bar{u}_{0:T-1}$, then roll it out under the dynamics to get the nominal configuration trajectory $\bar{q}_{0:T}$. Under a local approximation of \eqref{eq:nonlinear-to} around this nominal trajectory $(\bar{q}_{0:T}, \bar{u}_{0:T-1})$ utilizing gradients of smoothed dynamics and R-CTR, we search for optimal perturbations $(\delta q_{0:T},\delta u_{0:T-1})$ by solving the following local trajectory optimization problem with linear dynamics constraints:
\begin{subequations}\label{eq:linear_trajopt}
\begin{align}
&\mathrm{SubTrajOpt}(\bar{q}_{0:T}, \bar{u}_{0:T-1}, q_\text{goal}) = \delta u^\star_{0:T-1}, \text{where} \\
&\min_{\delta q_{0:T},\delta u_{0:T-1}} \quad  \|q_\text{goal} - q_T\|^2_\mathbf{Q} + \sum^{T-1}_{t=0} \|u_t - u_{t-1}\|^2_\mathbf{R}, \label{eq:linear_trajopt:cost}\\
& \text{s.t.} \quad \delta q_{t+1}= \mathbf{A}_{\kappa, t} \delta q_t + \mathbf{B}_{\kappa, t} \delta u_t, \; t = 0\dots T-1, \label{eq:linear_trajopt:linear_dynamics_constraint} \\
& \quad \quad (\delta q_t, \delta u_t)\in \tilde{\mathcal{S}}_\mathbf{\Sigma, \kappa}(\bar{q}_t,\bar{u}_t), \; t = 0\dots T-1,\label{eq:linear_trajopt:ctr}\\
& \quad \quad q_t = \bar{q}_t + \delta q_t, \; t = 0\dots T, \\
& \quad \quad u_t = \bar{u}_t + \delta u_t, \; t = 0\dots T-1,\\
& \quad \quad |u_t - u_{t-1}| \leq \eta, \; t = 1\dots T-1,\\
& \quad \quad \delta q_0 = 0, \label{eq:linear_trajopt:initial_condition}
\end{align}
\end{subequations}

\noindent Here \eqref{eq:linear_trajopt:linear_dynamics_constraint} is the standard linear dynamics constraint in linear MPC, where $\delta q_{t+1} \coloneqq q_{t+1} - f(\bar{q}_t, \bar{u}_t)$; 
$\tilde{\mathcal{S}}_\mathbf{\Sigma,\kappa}$ in \eqref{eq:linear_trajopt:ctr} is the R-CTR \eqref{eq:relaxed_contact_trust_region}; 
\eqref{eq:linear_trajopt:linear_dynamics_constraint} and \eqref{eq:linear_trajopt:ctr} constitute our local approximation of the contact dynamics constraint \eqref{eq:nonlinear-to:dynamics};
\eqref{eq:linear_trajopt:initial_condition} is due to the initial condition constraint \eqref{eq:nonlinear-to:initial_condition}. 
The sub trajectory optimization problem \eqref{eq:linear_trajopt} is a standard SOCP that can be solved by off-the-shelf conic solvers. We note that other constraints as joint limits and torque limits in \Cref{sec:additional-constraints} can be added trivially to this formulation by incorporating them into $\tilde{\mathcal{S}}_{\mathbf{\Sigma},\kappa}$. 

After obtaining the optimal values $\delta q_t^\star, \delta u_t^\star$, we update the nominal input trajectory with $\bar{u}_t\leftarrow\bar{u}_t + \delta u_t^\star$, and repeat the process of rolling out to obtain nominal configuration trajectory, and searching for local improvements. This iterative scheme is summarized in \Cref{alg:trajopt_ctr}. 

Notably, the rollout step (\Cref{alg:trajopt_ctr:rollout}) is analogous to the forward pass in iLQR; the $\mathbf{SubTrajOpt}$ step (\Cref{alg:trajopt_ctr:rollout}) is analogous to the backward pass. Unlike iLQR's backward pass, which solves for the optimal action perturbations $\delta u_t$ iteratively using Bellman recursion, $\mathbf{SubTrajOpt}$ solves for all $\delta u_t$ jointly with convex optimization.

Lastly, we impose the relaxed CTR constraint in \eqref{eq:linear_trajopt:ctr} instead of the full CTR, as the full CTR often leads to under approximation of the 1-step reachable set of the true dynamics. We have shown this via simple examples in \Cref{sec:local-approximation}, and will show more experimental support for the choice of using R-CTR in \Cref{sec:local-planning-results}.
\vskip -0.1 true in
\begin{algorithm}
\caption{\small{CTR Trajectory Optimization-\textbf{CtrTrajOpt}}}\label{alg:trajopt_ctr}
\textbf{Input:} Initial state $q_0$, goal state $q_\text{goal}$, input trajectory guess $\bar{u}_{0:T-1}$, iterations limit $n_\mathrm{max}$\;
\textbf{Output:} Optimized input trajectory $\bar{u}_{0:T-1}^\star$\;
$n \leftarrow 0$\;
\While {not converged $\mathrm{and}$ $n < n_\mathrm{max}$} { \label{alg:trajopt_ctr:while}
    $\bar{q}_{0:T}\leftarrow$ Rollout $f$ from $q_0$ with $\bar{u}_{0:T-1}$ \label{alg:trajopt_ctr:rollout}\;
    $\delta u_{0:T-1}^\star \leftarrow \mathbf{SubTrajOpt}(\bar{q}_{0:T}, \bar{u}_{0:T-1})$ \label{alg:trajopt_ctr:}\;
    $\bar{u}_t \leftarrow \bar{u}_t + \delta u_t^\star \quad \forall t $\;
    $n \leftarrow n + 1$
}
\algorithmicreturn $\; \bar{u}_{0:T-1}$
\end{algorithm}

\subsection{Initial Guess Heuristic}\label{sec:penetration-finder}
As the trajectory optimization problem \eqref{eq:nonlinear-to} is nonconvex, the solution that \Cref{alg:trajopt_ctr} converges to can be sensitive to the choice of initial guesses for the input trajectory $\bar{u}_{0:T-1}$. The most uninformed initial guess would be to set the position command to be the current configuration of actuated bodies ($\bar{u}_t=\bar{q}^\mathrm{a}_0, \forall t$). However, this can be problematic especially when the robots are not in contact with the object in $q_0$, as $\mathbf{B}_\kappa^\mathrm{o}$, the gradient of object motion w.r.t. robot actions, can get close to $0$ even with dynamics smoothing when the distance between them are far away. 

To alleviate this problem, we can compute a position command that would result in the closest contacting configuration from $q_0$, and pass this command as the initial guess for $\bar{u}_{0:T-1}$.

To compute this initial guess, we utilize the intuition that the log-barrier smoothed dynamics in \eqref{eq:q_dynamics_log} results in a force-from-a-distance like effect that pushes away objects even when not in contact. We simply reverse this force field by sending the following torque to the robot, 
\begin{equation}
    \tau = - \sum_i {\mathbf{J}^\mathrm{o}_i}^\top \lambda_i
\end{equation}
and simulate forward using Drake \cite{sap} until the robot is in contact with the object. We use a generous amount of smoothing (e.g. $\kappa\in [10, 100]$) to ensure that the contact forces $\lambda_i$ are large enough to pull the robot links towards the object even if they are far away at $q_0$.


\subsection{Examples}
To elucidate the role of R-CTR in \Cref{alg:trajopt_ctr}, we show how a single-horizon ($T=1$) trajectory optimizer solves two toy problems in \Cref{ex:pushing-example} and \Cref{ex:2d-pushing-example}. We note that the R-CTR reduces to the RA-CTR (\Cref{def:relaxed-actr}) when $T=1$ as the initial condition constraint \eqref{eq:linear_trajopt:initial_condition} eliminates the perturbations on $q$.

\begin{figure}[t]
\centering\includegraphics[width = 0.48\textwidth]{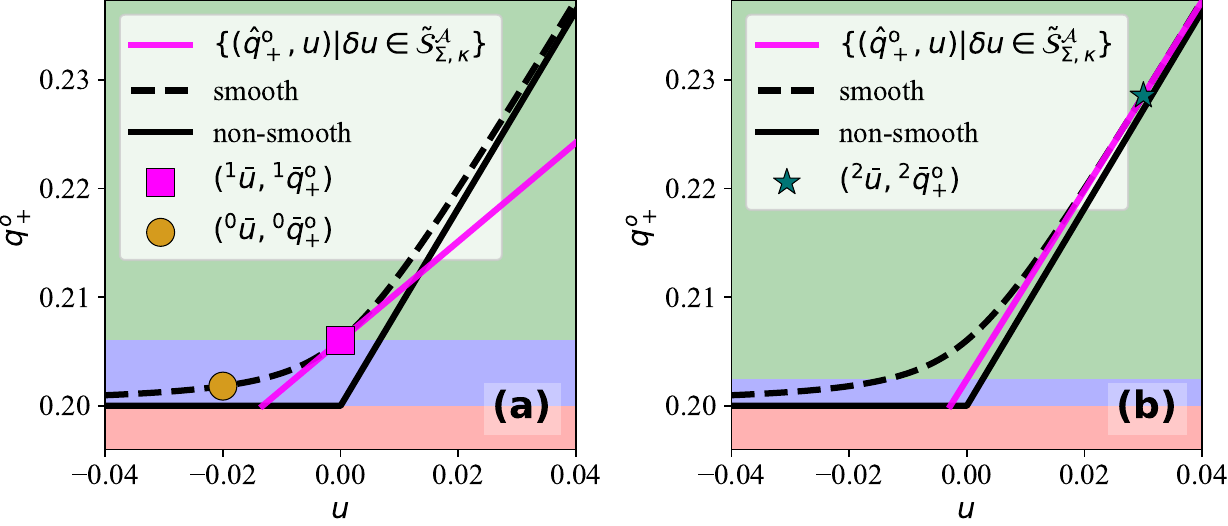}
\caption{
Running \Cref{alg:trajopt_ctr} on the 1D sphere-box system in \Cref{fig:1d-pushing-schematic} from the nominal configuration $(\bar{q}_0^\mathrm{o}, \bar{q}_0^\mathrm{a}) = (-0.02, 0.2)$.
In both subplots, the black solid line represents true non-smooth dynamics, and the black dashed line represents the smoothed dynamics. 
The yellow circle marks the naive nominal action ${}^0\bar{u} = \bar{q}^\mathrm{a}_0$ and its resulting object configuration ${}^0\bar{q}^\mathrm{o}_0$.
Similarly, the magenta square indicates the next nominal action ${}^1\bar{u}$ after applying the initial guess heuristics, and the green star shows the final nominal action ${}^2\bar{u}$ after running \Cref{alg:trajopt_ctr}.
The magenta lines depict the set 
$\{(\hat{q}_+, \bar{u} + \delta u) | \hat{q}_+ = \mathbf{B}_\kappa\delta u + f_\kappa(\bar{q},\bar{u}), \delta u\in \tilde{\mathcal{S}}^{\mathcal{A}}_{\mathbf{\Sigma}, \kappa}(\bar{q},\bar{u})\}$ for the different nominal actions. 
The green, blue, red colored zones denote zones of $q^\mathrm{o}_\text{goal}$ where the inverse dynamics controller behaves similarly.
}
\label{fig:1d_pushing}
\vskip -0.1 true in
\end{figure}

\begin{figure*}[t]
\centering\includegraphics[width = 0.98\textwidth]{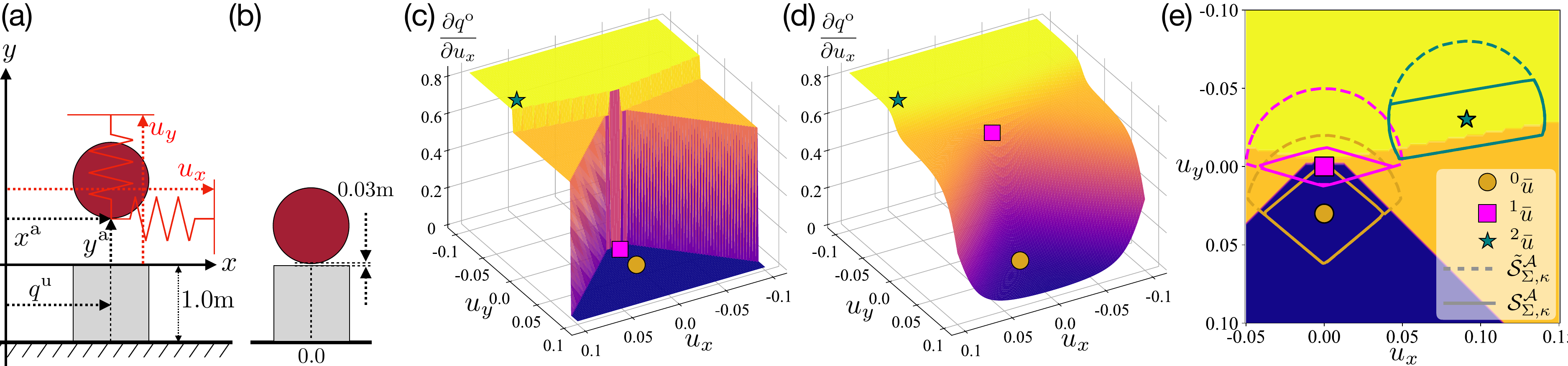}
\caption{
(\textbf{a}) A schematic of the system in \Cref{ex:2d-pushing-example}. 
(\textbf{b}) The nominal configuration $q^\mathrm{o}_0 = 0$ and $q^\mathrm{a}_0 = (x^\mathrm{a}_0, y^\mathrm{a}_0) = (0, 0.03)$: the ball is hovering above but not touching the box.
(\textbf{c}) Gradient landscape of the non-smooth dynamics at the nominal configuration in (\textbf{b}). The blue region at the bottom corresponds to the separation contact mode, the two orange regions the sliding modes, and the yellow region the sticking mode.
(\textbf{d}) Gradient landscape of the smoothed dynamics at the same nominal configuration. The dynamics is smoothed with $\kappa=1000$.
(\textbf{e}) A-CTR ($\mathcal{S}^\mathcal{A}_{\Sigma, \kappa}$) and RA-CTR($\tilde{\mathcal{S}}^\mathcal{A}_{\Sigma, \kappa}$) for different $\bar{u}$'s, overlaid on the non-smooth gradient landscape in (\textbf{c}). 
}
\label{fig:box_ball_graze}
\vskip -0.1 true in
\end{figure*}

\begin{example}[\bfseries 1D Pushing]\label{ex:pushing-example}\normalfont
Revisiting the 1D sphere-box system in \Cref{fig:1d-pushing-schematic}, we would like to push the box to a goal position $q^\mathrm{o}_\text{goal}$ by running \Cref{alg:trajopt_ctr}. 
We begin with the nominal configuration $\bar{q}_0 = (\bar{q}_0^\mathrm{o}, \bar{q}_0^\mathrm{a}) = (-0.02, 0.2)$, where the ball nearly touches the box, and a nominal action $^0\bar{u} = \bar{q}^\mathrm{a}_0$. The nominal action\footnote{
We use the left superscript to denote how the nominal values change through (i) applying the initial guess heuristics, and (ii) iterations in \Cref{alg:trajopt_ctr}. The left superscript is intended to be different from the right subscript which indexes time steps.
}
is depicted by the yellow circle in \Cref{fig:1d_pushing}a. Even with the help of dynamics smoothing, the slope at this point is still small and thus provides limited guidance for planning.

Applying the initial guess heuristics in \Cref{sec:penetration-finder} brings the robot into contact with the object, yielding ${}^1\bar{u}=0.0$ (magenta square in \Cref{fig:1d_pushing}). The updated ${}^1\bar{u}$ produces the new linearized dynamics (magenta line in \Cref{fig:1d_pushing}a), whose slope more closely matches the ``in contact" piece of the non-smooth contact dynamics. Meanwhile, the dual feasibility constraint \eqref{eq:relaxed-action-contact-trust-region:dual_feasibility} keeps the ball from pulling the box back.

However, the slope at ${}^1\bar{u}$ is still quite different from the slope of the in-contact mode. Under this somewhat inaccurate linear approximation of the true dynamics, we further analyze the behavior of the optimizer depending on the given goal configuration $q^\mathrm{o}_\text{goal}$:
\begin{itemize}
    \item Green zone: If $q^\mathrm{o}_\text{goal}$ is in the green zone, the optimizer will move in a positive $u$ direction, correctly moving the box towards the goal. However, as the slope is smaller than the slope of the non-smooth dynamics, the commanded $u$ will overshoot $q^\mathrm{o}_\text{goal}$. 
    \item Blue zone: If $q^\mathrm{o}_\text{goal}$ is in the blue zone, then the optimizer will move backwards even though the optimal action is to move forward. This is due to the non-physical behavior caused by smoothing: according to the smoothed linearization, when $u=0$ is commanded, the object will be pushed away by the barrier forces to a location further than $q^\mathrm{o}_\text{goal}$. To lessen this effect, the optimizer chooses to move backwards.
    \item Red zone: If $q^\mathrm{o}_\text{goal}$ is in the red zone, the optimizer would move backwards but stop at the boundary of the red region, which corresponds to the leftmost point in the magenta line segments in \Cref{fig:1d_pushing}. Without dual feasibility \eqref{eq:relaxed-action-contact-trust-region:dual_feasibility}, a naive linear model with an ETR will move backwards into the red region to attempt pulling the object.
\end{itemize}

The analysis above suggests that ${}^1\bar{u}$, obtained by applying the initial guess heuristics to ${}^0\bar{u}$, is still not ideal for planning: as ${}^1\bar{u}$ is located at the boundary between two contact modes which have distinct gradients, a single linear model at ${}^1\bar{u}$ approximates neither mode well. 

Interestingly, \Cref{alg:trajopt_ctr} can iteratively improve the local linear approximation with the help of dynamics smoothing. Let us consider running \Cref{alg:trajopt_ctr} to reach $q^\mathrm{o}_\text{goal} = 0.22$ from $\bar{q}_0$ and ${}^1\bar{u}$. At the first iteration, \eqref{eq:linear_trajopt} is solved with the RA-CTR in \Cref{fig:1d_pushing}a. Due to overshooting caused by underestimating the slope of the true dynamics, ${}^2\bar{u} \leftarrow 0.03$. At the second iteration, the optimizer constructs the RA-CTR at ${}^2\bar{u}$, which is shown in \Cref{fig:1d_pushing}b. As ${}^2\bar{u} $ is deeper in penetration, the RA-CTR aligns more closely with $\{u|u\geq 0\}$, the domain of the in-contact piece of the non-smooth dynamics. Moreover, the blue zone under the new RA-CTR, representing the non-physical artifacts caused by smoothing, also shrinks. Under this better local approximation of the non-smooth dynamics, the second iteration returns ${}^3\bar{u} = 0.0202$, bringing $q^\mathrm{o}_+$ to within $0.001\mathrm{m}$ of $q^\mathrm{o}_\text{goal}$.

\end{example}

\begin{example}[\bfseries 2D Ball Pushing a Box]\label{ex:2d-pushing-example}\normalfont
To further illustrate how \Cref{alg:trajopt_ctr} iteratively improves the local linear approximation of the non-smooth contact dynamics, we study a slightly more complex system shown in \Cref{fig:box_ball_graze}a. The system consists of an unactuated box constrained to slide frictionlessly along the $x$ axis, and an actuated ball that can move in the $xy$ plane and make frictional contact with the box's top surface.

The non-smooth contact dynamics of this system consists of 4 contact modes: sticking, sliding left, sliding right, and separation. Each mode can be identified with an affine piece in the Piecewise-Affine (PWA) contact dynamics, which is continuous but has discontinuous gradients. Specifically, we examine $\partial q^\mathrm{o}_+ / \partial u_x$, the component of $\mathbf{B}_\kappa^\mathrm{o} = \partial q^\mathrm{o}_+ / \partial u$ that describes how much the object(box) moves along the $x$ axis relative to how much the ball(robot) moves along the same axis.
As shown in \Cref{fig:box_ball_graze}c, $\partial q^\mathrm{o}_+ / \partial u_x$ is constant within each mode, but is separated from nearby modes by cliffs. The gradient can be made continuous with dynamics smoothing, which is shown in \Cref{fig:box_ball_graze}d.


Consider moving the box to $q^\mathrm{o}_\text{goal} = 0.2$ from the initial configuration in \Cref{fig:box_ball_graze}b: $q^\mathrm{o}_0 = 0$ and $q^\mathrm{a}_0 = (x^\mathrm{a}_0, y^\mathrm{a}_0) = (0, 0.03)$.
Starting with ${}^0\bar{u} = q^\mathrm{a}_0$ (yellow circle in \Cref{fig:box_ball_graze}) puts the system in the ``separation" mode, but accomplishing the task requires the ball to push down and drag the object to the right with friction, which is best done in the ``sticking" mode.

We now show that \Cref{alg:trajopt_ctr} is able to iteratively nudge $\bar{u}$ to the ``sticking" mode. Similar to \Cref{ex:pushing-example}, we first apply the initial guess heuristics, bringing ${}^0\bar{u}$ to ${}^1\bar{u}=(0, 0)$ (magenta square). 
Looking at the smoothed gradient landscape in \Cref{fig:box_ball_graze}d, we see that the smoothed $\partial q^\mathrm{o}_+ / \partial u_x$ climbs up a lot from $\bar{u} = q^\mathrm{a}_0$ to $\bar{u} = (0, 0)$. 
Furthermore, after running \Cref{alg:trajopt_ctr} for 2 iterations, the nominal action ${}^3\bar{u}$ (green star) reaches the yellow plateau. This sequence of $\bar{u}$'s is plotted in \Cref{fig:box_ball_graze}d. 

Plotting the same $\bar{u}$ sequence over the non-smooth gradient landscape in \Cref{fig:box_ball_graze}c reveals that $\bar{u}$ jumps from the ``separation" mode at the bottom to the ``sticking" mode at the top. Without smoothing, mode switches can also be achieved by encoding modes with integer variables and solving the resulting MIP. However, for systems with more contact modes, this approach becomes prohibitively expensive.

Lastly, we plot the A-CTR and the RA-CTR for the above sequence of $\bar{u}$ in \Cref{fig:box_ball_graze}e. Here we use $\mathbf{\Sigma} = \mathrm{diag}(0.05^{-2}, 0.05^{-2})$, which bounds $\delta u$ in a circle of radius $0.05$. 
The curved part of the trust region boundaries arises from the ellipsoidal constraint \eqref{eq:action-contact-trust-region:ellipsoidal}, whereas the straight part follows from the feasibility constraints \eqref{eq:action-contact-trust-region:primal_feasibility} and \eqref{eq:action-contact-trust-region:dual_feasibility}.
Compared with the RA-CTR at ${}^0\bar{u}$ (dark yellow), the RA-CTR at ${}^1\bar{u}$ (magenta), obtained from applying the initial guess heuristics to ${}^0\bar{u}$, approximates the sticking region much better. Furthermore, as ${}^2\bar{u}$ (dark green) gets deeper into the ``sticking" mode, the straight edge of its RA-CTR aligns more closely with the boundary between sticking and sliding. Finally, compared to the RA-CTRs, the non-relaxed A-CTRs are always smaller. As a result, \Cref{alg:trajopt_ctr} would need more iterations to converge if the RA-CTR constraint in \eqref{eq:linear_trajopt:ctr} were replaced by A-CTRs. 

\end{example}

\subsection{Model Predictive Control}\label{sec:mpc}
The CTR-based trajectory optimizer in \Cref{sec:ctr_trajectory_optimization} can be readily turned to a controller through Model Predictive Control (MPC). As shown in \Cref{alg:mpc}, we obtain the optimal trajectory $u^\star_t$ by solving \eqref{eq:nonlinear-to} with \Cref{alg:trajopt_ctr} (\Cref{alg:mpc:trajopt}), only use the first input $u^\star_0$ to deploy on the true CQDC dynamics \Cref{alg:mpc:dynamics_rollout}, then re-plan after observing the next configuration. Moreover, we initialize $\mathbf{CtrTrajOpt}$ from the initial guess heuristics (\Cref{sec:penetration-finder}) at the first iteration (\Cref{alg:mpc:first_initialization}), and from the solution of the previous iteration at later iterations (\Cref{alg:mpc:later_initialization}). We refer to the length of the MPC episode as $H$, as opposed to planning horizon $T$.

\begin{algorithm}
\caption{MPC Rollout ($\mathbf{MPC}$)}\label{alg:mpc}
\textbf{Input:} Initial state $q_0$, goal state $q_\text{goal}$, planning horizon $T$, iterations limit $n_\mathrm{max}$, MPC rollout horizon $H$\;
\textbf{Output:} Lists of visited states $L_q$, applied inputs $L_u$\;
$L_q \leftarrow [q_0]$, $L_u \leftarrow []$\;
\For {$t = 0$ to $H - 1$} {
    \eIf{$t == 0$} { \label{alg:mpc:begin_if}
        $\bar{u}_{0:T-1}\leftarrow$ Apply initial guess heuristics to $q_t^\mathrm{a}$\; \label{alg:mpc:first_initialization}
    } {
        $\bar{u}_{0:T-1}\leftarrow$ Initialize from the previous $u_{0:T-1}^\star$\; \label{alg:mpc:later_initialization}
    }
    
    $u_{0:T-1}^\star \leftarrow \mathbf{CtrTrajOpt}(q_t, q_\text{goal}, \bar{u}_{0:T-1}, n_\mathrm{max}$)\; \label{alg:mpc:trajopt}
    $q_{t+1} = f(q_t, u_0^\star)$\; \label{alg:mpc:dynamics_rollout}
    $L_q$.\texttt{append}($q_{t+1}$), $L_u$.\texttt{append}($u_0^\star$)\;
}
\algorithmicreturn $\; L_q, \; L_u$
\end{algorithm}

In \Cref{ex:bimanual-iiwa}, we demonstrate the effectiveness and scalability of the proposed MPC on the \code{IiwaBimanual} example, which has many more contact modes compared to the simple examples we have considered so far.

\begin{example}[\bfseries MPC on \code{IiwaBimanual}]\label{ex:bimanual-iiwa}\normalfont
Consider the planar \code{IiwaBimanual} system where each iiwa arm has 4 of the 7 available DOFs locked, constraining its motion to the $xy$ plane. The system is tasked with moving a cylindrical object to a desired location. From the initial configuration $q^\mathrm{o}_0=(q^\mathrm{o}_x,q^\mathrm{o}_y,q^\mathrm{o}_\theta)=(0.65,0,0)$, we command a relative large rotation to reach $q^\mathrm{o}_\text{goal}=(0.65,0.1,5\pi/6)$. The controller is rolled out in closed-loop for $35$ steps, and the resulting trajectories are displayed in \Cref{fig:inverse-dynamics-modes}.

\begin{figure}[t]
\centering\includegraphics[width = 0.48\textwidth]{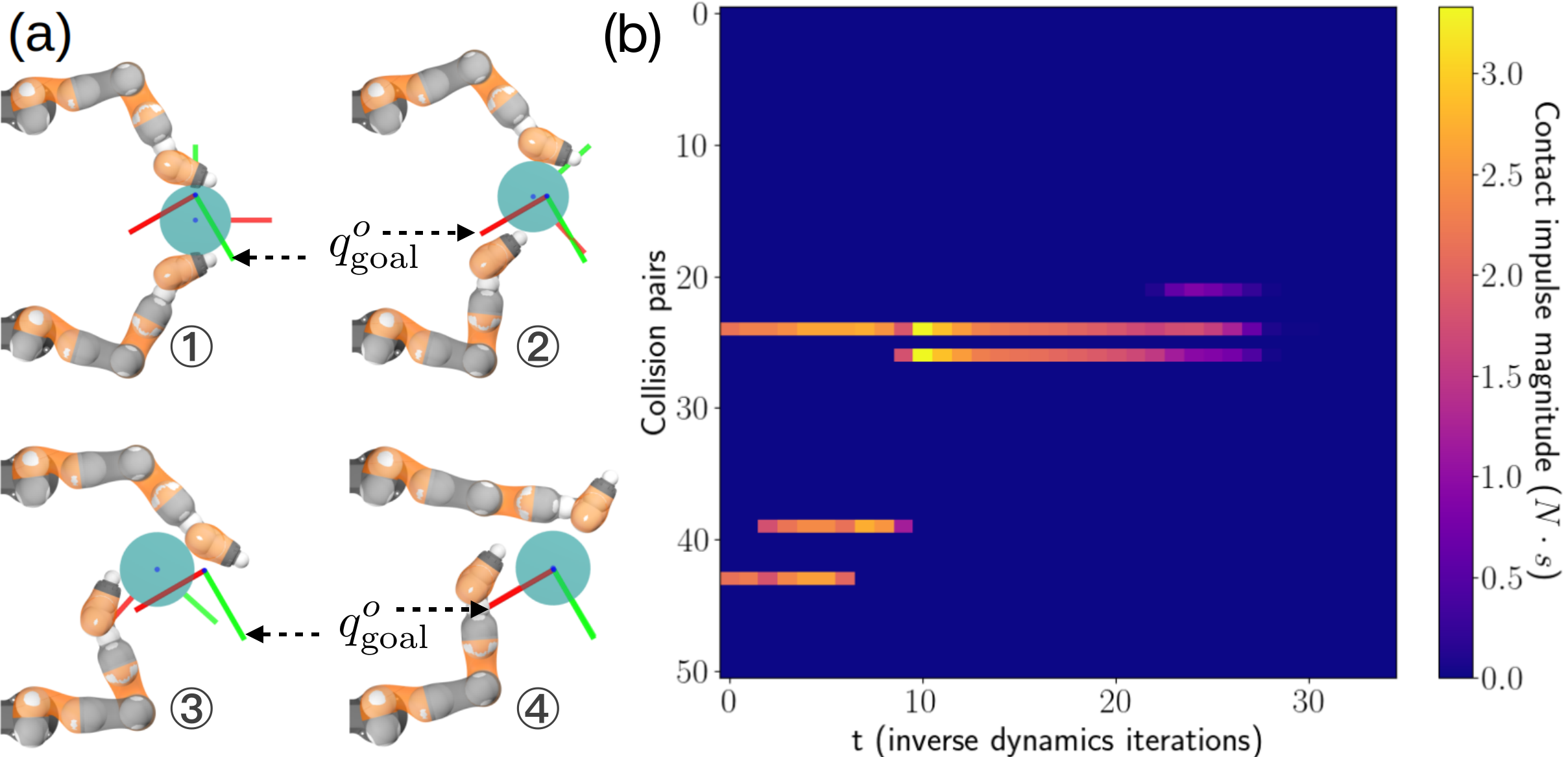}
\caption{
Visualization for \Cref{ex:bimanual-iiwa}. (\textbf{a}) Visualization of key frames of the closed-loop sequence, with the current $q^\mathrm{o}$ and the goal $q^\mathrm{o}_\text{goal}$ displayed as frames. (\textbf{b}) Visualization of contact forces of every potential collision pair in the \code{IiwaBimanual} system.} 
\label{fig:inverse-dynamics-modes}
\vskip -0.1 true in
\end{figure}
\end{example}

The contact forces in \Cref{fig:inverse-dynamics-modes}b tells us that the controller is able to scalably search across local changes in contact modes and arrive at a different contact sequence than the initial configuration.

\section{Local MPC Experiments}\label{sec:local-planning-results}
In this section, we evaluate the performance of the CTR-based MPC in \Cref{alg:mpc} under the CQDC dynamics constraints. We are particularly interested in answering the following questions:
\begin{itemize}
\item How does R-CTR compare with CTR and ETR?
\item Does the method successfully reach a diverse set of goals on complex systems such as dexterous hands? 
\end{itemize}

To answer these questions and demonstrate the scalability of our method, we conduct statistical analysis on two contact-rich robotic systems: 
\begin{itemize}
    \item the planar \code{IiwaBimanual} system (\Cref{fig:tasks_collision_geometry}-top and \Cref{ex:bimanual-iiwa}), which comprises 3 unactauted DOFs, 6 actuated DOFs and 29 collision geometries. Both the arms and the object are constrained to move in the $xy$ plane, with gravity pointing along the negative $z$ direction. The object is a cylinder measuring $0.28\mathrm{m}$ in diameter. The task is to rotate the object to target $SE(2)$ poses. Each robot arm's collision geometries consist of 14 spheres; the object collision geometry is a single cylinder. 
    \item the 3D \code{AllegroHand} system (\Cref{fig:tasks_collision_geometry}-bottom), which comprises 6 unactauted DOFs, 16 actuated DOFs and 39 collision geometries. The object, a $6\mathrm{cm}$ cube, is unconstrained. The hand's wrist is welded to the world frame. The task is to reorient the cube to target $SE(3)$ poses. The hand has 30 collision geometries, which consist of a collection of boxes, spheres and cylinders; the object collision geometries include a cube and 8 small spheres at the cube's corners. 
\end{itemize}

\begin{figure}
\centering\includegraphics[width = 0.45\textwidth]{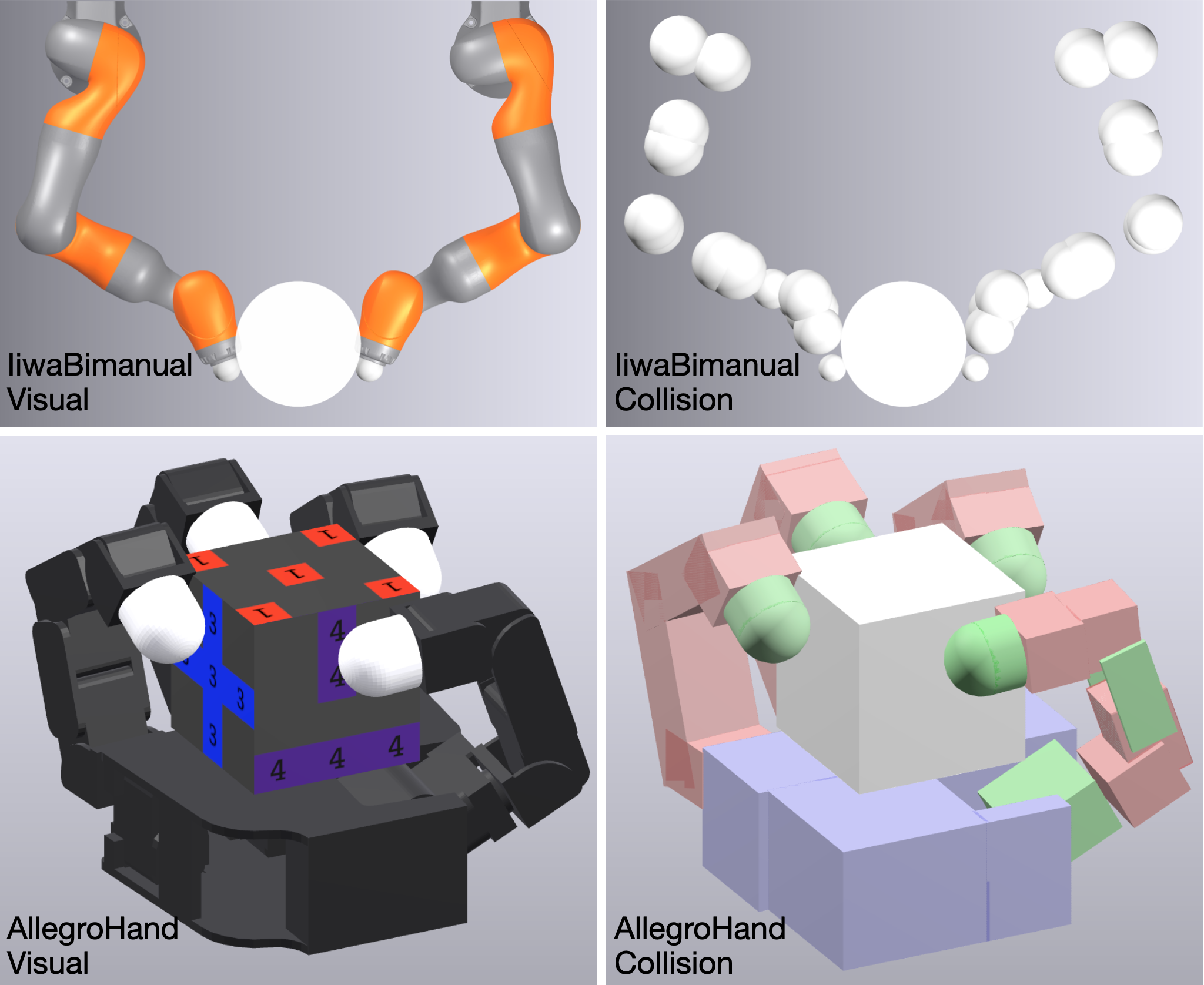}
\caption{Visual and collision geometries of \code{IiwaBimanual} (top) \code{AllegroHand} (bottom). Every \code{IiwaBimanual} geometry is shown as white, while the robot collision geometries in \code{AllegroHand} are colored to increase contrast.} 
\label{fig:tasks_collision_geometry}
\vskip -0.2 true in
\end{figure}

\subsection{Experiment Setup} 
\subsubsection{Goal Generation \label{sec:local-planning-results:goal_selection}} 
When evaluating the proposed MPC statistically, the success rate depends on \emph{which goals} are commanded from \emph{which initial configurations}. 
For local optimization, selecting these pairs is nontrivial: goals that are too easy are uninformative, while those requiring highly non-local movements are beyond the scope of local stabilization.

For both the \code{IiwaBimanual} and \code{AllegroHand} systems, we generate about 1000 pairs of initial conditions and goals which are locally reachable yet far enough to be challenging for MPC. Details about the goal generation scheme can be found in \Cref{appendix:goal_selection}.

The sampled pairs are shown in the scatter plots in \Cref{fig:planning-comparison}, which exhibit a broad distribution, indicating a high level of diversity in the samples. Moreover, as we only care about moving the object to desired configurations, we do not generate robot goal configurations.

\subsubsection{Evaluation Metrics} \label{sec:local-planning-results:evaluation_metrics}
We evaluate the performance of MPC by comparing the difference between $q^\mathrm{o}_\text{final}$, the final object configuration reached by the by \Cref{alg:mpc}, and $q^\mathrm{o}_\text{goal}$, the goal object configuration. We split the object configuration $\qu$ into a quaternion $Q$ and a position $p$: $\qu \coloneqq (Q, p)$, both expressed relative to the world frame. We divide the error in $\qu$ into a \emph{translation error} $\|p_\text{goal} - p_\text{final}\|$, and an \emph{rotation error} $\Delta \theta (Q_\text{goal}, Q_\text{final})$, where $\Delta \theta(\cdot, \cdot)$ returns the angular difference between two unit quaternions.

\subsubsection{Implementation Details}
The numerical experiments are run on a M2 Max Macbook Pro with 64GB of RAM. We use the same open-sourced implementation of the CQDC dynamics as in \citet{gqdp}. The convex subproblem \eqref{eq:linear_trajopt} in \Cref{alg:trajopt_ctr} is solved with \citet{mosek}.

\subsection{Effect of Trust Region Radius $\mathbf{\Sigma}$ and Rollout Horizon $H$}
\label{sec:local-planning-results:hyperparameters}
In this section, we explore how the two parameters affect the performance of our MPC scheme on both the \code{IiwaBimanual} and \code{AllegroHand} systems. We also study how these parameters interplay with different trust region formulations, including R-CTR, CTR and ETR. We measure performance in terms of average tracking errors for the goals selected in \Cref{sec:local-planning-results:goal_selection}.

As we sweep the two parameters, we fix the planning horizon $T = 1$ in \Cref{alg:mpc}, since increasing $T$ does not improve performance (see \Cref{appendix:local-planning-results:planning_horizon} for details). 
Furthermore, we use $n_\mathrm{max}= 2$ \code{IiwaBimanual} and $n_\mathrm{max}=3$ for \code{AllegroHand}. 

\subsubsection{Trust Region Radius}
\label{sec:trust_region_radius}
How does the trust region radius $\mathbf{\Sigma}$ affect the performance of R-CTR, CTR and ETR? To answer this question, we run \Cref{alg:mpc} on both systems with different trust region radii. We set $\mathbf{\Sigma}$ to a constant scaling of the identity matrix, i.e. $\mathbf{\Sigma} = r^{-2} \mathbf{I}$ where $r \in \R_+$ is the radius. The results are illustrated in \Cref{fig:total_errors_plot}a and b, where we measure performance by the average tracking errors for the goals generated in \Cref{sec:local-planning-results:goal_selection}.

On both systems, we observe less performance difference between the trust region variants
as the radius $r$ shrinks. This is because the feasibility constraints \eqref{eq:full_contact_trust_region:primal_feasibility} and \eqref{eq:full_contact_trust_region:dual_feasibility} would less likely become active, simply due to the ellipsoid in \eqref{eq:full_contact_trust_region:etr} being small. However, smaller $r$ also leads to larger tracking errors, as the MPC lacks sufficient authority to progress towards the goal. As $r$ increases, performance initially improves but then starts degrade as $r$ becomes too large, allowing the MPC solution may drift too far from the linearization point. 

For \code{IiwaBimanual}, we find that R-CTR performs the best among the trust region variants: ETR can be overly relaxed, while CTR can be somewhat restrictive. In contrast, on \code{AllegroHand}, all trust region variants behave more similarly, suggesting that \code{AllegroHand} may activate the feasibility constraints less frequently than \code{IiwaBimanual}.
We hypothesize that \code{IiwaBimanual} more frequently encounters the unilateral contact scenario in \Cref{fig:motion-set-visualization}a, whereas \code{AllegroHand} typically operates in the bilateral contact regime depicted in \Cref{fig:three_spheres_ctr}a.

\begin{figure}
\centering\includegraphics[width = 0.48\textwidth]{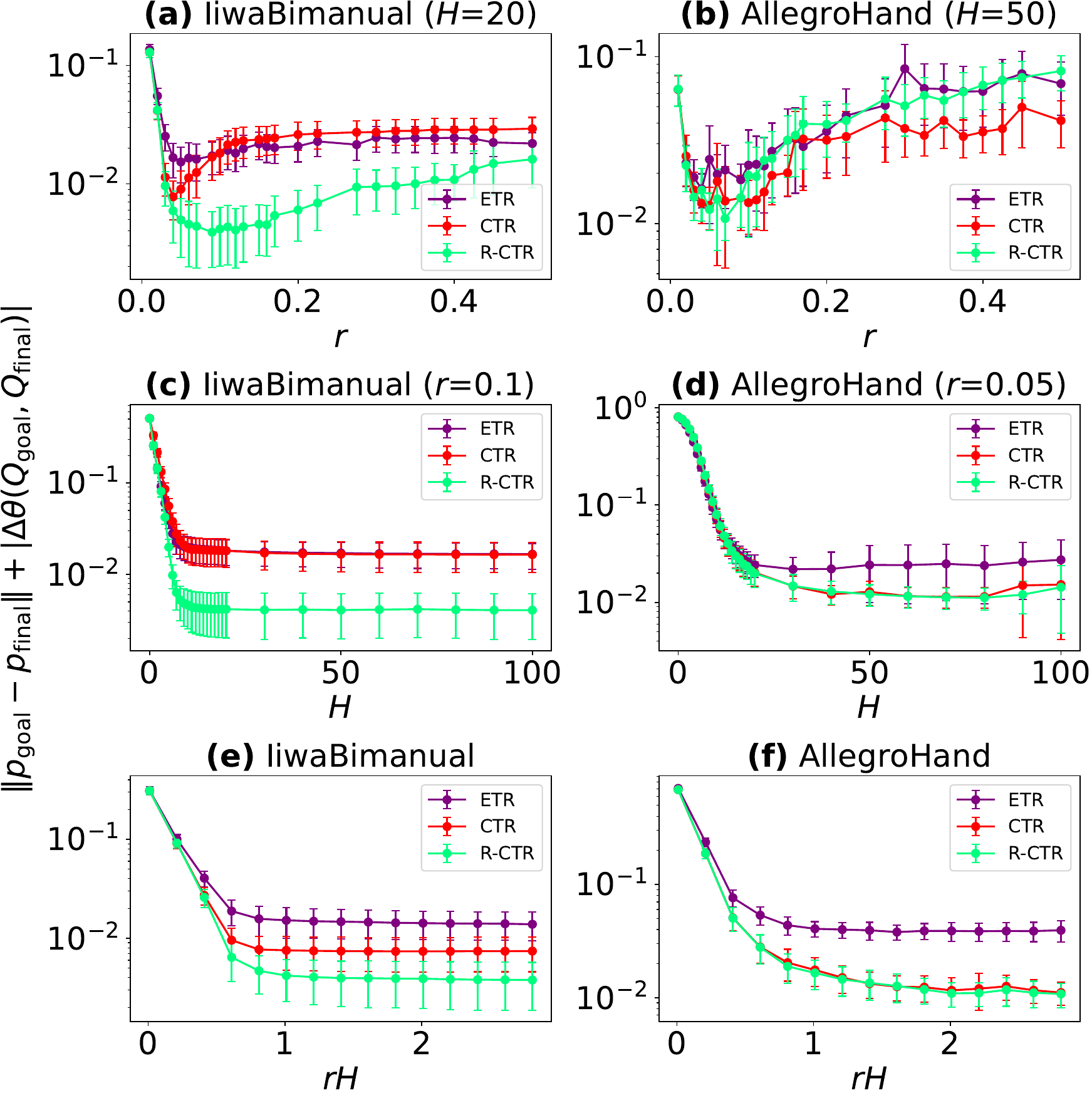}
\caption{How tracking errors, defined as the sum of translation and rotation errors, change with different trust region radius $r$ ($\mathbf{\Sigma} = r^{-2}\mathbf{I}$) and rollout horizon $H$. The plotted curves denote the average error for the goals generated in \Cref{sec:local-planning-results:goal_selection}, and the error bars denote $0.1\sigma$ where $\sigma$ is the standard deviation. For plots (e) and (f), the plotted curves denote the best average error across all combinations of $r$ and $H$ with constant $r \cdot H$.}
\label{fig:total_errors_plot}
\vskip -0.2 true in
\end{figure}

\subsubsection{Rollout Horizon $H$} In \Cref{fig:total_errors_plot}(c) and (d), we examine how performance changes as the MPC rollout horizon $H$ increases. As expected, the tracking errors decrease initially and then plateau for both systems. 
For \code{IiwaBimanual}, R-CTR again outperforms ETR and CTR in terms of average tracking error. On \code{AllegroHand}, both CTR-based approaches display comparable performance; although they still outperform ETR, their respective error bars overlap significantly.

For large $H$ ($\geq 80$), while \code{IiwaBimanual}'s error curves remain stable, \code{AllegroHand}'s curves trend slightly up with larger variances. We attribute this to numerical instability: since \code{AllegroHand} has an order of magnitude more collision pairs than \code{IiwaBimanual}, numerical issues in collision detection and CQDC dynamics evaluation are more likely to occur. Such issues sometimes destabilize MPC, terminating a run with large final errors.

\subsubsection{$r \cdot H$: ``Effective Lookahead''} \label{sec:constant_rh}
The product of the trust region radius $r$ and the planning horizon $H$  represents the \emph{effective lookahead} of MPC. A short, aggressive plan (low $H$, high $r$) might change the system state as much as a long, cautious one (high $H$, low $r$), giving them a similar reach into the future state of the system. For a fixed $r\cdot H$, we expect a more ``informative'' trust region can better guide the system towards the goals, resulting in lower final tracking errors.

In \Cref{fig:total_errors_plot}(e) and (f), we examine how the errors change as a function of the effective lookahead. 
To fairly measure the average error at a fixed value of $r \cdot H$, each data point on these plots represents the best performance achieved for a given $r \cdot H$ value, found by varying $r$ and $H$ while holding $r \cdot H$ constant.

For sufficiently large $r\cdot H$, R-CTR and CTR significantly outperform ETR, suggesting that the incorporation of the feasibility constraints provides better guidance to MPC, even when ETR is used cautiously (low $r$ and high $H$). 

\begin{figure*}[t]
\centering
\subfloat[\centering Iiwa Bimanual]{
    \includegraphics[width=0.98\textwidth]{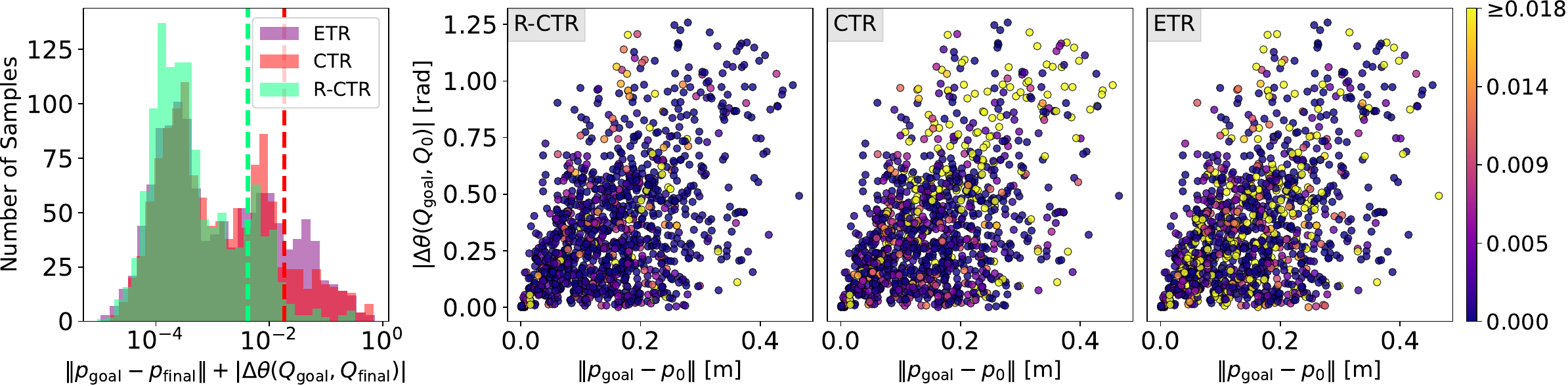}
    \label{fig:planning-comparison:iiwa}
}
\hfill
\subfloat[\centering Allegro Hand]{
    \includegraphics[width=0.98\textwidth]{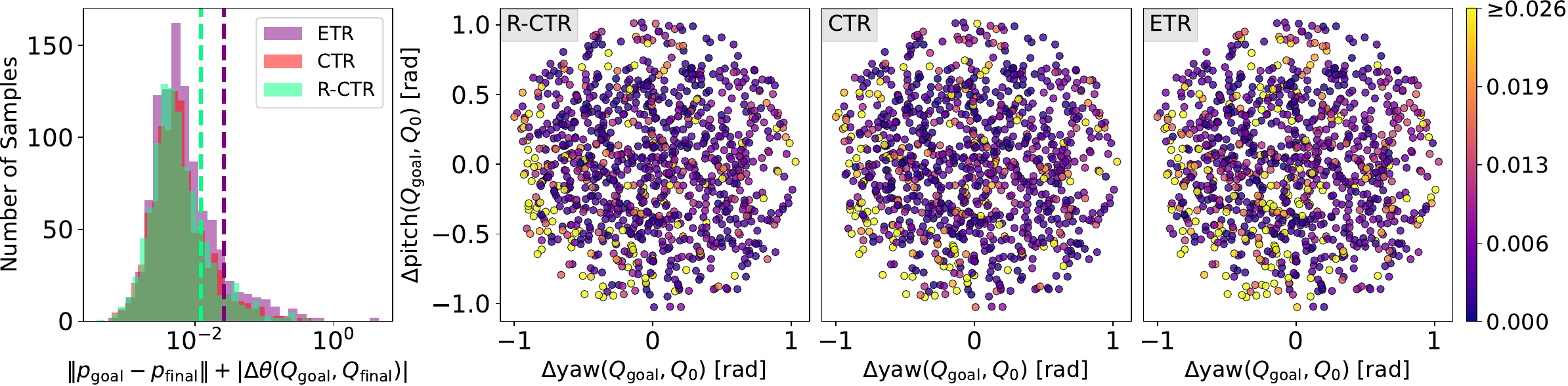}
    \label{fig:planning-comparison:allegro}
}

\caption{Results for running \Cref{alg:mpc} with R-CTR, CTR and ETR constraints on the pairs of goals and initial conditions generated in \Cref{sec:local-planning-results:goal_selection}.
The histogram shows how many samples reached a specified error in the $x$ axis, where error is defined as $\|p_\text{goal} - p_\text{final}\| + |\Delta \theta (Q_\text{goal}, Q_\text{final})|$. 
The dotted lines in the histogram represent the average errors.
Some dotted lines are barely visible due to overlapping (their values are available in \Cref{tab:planning-errors}).
Each dot in the scatter plot corresponds to a pair of an initial condition $q^\mathrm{o}_0$ and a goal $q^\mathrm{o}_\text{goal}$.
Color of the dots is determined by the error: warmer colors indicate bigger errors.
The three scatter plots in each row share the same color scale, whose maximum value is the average of the errors for reaching the goals with ETR. 
(\textbf{a}) For \code{IiwaBimanual}, the $x$ axis of the scatter plots is the position difference between $q^\mathrm{o}_0$ and $q^\mathrm{o}_\text{goal}$, the $y$ axis the angular difference.
(\textbf{b}) For \code{AllegroHand}, as the position difference is small for all goals, the $x$ axis shows the yaw angle difference between $q^\mathrm{o}_0$ and $q^\mathrm{o}_\text{goal}$, and the $y$ axis the pitch angle difference. The roll, pitch and yaw axes are defined in \Cref{fig:allegro_rollout_failures}a.}
\label{fig:planning-comparison}
\vskip -0.15 true in
\end{figure*}

\subsection{Can Our Method Reach Goals?}
Finally, for all initial condition and goal pairs generated in \Cref{sec:local-planning-results:goal_selection}, we run \Cref{alg:mpc} and analyze the results using the metrics in \Cref{sec:local-planning-results:evaluation_metrics}.

\Cref{tab:planning-hyperparameters-allegro-and-iiwa} summarizes a subset of MPC hyperparameters which are chosen based on the discussion in \Cref{sec:local-planning-results:hyperparameters}. Additional hyperparameters are summarized in \Cref{app:hyperparameters_for_lcoal_mpc}.

\begin{table}[h]
\centering
\small
\begin{tabular}{@{}lcccc@{}}
\toprule
System & $T$ & $n_{\max}$ & $r$ & $H$ \\
\midrule
\code{IiwaBimanual} & 1 & 2 & 0.10 & 20 \\
\code{AllegroHand}  & 1 & 3 & 0.05 & 50 \\
\bottomrule
\end{tabular}
\caption{MPC hyperparameters: $T$ and $n_\mathrm{max}$ are defined in \Cref{alg:trajopt_ctr}; $r$ is the trust region radius ($\mathbf{\Sigma} = r^{-2}\mathbf{I}$); $H$ is the MPC rollout horizon in \Cref{alg:mpc}.}
\label{tab:planning-hyperparameters-allegro-and-iiwa}
\vskip -0.15 true in
\end{table}


\vskip -0.2 true in
\subsubsection{Goal Tracking Performance}
As shown in \Cref{fig:planning-comparison}, MPC with R-CTR achieves the lowest average error on both \code{IiwaBimanual} and \code{AllegroHand}. As \Cref{fig:planning-comparison} combines rotation and translation errors, we present them separately in \Cref{tab:planning-errors} to better illustrate their relative magnitudes. Notably, MPC with R-CTR also achieves the lowest variance. 
\begin{table}[h]
\centering
\small                 
\begin{tabular}{@{}lccccc@{}} 
\toprule
 & \multicolumn{2}{c}{\code{IiwaBimanual}} &
   \multicolumn{2}{c}{\code{AllegroHand}} \\
 \cmidrule(lr){2-3}\cmidrule(lr){4-5}
 & Trans.\,[mm] & Rot.\,[mrad] & Trans.\,[mm] & Rot.\,[mrad] \\
\midrule
R-CTR & 2.0\,(11.5) & 2.1\,(10.1) & 2.2\,(5.7) & 9.8\,(26.9) \\
CTR   & 9.4\,(30.9) & 8.9\,(33.3) & 2.2\,(5.6) & 9.9\,(28.3) \\
ETR   & 8.6\,(23.8) & 9.6\,(37.9) & 4.5\,(53.8) & 21.4\,(136.4) \\
\bottomrule
\end{tabular}
\caption{Mean translation and rotation errors for the combined errors in \Cref{fig:planning-comparison}. Each cell displays the mean (std).}
\label{tab:planning-errors}
\vskip -0.1 true in
\end{table}

Upon a closer examination of the \emph{distribution} of errors in \Cref{fig:planning-comparison}, a key advantage of R-CTR is that it produces far fewer trajectories with larger errors. 
For \code{IiwaBimanual}, this advantage is significant: the R-CTR distribution is more tightly clustered at low error values and lacks the heavy tail in CTR and ETR's error distributions.
For \code{AllegroHand}, the result is more nuanced: R-CTR and CTR perform almost identically, but both are a small yet discernible shift towards the left compared to ETR.

We believe the benefit of R-CTR is the most significant when the dual feasibility constraints substantially restrict their corresponding larger ETR. 
As indicated by the examples in \Cref{sec:local-approximation}, this restriction is pronounced in the unilateral contact scenarios (\Cref{ex:pushing-actr} and \Cref{ex:planar_hand_one_finger}), but subtle or even non-existent when the contact points form a stable, force-closure grasp (e.g. \Cref{ex:squeezing-actr}). 

This hypothesis is corroborated by the example trajectories in \Cref{app:local_mpc_examples}.
In the ETR failure cases (\Cref{fig:iiwa_planning_examples:etr_failure} and \Cref{fig:allegro_planning_examples:etr_failure}),
the robots lack a force-closure grasp and cannot move the object in certain directions. The dual feasibility constraint in R-CTR correctly identifies these limitations and prevents the extraneous actions that cause the ETR baseline to fail.
In contrast, when the robots have an effective force-closure grasp on the object (\Cref{fig:iiwa_planning_examples:etr_success} and \Cref{fig:allegro_planning_examples:etr_success}), including the dual feasibility constraints is less crucial for success.

Another key question is why the primal feasibility constraint makes CTR overly restrictive on \code{IiwaBimanual} (hence the larger tracking errors), while having almost no negative effect on \code{AllegroHand} when compared to R-CTR.
We hypothesize that the impact of this constraint is highly dependent on the system's kinematics and the specific contact configuration. As our examples illustrate, the primal feasibility constraint can sometimes severely reduce the trust region volume (e.g. \Cref{fig:box_ball_graze}e) but at other times barely alters it (e.g. the blue nominal configuration in \Cref{fig:three_spheres_ctr}). This suggests the grasping configurations on \code{AllegroHand} are less affected by this particular constraint. A deeper analysis of the role of primal feasibility in different contact scenarios could be an interesting topic for future research.

\subsubsection{Infeasibility}
Encountering infeasible constraints is a common issue that plagues iterative methods for solving the non-convex trajectory optimization problem \eqref{eq:nonlinear-to}. 
However, \Cref{alg:mpc} rarely suffers from this problem.
As shown in \Cref{tab:infeasile_trust_retions}, despite each goal requiring the computation of tens ($T \cdot H \cdot n_\mathrm{max}$) of unique trust regions, the empirical rate of infeasibility is very low. As a result, when generating the tracking results in \Cref{fig:planning-comparison}, we simply terminate MPC prematurely if an infeasible trust region is encountered.
\begin{table}[h]
\centering
\small
\begin{tabular}{lcccc}
\toprule
     &  ETR & CTR & R-CTR & Number of Goals\\
\midrule
\code{IiwaBimanual} & 0 & 0 & 0 & 1233\\
\code{AllegroHand} & 5 & 2 & 3 & 1000\\
\bottomrule
\end{tabular}
\caption{Number of infeasible trust regions encountered while running \Cref{alg:mpc} to reach the goals from \Cref{sec:local-planning-results:goal_selection}.}
\label{tab:infeasile_trust_retions}
\end{table}

\subsubsection{Computation Time}
\label{sec:local-planning-results:reaching_goals:computation_time}
We report wall clock time for the most time-consuming steps in \Cref{alg:mpc}, shown in \Cref{tab:planning_time}. For \code{AllegroHand}, $\mathbf{CtrTrajOpt}$ is slower due to its high-dimensional state and action space, as well as larger number of collision pairs. Among the three trust region variants, ETR is the fastest due to the absence of feasibility constraints. In contrast, CTR is the slowest, as it enforces both primal and dual feasibility. 

There is significant room for improvement for the computation time, which we leave as future work. For instance, we can (i) exploit common sparsity patterns in MPC programs \cite{crocoddyl}, and (ii) prune the trust region constraints that are too far from making contact (our current trust region construction is very conservative, see \Cref{app:hyperparameters_for_lcoal_mpc} for a more detailed discussion). 
\vskip -0.1 true in
\begin{table}[h]
\centering
\small
\setlength{\tabcolsep}{3pt} 
\renewcommand{\arraystretch}{1.0} 
\begin{tabular}{@{}llcc@{}}
\toprule
\multicolumn{2}{l}{Time unit [ms]} &
\code{IiwaBimanual} & \code{AllegroHand} \\
\midrule
Per IGH &                      & 16.8\,(0.05) & 32.0\,(0.6) \\
\midrule
\multirow{3}{*}{Per $\mathbf{CtrTrajOpt}$}
          & R-CTR              &  5.1\,(0.4)  &  99.2\,(28.6) \\
          & CTR                &  7.0\,(0.2)  & 205.7\,(32.5) \\
          & ETR                &  2.4\,(0.2)  &  52.9\,(21.8) \\
\midrule
Per $q_{t+1}=f(q_t,u_0^\star)$ &
                           &  0.9\,(0.05) &   5.92\,(0.38) \\
\bottomrule
\end{tabular}
\caption{Breakdown of the average runtime (wall clock) for \Cref{alg:mpc}. IGH stands for Initial Guess Heuristic. Each cell displays the mean (standard deviation).}
\label{tab:planning_time}
\vskip -0.15 true in
\end{table}

\section{Stabilization under Second-Order Dynamics}\label{sec:stabilization-results}
In \Cref{sec:local-planning-results}, we presented results for running MPC on the CQDC dynamics \eqref{eq:q_dynamic_f}. However, although the CQDC dynamics removes the force-at-a-distance effect from smoothing, it is still different from real contact physics: not only are second-order effects ignored, it also introduces a small gap between objects undergoing sliding friction \cite[Section IV-A2]{pangsimulator}, which is an artifact known as ``hydroplaning" \cite[Appendix B]{underactuated} shared by contact dynamics formulations that utilize Anitescu's convex relaxation for contact dynamics \cite{anitescu}. 

In this section, we would like to understand how the gap between the CQDC and real physics affect MPC performance in more realistic settings. Specifically, we focus on answering the question: can the MPC controller using the CQDC dynamics perform closed-loop stabilization (i) on high-fidelity simulated second-order dynamics, and (ii) on hardware?

\subsection{Implementation Heuristics for Second-order Dynamics}
Fast forwarding to the goal-tracking results in \Cref{fig:second-order-mpc-comparison}, we can see the performance degradation caused by the physics gap is significant. Rather than re-implementing \Cref{alg:mpc} with second-order dynamics and a contact model without the hydroplaning artifact, we would like to understand how far we can push CQDC-based MPC through practical adjustments.
Specifically, we introduce two modifications to \Cref{alg:mpc} which have empirically boosted MPC performance on second-order dynamics (which we denote by the symbol $f_\text{2nd}$)
\footnote{With the term ``second-order dynamics" and the symbol $f_\text{2nd}$, we refer to both simulated second-order dynamics and hardware dynamics.}. 
The modified MPC algorithm is summarized in \Cref{alg:mpc_real_dynamics}.

Firstly, instead of running MPC as quickly as possible, we compute a longer trajectory (usually 1-2 seconds) using \Cref{alg:mpc} for multiple steps ($H > 1$) (\Cref{alg:mpc_real_dynamics:call_mpc}), roll out $L_u$ in its entirety on the second-order dynamics (\Cref{alg:mpc_real_dynamics:rollout}), and repeat. 
This resembles the ``action chuncking'' scheme ubiquitous in robot learning, which has been shown to reduce compounding errors during rollouts when the policy (or controller), rather than the open-loop-stable dynamics of the robot-object system, is the primary contributor to compounding errors~\cite{zhang2025imitation}. 
In our case, the MPC controller indeed injects compounding errors due to the gap between the CQDC model used for planning and the second-order nature of $f_\text{2nd}$.
See \Cref{app:2nd_order_mpc:hyperparameters} for more details about the effect of $H$ on closed-loop tracking performance.

Secondly, we apply the initial guess heuristics more frequently to compensate for the side effects of the hydroplaning artifact.
When running trajectories generated by \Cref{alg:mpc} open-loop on second-order dynamics, we observed that the robot sometimes loses contact with the object. We hypothesized that this is caused by the hydroplaning artifact of the CQDC dynamics, which allows the robot to exert sliding friction forces on the object without actually touching it. As the hydroplaning artifact does not exist for sticking friction \cite{anitescu}, we can alleviate hydroplaning by regularly pulling the robot back into contact with object using the initial guess heuristics (\Cref{sec:penetration-finder}). Specifically, in \Cref{alg:mpc}, instead of only applying the initial guess heuristics at the first iteration (\Cref{alg:mpc:begin_if} to \Cref{alg:mpc:later_initialization}), we apply it at every iteration to the corresponding $q_t^\mathrm{a}$.

We denote this modified version of MPC with the symbol $\mathbf{MPC}_\text{Proj}$ in \Cref{alg:mpc_real_dynamics:call_mpc} of \Cref{alg:mpc_real_dynamics}, where the subscript denotes the projection back to the contact manifold, which is the effect of applying the initial guess heuristics.
\vskip -0.15 true in
\begin{algorithm}
\caption{MPC with 2nd-order Dynamics $f_\text{2nd}$}\label{alg:mpc_real_dynamics}
\textbf{Input:} Initial state $q_0$, goal state $q_\text{goal}$, planning horizon $T$, iterations limit $n_\mathrm{max}$, MPC rollout horizon $H$, number of re-plans $N$\;
$i \leftarrow 0$\;
\While {$q_\mathrm{goal}$ not reached $\mathrm{and}$ $i < N $} {
    $q_{0:H}^\star, u_{0:H-1}^\star \gets \mathbf{MPC}_\text{Proj}(q_0, q_\text{goal}, T, n_\text{max}, H)$\; \label{alg:mpc_real_dynamics:call_mpc}
    $q_\text{final} \gets$ Apply $u_{0:H-1}^\star$ to $f_\text{2nd}$\; \label{alg:mpc_real_dynamics:rollout}
    $q_0 \gets q_\text{final}$\;
    $i \leftarrow i + 1$
}
\end{algorithm}
\vskip -0.25 true in

\begin{figure*}[t]
\centering
\subfloat[\centering \code{IiwaBimanual}: implementation heuristics ablation, all using R-CTR.]{
    \includegraphics[width=0.98\textwidth]{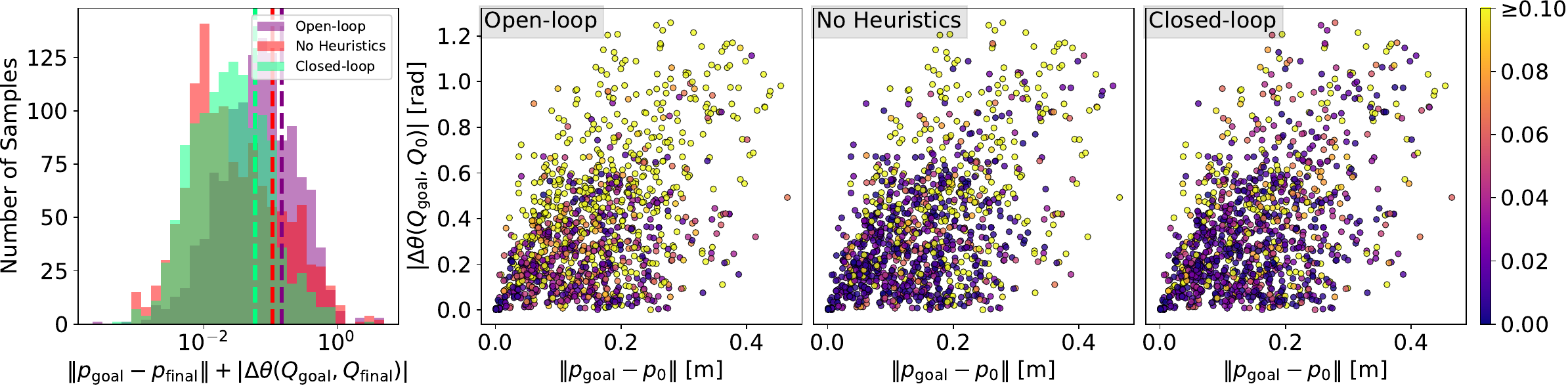}
    \label{fig:second-order-mpc-comparison:iiwa}
}
\hfill
\subfloat[\centering \code{IiwaBimanual}: closed-loop trust region comparison. R-CTR results are the same as the closed-loop results in (\textbf{a}).]{
    \includegraphics[width=0.98\textwidth]{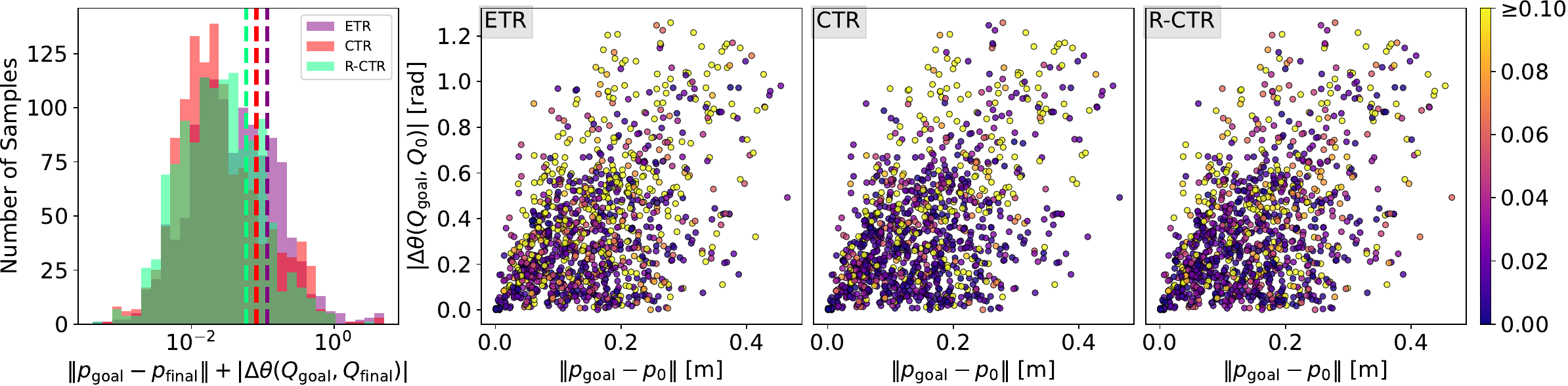}
    \label{fig:second-order-mpc-comparison:iiwa-trust-regions}
}
\hfill
\subfloat[\centering \code{AllegroHand}: implementation heuristics ablation, all using R-CTR.]{
    \includegraphics[width=0.98\textwidth]{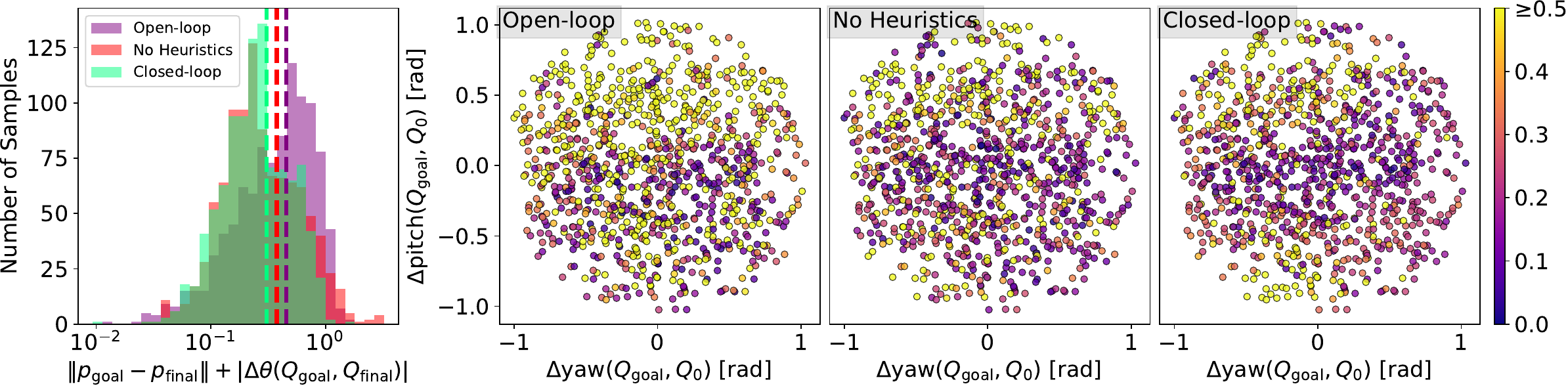}
    \label{fig:second-order-mpc-comparison:allegro}
}
\hfill
\subfloat[\centering \code{AllegroHand}: closed-loop trust region comparison. R-CTR results are the same as the closed-loop results in (\textbf{c}).]{
    \includegraphics[width=0.98\textwidth]{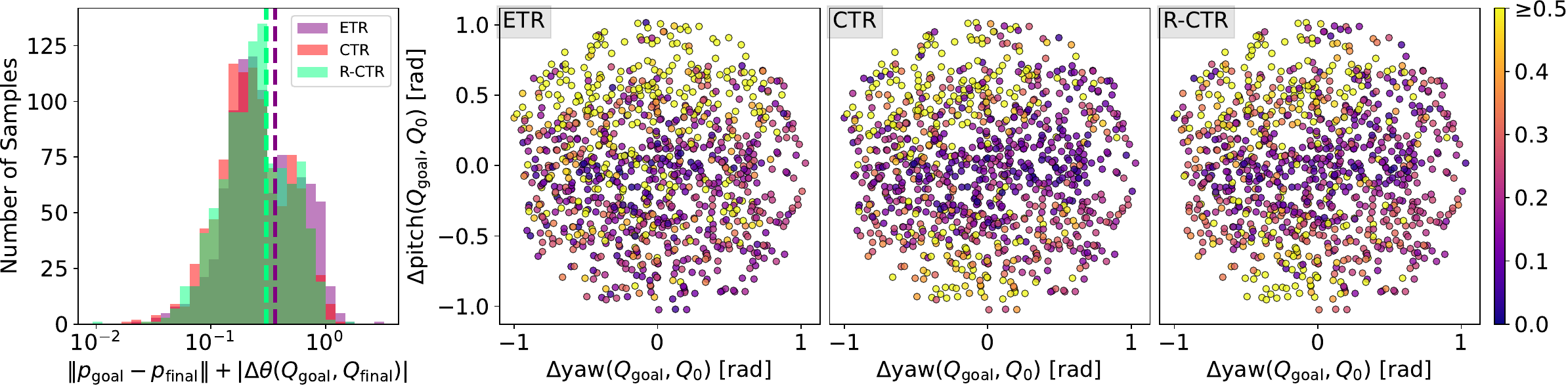}
    \label{fig:second-order-mpc-comparison:allegro-trust-regions}
}
\caption{Results for running the five variants of \Cref{alg:mpc_real_dynamics}, namely open-loop, no heuristics, close-loop, and closed-loop with CTR and ETR, on the pairs of goals and initial conditions generated in \Cref{sec:local-planning-results:goal_selection}. All figure elements follow the same definitions used in \Cref{fig:planning-comparison}. }
\label{fig:second-order-mpc-comparison}
\vskip -0.1 true in
\end{figure*}

\subsection{Experiment Setup}
We evaluate \Cref{alg:mpc_real_dynamics} using the same final-distance-to-goal metric proposed in \Cref{sec:local-planning-results:evaluation_metrics} on both \code{IiwaBimanual} and \code{AllegroHand}. Hyperparameter choices are summarized in \Cref{app:2nd_order_mpc:hyperparameters}. 
For each system, we run four ablation variants of our algorithm, each omitting a crucial component.
\begin{itemize}
    \item \textbf{Closed-loop}. This is running \Cref{alg:mpc_real_dynamics} as is with multiple re-plans ($N > 1$) and R-CTR. 
    \item \textbf{No Heuristics}. We keep using R-CTR, but replace $\mathbf{MPC}_\text{proj}$ with the $\mathbf{MPC}$ defined in \Cref{alg:mpc}. In other words, we apply the initial guess heuristics only at the first iteration in MPC, instead of at every iteration.
    \item \textbf{Open-loop}. We keep using R-CTR, but reduce feedback rate to the extreme: there is no re-planning at all. Instead, we plan a long trajectory that gets the object all the way to the goal and run it open-loop. 
    \item \textbf{Closed-loop with CTR/ETR}. We study how the other trust region variants affect closed-loop performance. 
\end{itemize}

\textbf{Simulation Setup}.
We incorporate the \code{IiwaBimanual} and \code{AllegroHand} systems presented in \Cref{sec:local-planning-results} into Drake \cite{drake}, which employs a complete second-order dynamics model along with state-of-the-art contact solvers \cite{castro2023theory} that do not have the hydroplaning effect. We maintain identical collision geometries, robot controller stiffness, and friction coefficients between the CQDC dynamics and Drake.

\textbf{Hardware Setup}.
To minimize the sim-to-real gap, we carefully aligned key parameters between our simulation model in Drake and the physical system. Specifically, we ensured a close correspondence for: (i) the collision geometries used for planning, (ii) the feedback gains of the robot's low-level controllers, and (iii) the inertial and friction coefficients. This alignment was crucial for the successful transfer of our method to hardware. 


As the focus of this work is state-based planning and control for contact-rich manipulation, we measure the object pose utilizing the OptiTrack motion capture system. Passive spherical reflectors are sufficient for the bucket in \code{IiwaBimanual}. For the cube in \code{AllegroHand}, in order to not alter the collision geometry with external markers, we embed infrared light-emitting active markers in the cube
In order to minimize the effect of marker occlusion from the hand, we adopt a similar marker placement as the cube used by \citet{openai-hand-demo}. 
In addition, we post-process the cube pose by solving an inverse kinematics problem that projects the raw OptiTrack measurement out of penetration with the Allegro hand. In practice, our pose estimation setup proved reliable enough to forgo additional measures for addressing uncertainty.

\subsection{Results \& Discussion}
We plot the results of our experiments in \Cref{fig:second-order-mpc-comparison}, and display the translation and rotation errors separately in \Cref{tab:mpc-errors-2nd-order}. We discuss some of our findings from the experiments.

\begin{table}
\centering
\small
\setlength{\tabcolsep}{3pt}  
\renewcommand{\arraystretch}{1.05} 
\begin{tabular}{@{}lcccc@{}}
\toprule
 & \multicolumn{2}{c}{\code{IiwaBimanual}}
 & \multicolumn{2}{c}{\code{AllegroHand}} \\
\cmidrule(lr){2-3}\cmidrule(lr){4-5}
 & Trans.\,[m] & Rot.\,[rad] & Trans.\,[m] & Rot.\,[rad] \\
\midrule
\multirow{2}{*}{Open-loop Sim}
   & 0.033 & 0.115 & 0.011 & 0.443 \\
   & (0.152) & (0.191) & (0.007) & (0.286) \\
\addlinespace
\multirow{2}{*}{No-Heuristics Sim}
   & 0.041 & 0.066 & 0.015 & 0.358 \\
   & (0.156) & (0.153) & (0.017) & (0.335) \\
\addlinespace
Closed-loop Sim & 0.020 & 0.039 & 0.014 & 0.290 \\
\; \; with R-CTR & (0.065) & (0.079) & (0.009) & (0.197) \\
\midrule
Closed-loop Sim & 0.040 & 0.076 & 0.013 & 0.349 \\
\; \; with ETR & (0.167) & (0.214) & (0.010) & (0.267) \\
\addlinespace
Closed-loop Sim & 0.031 & 0.051 & 0.013 & 0.289 \\
\; \; with CTR & (0.122) & (0.164) & (0.010) & (0.211) \\
\midrule
\multirow{2}{*}{Open-loop HW}
   & 0.016 & 0.056 & 0.014 & 0.323 \\
   & (0.019) & (0.064) & (0.007) & (0.256) \\
\addlinespace
\multirow{2}{*}{Closed-loop HW}
   & 0.013 & 0.024 & 0.018 & 0.258 \\
   & (0.016) & (0.038) & (0.011) & (0.215) \\
\bottomrule
\end{tabular}
\caption{Translation and rotation errors for simulation (Sim) and hardware (HW) experiments. Each cell displays the mean
(standard deviation).}
\label{tab:mpc-errors-2nd-order}
\vskip -0.2 true in
\end{table}

\subsubsection{\textbf{Closed-loop} vs. \textbf{Open-loop}} 
Closed-loop MPC clearly outperforms open-loop on both systems. This suggests that feedback is still crucial to reduce object tracking errors despite the gap between CQDC and second-order dynamics.

\subsubsection{\code{AllegroHand} vs. \code{IiwaBimanual}}
The average errors from closed-loop MPC is much smaller on \code{IiwaBimanual}, which we attribute to the Allegro task's inherently difficulty. As reaching goal poses on the Allegro often requires lifting the cube (\Cref{fig:allegro_rollout_failures}b), any slip can cause the cube to slide back to the palm, resetting progress and resulting in large errors. In contrast, on \code{IiwaBimanual}, the bucket remains on a tabletop, so slipping merely slows progress without eliminating it.

Challenges due to slipping is also evident in the distribution of tracking errors in the scatter plots in \Cref{fig:second-order-mpc-comparison:allegro}. Under the closed-loop MPC, Allegro's tracking error remains low for large yaw angles when pitch angle is near zero, because the cube's bottom face stays close the palm for such goals. In other goal orientations requiring more lift, the cube is less supported by the palm, increasing the risk of losing grip. 

\subsubsection{\textbf{Closed-loop} vs. \textbf{No Heuristics}}
Although the initial guess heuristics only slightly reduces the mean error (see the histograms in \Cref{fig:second-order-mpc-comparison}), it significantly lower the incidence of large errors (i.e., those close to or exceeding 1). Such large errors usually happen when the object falls off the supporting surface (\Cref{fig:iiwa_rollout_failures}d and \Cref{fig:allegro_rollout_failures}d) after contact is lost during MPC. By regularly applying the initial guess heuristics, the robot is constantly pulled back to the object, greatly reducing the chances of losing contact. As demonstrated in \Cref{fig:iiwa_rollout_failures}b and \Cref{fig:allegro_rollout_failures}c, although lost contact occurred and caused some error, the object did not fall, thanks to the initial guess heuristics.

Moreover, applying the initial guess heuristics results in cleaner, shorter trajectories. Once contact is lost during the execution of \Cref{alg:mpc_real_dynamics}(e.g. \Cref{fig:iiwa_rollout_failures}), the subsequent $\mathbf{MPC}$ step (\Cref{alg:mpc_real_dynamics:call_mpc}) typically starts by commanding a large robot movement to reestablish contact. However, under the \textbf{closed-loop} variant of \Cref{alg:mpc_real_dynamics} where the initial guess heuristics is applied more frequently, this movement is significantly smaller.

In fact, for each pair of initial and goal configurations, we can compute the path length traveled by the robot as follows:
\begin{equation}
    \text{Robot Path Length} \;=\; \int_{0}^{t_\text{final}} \|\dot{q}^\mathrm{a}(t)\|_2\, \mathrm{d}t,
\end{equation}
where $t_\text{final}$ is the duration of the experiment and ${q}^\mathrm{a}(t): \R \rightarrow \R[n_\mathrm{a}]$ the robot trajectory. As shown in \Cref{fig:robot_path_length_heuristics_vs_no}, \textbf{closed-loop} MPC results in shorter robot path lengths.

\subsubsection{\textbf{R-CTR} vs. \textbf{ETR/CTR}}
The performance comparison between the different trust region formulations under second-order dynamics follows the same general trend observed in our local MPC experiments (\Cref{fig:planning-comparison}). R-CTR remains the top performer on both systems, achieving the lowest average error and exhibiting a tighter error distribution with fewer large-error outliers.

On \code{IiwaBimanual}, the advantage of R-CTR is more significant, showing a larger improvement over both the more restrictive CTR and the baseline ETR. For \code{AllegroHand}, the result is more nuanced: while R-CTR and CTR perform almost identically, both offer a clear improvement over ETR, which produces a visibly wider error distribution.

\subsubsection{Hardware Experiments}
Using voxel-grid down-sampling, we selected a subset of 50 uniformly-spaced pairs of initial states and goals for each system from the pairs used for simulation experiments. 
As shown in the final two rows of \Cref{tab:mpc-errors-2nd-order}, the average errors of hardware experiments are highly comparable to our simulation results. 
This strong sim-to-real correspondence is expected, given our careful alignment of the simulation and hardware environments.
All hardware and simulation videos are available in an interactive plot on our project website.

\begin{figure}
\centering\includegraphics[width = 0.48\textwidth]{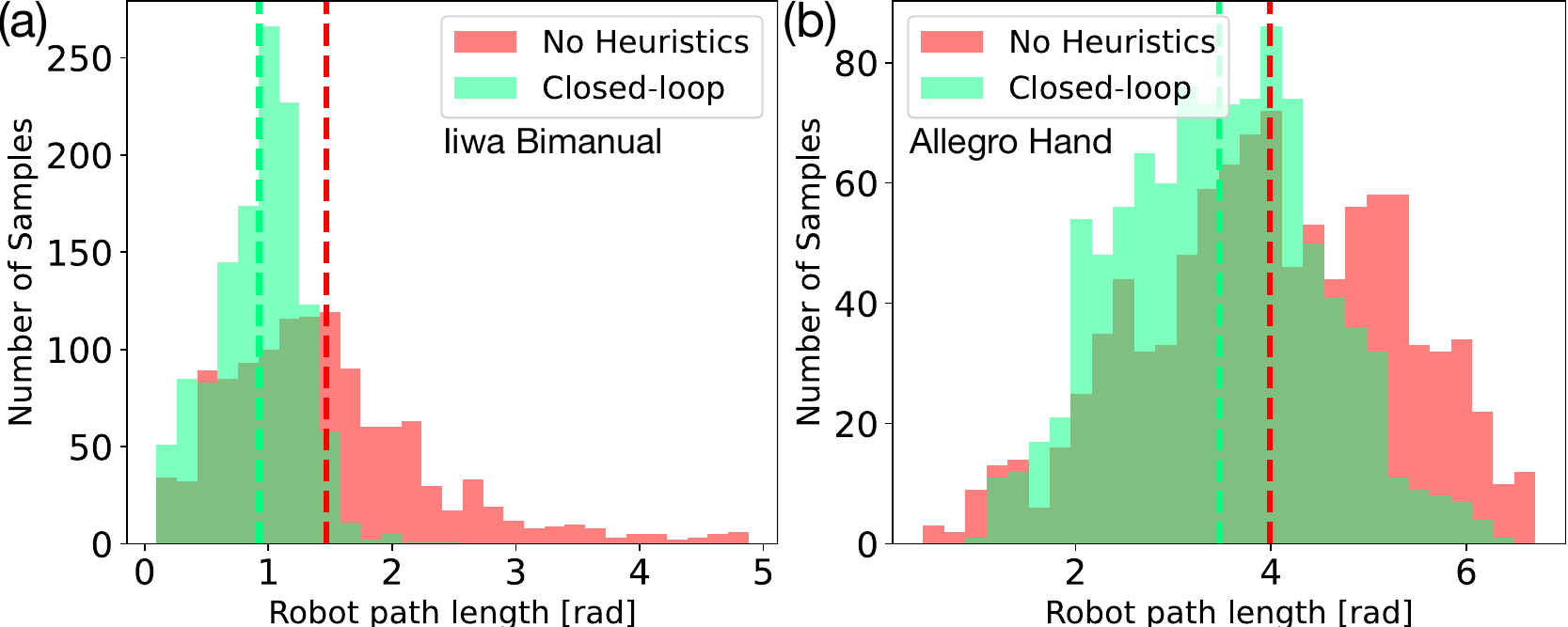}
\caption{Robot Path Length Comparison between the \textbf{Closed-loop} and \textbf{No Heuristics} variants of \Cref{alg:mpc_real_dynamics}. Dotted vertical lines indicate the mean values of each color-coded sample set.} 
\label{fig:robot_path_length_heuristics_vs_no}
\vskip -0.15 true in
\end{figure}

\begin{figure}
\centering\includegraphics[width = 0.40\textwidth]{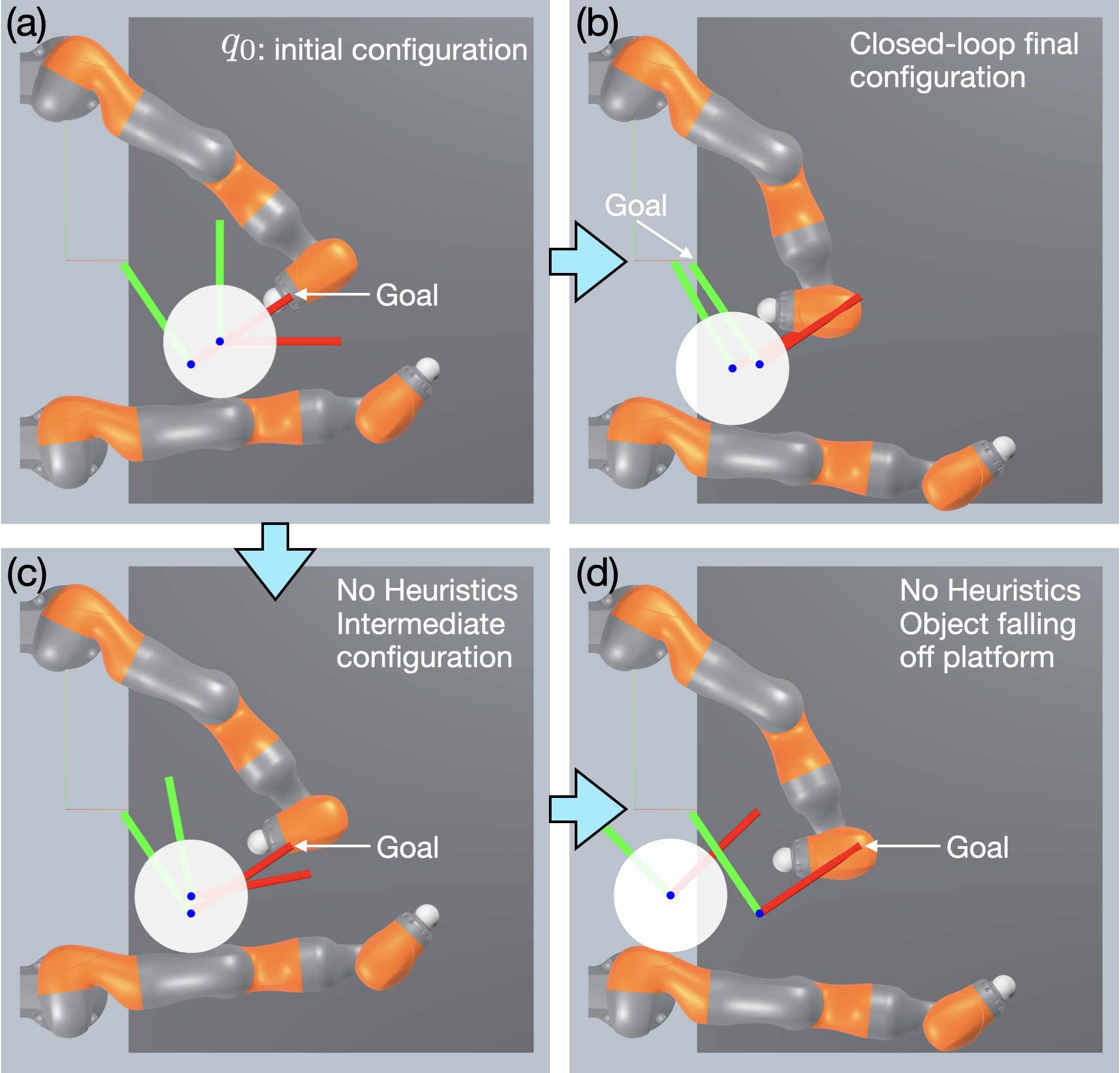}
\caption{Failures of \Cref{alg:mpc_real_dynamics} on \code{Iiwabimanual}. The dark gray box represents the table top which supports the bucket. The light gray area represents empty space.} 
\label{fig:iiwa_rollout_failures}
\vskip -0.15 true in
\end{figure}

\begin{figure}
\centering\includegraphics[width = 0.45\textwidth]{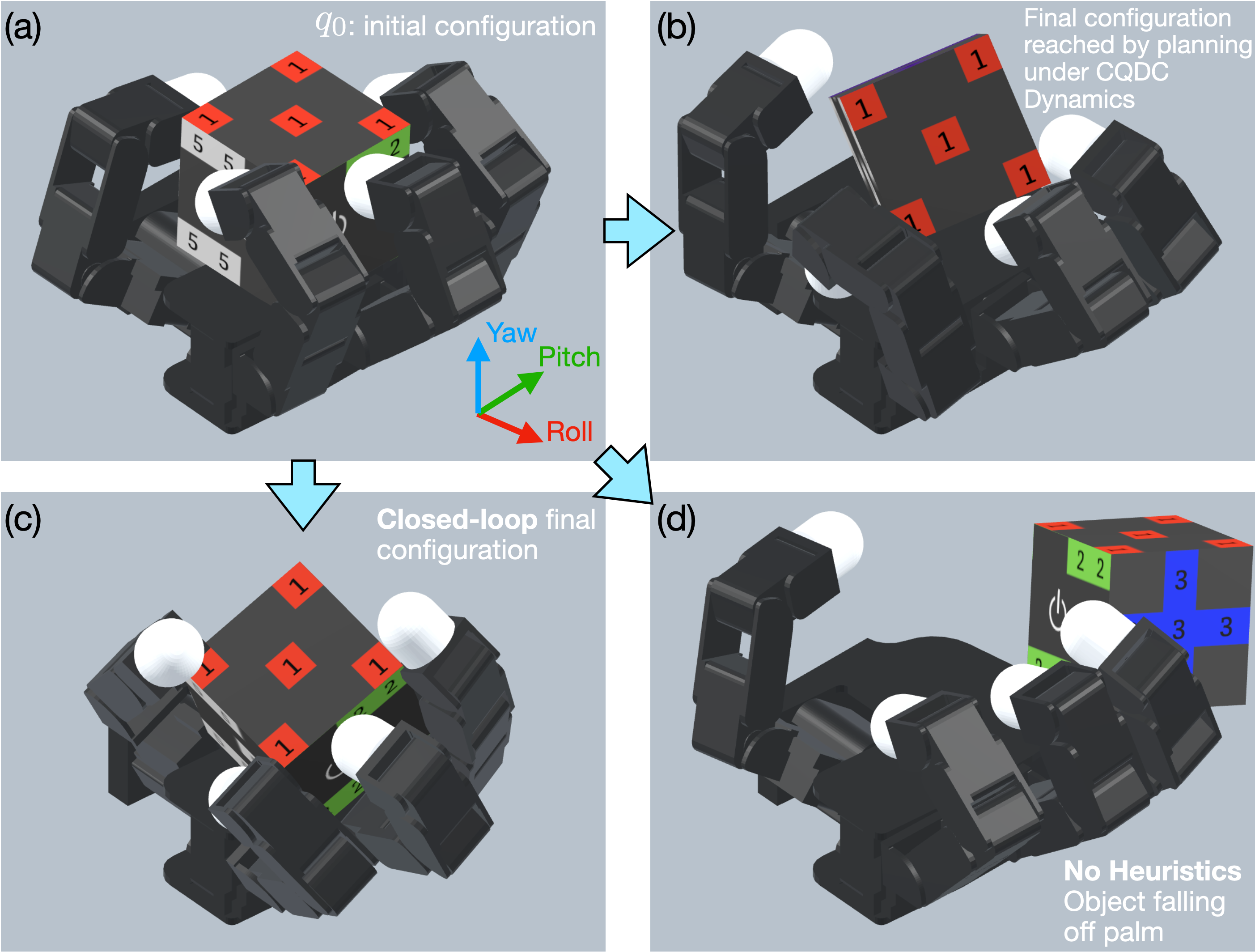}
\caption{Failures of \Cref{alg:mpc_real_dynamics} on \code{AllegroHand}.} 
\label{fig:allegro_rollout_failures}
\vskip -0.15 true in
\end{figure}

\section{Generating Goal-conditioned Contact Configurations}\label{sec:actuator-placement}
Through our method presented in \Cref{sec:local-planning-control}, as well as the results in \Cref{sec:local-planning-results} and \Cref{sec:stabilization-results}, we have shown how our proposed MPC allows successful stabilization to \emph{local} goals. 

However, relying solely on finite-horizon MPC formulations can be limiting if the goal is sufficiently ``far away''. Specifically, goals that are more difficult to reach require the controller to momentarily make suboptimal actions in order to be optimal in the long run. Such problems inherently cannot be addressed solely by greedy finite-horizon MPC, as illustrated by key examples in \Cref{fig:exploration-difficulty}. To tackle such problems, classical motion planning and RL literature either resort to sophisticated exploration strategies or utilize dynamic programming to estimate a long-horizon value function.

To efficiently conduct long-horizon planning for contact-rich manipulation, we utilize two key insights. The first insight comes from dynamic programming: knowledge of key intermediate states allows us to decompose a difficult long-horizon problem into multiple shorter subproblems in which greedy strategies can be more effective. For the planar-pushing example in \Cref{fig:exploration-difficulty}, consider the configuration in which the ball is at the bottom of the box as opposed to the top. If the planner had (i) seen this configuration before and (ii) known that this configuration is significantly more advantageous for pushing the box upward using local control, it can attempt to go towards this advantageous configuration first before attempting to establish contact with the box. 

Next, we use the fact that because we have full control of the actuated DOFs, it is possible to leverage collision-free motion planning to move from one configuration to another without reasoning about contact dynamics (provided that a feasible path exists). This allows us to abstract away the details of collision-free motion planning at when we plan contact interactions, and assume that the robot can teleport from one configuration when the object is at a stable configuration. For instance, in the planar-pushing example in \Cref{fig:exploration-difficulty}, we can freely search for a good pusher configuration to push the box upwards without having to worry about the details of which path the robot has to take.

In this section, we leverage the CTR and our proposed MPC method to first search for key configurations for reaching given goals. As these configurations are almost always on the contact manifold, we refer to them as contact configurations. In the next section, we will show how we can chain these contact configurations together.

\begin{figure}
\centering\includegraphics[width = 0.40\textwidth]{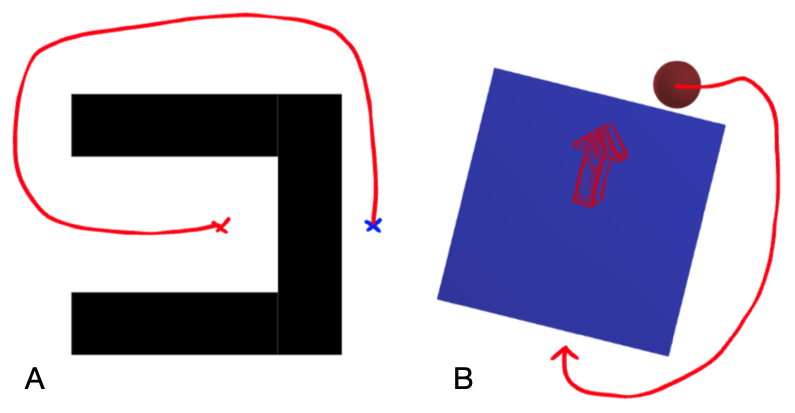}
\caption{Difficult cases for greedy finite-horizon MPC that require long-horizon exploration capabilities. A. A classical example for path-finding where the robot needs to go from the red X to the blue X. Note that the robot needs go back first, momentarily suffering an increase in Euclidean distance, before it can proceed towards the blue X and reach the goal. B. A similar planar-pushing example in manipulation, where the goal is to push the box upwards. From this configuration, the agent must break contact and travel all around to the back of the box, causing no effect in making progress towards the goal in terms of box movement, in order to push the box up.} 
\label{fig:exploration-difficulty}
\vskip -0.2 true in
\end{figure}

\subsection{Problem Specification}
Given a current configuration of the object $\qu$ and some goal configuration $q^\mathrm{o}_g$, we ask: what is a good robot configuration $q^\mathrm{a}$, if we want the resulting full configuration $q=(q^\mathrm{a},q^\mathrm{u})$ to be \emph{advantageous} for driving $\qu$ to $q^\mathrm{o}_g$ with local MPC? Formally, we write this optimization problem as 
\begin{subequations}\label{eq:nonlinear-grasp-sampling}
\begin{align}
    \min_{q^\mathrm{a}}\;\; & C(q^\mathrm{a}; \qu ,q^\mathrm{o}_g)\label{eq:nonlinear-grasp-sampling:cost} \\
    \text{s.t.}\;\; & q_{lb}^\mathrm{a} \leq q^\mathrm{a} \leq q_{ub}^\mathrm{a}, \label{eq:nonlinear-grasp-sampling:joint-limits} \\
    & \phi_i(q^\mathrm{a}, q^\mathrm{o}) \geq 0 \quad \forall i\label{eq:nonlinear-grasp-sampling:non-penetration},
\end{align}
\end{subequations}
where \eqref{eq:nonlinear-grasp-sampling:joint-limits} are joint-limit constraints, \eqref{eq:nonlinear-grasp-sampling:non-penetration} enforce non-penetration constraints for every collision pair indexed by $i$, and $\eqref{eq:nonlinear-grasp-sampling:cost}$ is our cost criteria for judging how fit $q^\mathrm{a}$ is in driving $\qu$ to $q^\mathrm{o}_{g}$. Our cost $C$ consists of two terms: the finite-horizon value function of the MPC policy (\Cref{sec:finite-horizon-value-function}), and a regularization term for robustness (\Cref{sec:robustness-regularizer}). 

\subsubsection{Finite-Horizon Value Function of the MPC Policy}
\label{sec:finite-horizon-value-function}
A natural way to query the fitness of $q^\mathrm{a}$ is to utilize the cost of our finite-horizon MPC problem \eqref{eq:nonlinear-to:cost} incurred from the MPC rollout defined in \Cref{alg:mpc},
\begin{subequations}\label{eq:value-function}
\begin{align}
    V(q^\mathrm{a};\qu,&q^\mathrm{o}_g)= \|q^\mathrm{o}_g - q^\mathrm{o}_T\|^2_\mathbf{Q} + \sum^{T-1}_{t=0} \|u_t - u_{t-1}\|^2_\mathbf{R},\\
    \text{s.t.} \quad     & q_{0:T},u_{0:T-1} = \textbf{MPC}(q_0, q^\mathrm{o}_g, T, n_\mathrm{max}, H),\\ & q_0 = (q^\mathrm{a},q^\mathrm{o}).
\end{align}
\end{subequations}
We note that although MPC rollout in \Cref{alg:mpc} accepts a full configuration $q_g$ as a goal, we can give it unactuated configurations only ($q^\mathrm{o}_g$) by setting the cost terms of the actuated objects to zero, $\mathbf{Q}_a=\mathbf{0}$, as we did in \Cref{sec:local-planning-results:evaluation_metrics}. In addition, due to the efficiency of the MPC controller, the finite-hozion value function can be quickly queried online. However, the landscape of this value function, as visualized for a simple problem in \Cref{fig:pusher-t-visualization}, is multi-modal with many local minima and maxima. This hints at the necessity of global optimization when we search for minimizers of \eqref{eq:value-function}.  

\begin{figure*}[t]
\centering\includegraphics[width = 0.98\textwidth]{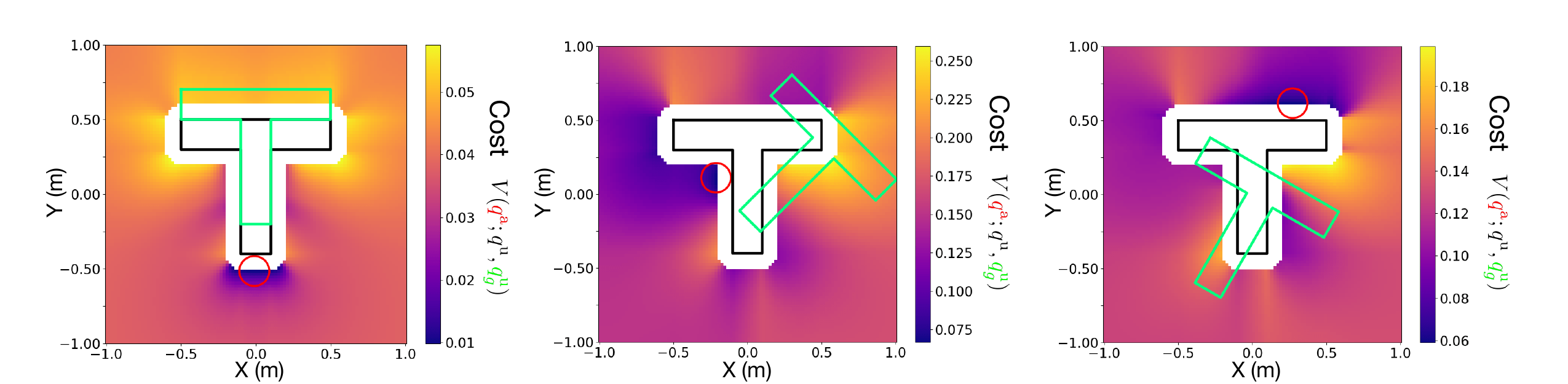}
\caption{Visualization of the landscape of \eqref{eq:value-function} for the planar pusher-T example inspired by \citet{chi2024diffusionpolicy}. The black T corresponds to the current object configuration $\qu$, and the green T the goal object configuration $q^\mathrm{o}_{g}$. The landscape corresponds to the position of the round pusher $q^\mathrm{a}$, colored with the cost function $V(\qa;\qu,q^\mathrm{o}_g)$. The red pusher configuration corresponds to the global optima of the landscape.}
\label{fig:pusher-t-visualization}
\vskip -0.25 true in
\end{figure*}

\subsubsection{Robustness Regularizer}\label{sec:robustness-regularizer}

Is the value function in \eqref{eq:value-function} sufficient as a cost? Although it correctly evaluates the fitness of a given $\qa$ in terms of closed-loop goal reaching, we found that there may be cases where multiple configurations are equally fit, yet one configuration provides more robustness compared to others. 

For example, consider the task of pushing the ball slightly to the right in \Cref{fig:robustness}. Both configurations are nearly equal in the goal-reaching cost (see \Cref{tab:robustness}); yet, the configuration in \Cref{fig:robustness}a would be preferred in practice, as both fingers can be used to reject small disturbances.

To formalize this notion of robustness, we take inspiration from classical grasping metrics \cite{ferrari-canny, hongkai-sdp, frogger}, as well as the connection between the RA-CTR and the classical wrench set in \Cref{sec:mechanics-derivation}. Specifically, we adopt a worst-case metric \cite{ferrari-canny} that reasons about the maximum wrench a grasp can resist along any direction. Geometrically, this quantity corresponds to the maximum-inscribed sphere in the wrench set.

Formally, consider a unit vector $v\in\mathbb{R}^{n_{q_\mathrm{u}}}$. Then, the radius of the maximum-inscribed sphere within the wrench set can be described as the following mini-max problem,
\begin{subequations}
\begin{align}
    r(\qa, \qu) \coloneqq \min_{v}&\max_{r\in\mathbb{R}} \; r \\
    \text{s.t.}  \;& \|v\| = 1 \\
                   & rv \in \mathcal{W}_{\mathbf{\Sigma},\kappa}^\mathcal{A}(\bar{q}=(\qa, \qu),\bar{u}=q^\mathrm{a}).
\end{align}
\end{subequations}

To evaluate this quantity, we first make a polytopic approximation of the wrench set $\mathcal{W}_{\mathbf{\Sigma},\kappa}^\mathcal{A}$ by sampling from the RA-CTR using a rejection sampling scheme. This scheme (i) first samples from the ETR ellipsoid and (ii) rejects samples that do not obey feasibility constraints (this is the same procedure which generated the samples in \Cref{fig:motion-set-visualization}). Then, we fit a convex hull to these samples using the \texttt{Qhull} library. 

Once we have a polytopic representation of the convex set in $H$-rep, (i.e. $\{z|\mathbf{a}_{\mathcal{W},i}^\top  z + b_{\mathcal{W},i} \leq 0\;\forall i\}$), we can find $r$ by solving a variant of the Chebyshev center problem,
\begin{subequations}
\label{eq:chebyshev_center}
\begin{align}
    \max_{r\geq 0} \;& r \\
    \text{s.t.}\; & \|a_i\|_2 r + b_i \leq 0 \quad \forall i.
\end{align}
\end{subequations}

\subsubsection{Total Cost}
We combine the two costs in \Cref{sec:finite-horizon-value-function} and \Cref{sec:robustness-regularizer} use a weighting term $\alpha\in\mathbb{R}_{\geq 0}$: 
\begin{equation}
    C(\qa;\qu,q^\mathrm{o}_g) = V(\qa;\qu,q^\mathrm{o}_g) - \alpha r(\qa;\qu)^2,
\end{equation}
where we note that the radius is subtracted as it is a reward. 

\begin{figure}[h]
\centering\includegraphics[width = 0.48\textwidth]{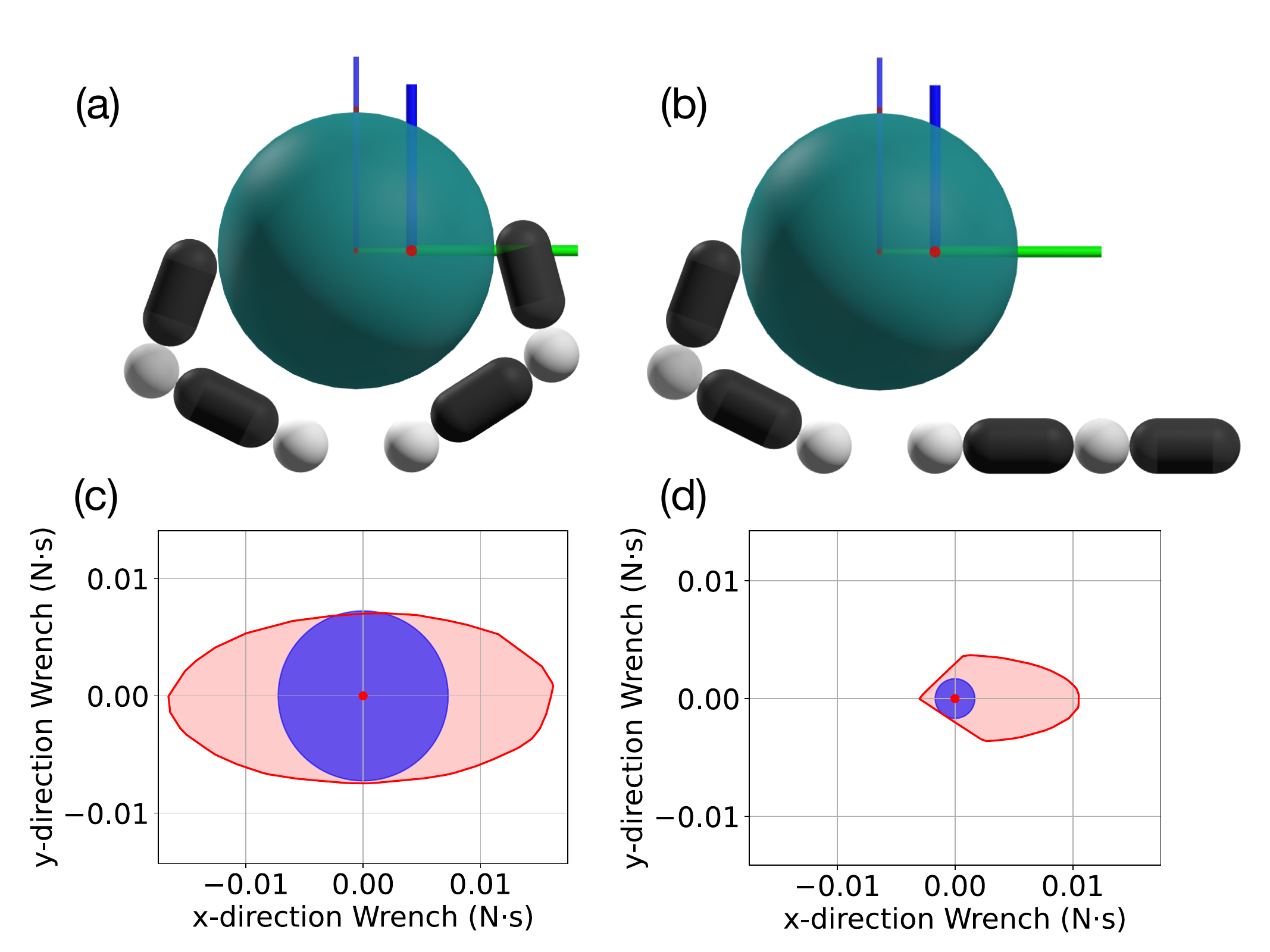}
\caption{Top Row: Comparison of two configurations (\textbf{a}) and (\textbf{b}) for the 2D planar hand system in \Cref{fig:motion-set-visualization}. Although both configurations are advantageous for moving the object to the goal, they share the same MPC value function cost (\Cref{tab:robustness}). Bottom Row: Plot of the wrench set (illustrated in red) and the maximum inscribed sphere (blue), where (\textbf{c}) corresponds to configuration (\textbf{a}) and (\textbf{d}) corresponds to (\textbf{b}). Note that configuration (\textbf{a}) has a much larger inscribed sphere due to the anti-podal grasp.} 
\label{fig:robustness}
\end{figure}

\begin{table}[h]
\centering
\small
\begin{tabular}{@{}lcc@{}}
\toprule
Configuration & (a) & (b) \\
\midrule
MPC value function $V$          & $2.82\times10^{-4}$ & $2.16\times10^{-4}$ \\
Max inscribed sphere radius     & $7.12\times10^{-3}$ & $1.68\times10^{-3}$ \\
\bottomrule
\end{tabular}
\caption{Comparison of MPC value function cost and maximum inscribed sphere radius for two configurations in \Cref{fig:robustness}.}
\label{tab:robustness}
\vskip -0.2 true in
\end{table}

\subsection{Solving \eqref{eq:nonlinear-grasp-sampling} by Sampling-based Optimization}
Solving \eqref{eq:nonlinear-grasp-sampling} is challenging, as (i) the gradient of the cost function is quite difficult to obtain \cite{frogger}, and (ii) it requires global search, as evidenced by the nonconvex cost landscape in \Cref{fig:pusher-t-visualization}. As a result, we resort to a simple sampling-based search which samples from the feasible set of \eqref{eq:nonlinear-grasp-sampling} by rejection sampling, then chooses the best sample.

However, due to the high variance of this process in high-dimensional spaces, as well as the complexity of navigating high-dimensional configuration spaces, we found that directly applying this strategy is not very effective beyond simple planar problems. To scale the sampling-based search to \code{AllegroHand}, we introduce a few heuristic changes to make this optimization more tractable, which we detail in \Cref{appendix:allegro_contact_sampling_heurisitics}.

\begin{figure}
\centering\includegraphics[width = 0.48\textwidth]{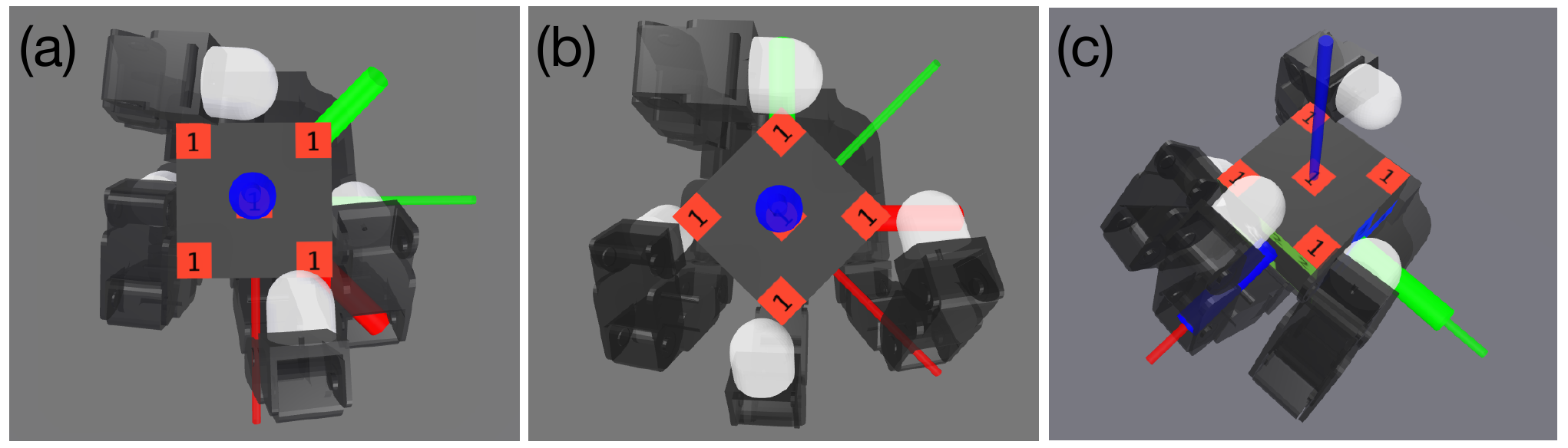}
\caption{Initial actuator configurations found by our method for (\textbf{a}) yaw $0^\circ$ to $45^\circ$, (\textbf{b}) yaw $45^\circ$ to $90^\circ$, and (\textbf{c}) pitch $0^\circ$ to $90^\circ$. The initial configuration of the cube is marked with a long and thin triad, while the goal configuration of the cube a short and thick triad.} 
\label{fig:synthesis-results}
\vskip -0.1 true in
\end{figure}

\begin{table}
\centering
\small
\begin{tabular}{@{}lccc@{}}
\toprule
Task & Position\,[mm] & Rotation\,[mrad] & Time\,[s] \\
\midrule
Yaw $0^{\circ}\!\to\!45^{\circ}$  & 12.05\,(3.49) & 32.41\,(27.14) & 57.83\,(2.08) \\
Yaw $45^{\circ}\!\to\!90^{\circ}$ &  9.85\,(10.88) & 15.72\,(21.08) & 57.28\,(0.72) \\
Pitch $0^{\circ}\!\to\!90^{\circ}$& 21.42\,(8.91) & 32.12\,(32.03) & 57.84\,(0.73) \\
\bottomrule
\end{tabular}
\caption{Performance of our algorithm for finding initial actuator configurations. Results are obtained with $10$ repeated runs. Each run is optimized using $1000$ samples. Each cell displays the mean (standard deviation).}
\vskip -0.2 true in
\label{tab:synthesis-results}
\end{table}

\subsection{Results}
We test the performance of our method on the \code{AllegroHand} system. We first set up three representative pairs of initial and goal configurations for the cube, then run our algorithm to solve \eqref{eq:nonlinear-grasp-sampling}, whose solution corresponds to the optimal initial configuration of the Allegro hand. Then, we evaluate the fitness of the solution by rolling out MPC (\Cref{alg:mpc}) from the found initial configuration, and record the error between the final rollout and the goal configurations. 

Results in \Cref{tab:synthesis-results} indicate that our method achieves error on the order of $10$mm in position, and $30$ mrad ($1.7^\circ$) in orientation. Moreover, the discovered initial hand configurations in \Cref{fig:synthesis-results} agree with our intuition on how the cube should be grasped if we want to move the cube towards the goals.

We note that the given result was obtained with $1000$ samples on the reduced-order model. Reducing the number of samples for lower computation time is possible, but would result in higher variance in performance. In addition, we found that other types of motions, such as pitch $0^\circ$ to $-90^\circ$, or rotation along the $x$-axis (roll), are physically very difficult on the Allegro hand.

\section{Global Planning with Roadmaps}\label{sec:roadmap}
Using the method for generating contact configurations in \Cref{sec:actuator-placement}, we present a simple recipe for global search, in which we chain local plans together to efficiently reach \emph{global} goals which are challenging for local MPC. Our method, inspired by the Probabilistic Roadmap (PRM)\cite{choset2005principles}, consists of an offline phase in which the roadmap is constructed, and an online phase where the same roadmap is reused to reach any new goals. 

\subsection{Roadmap Construction}
In the offline phase, we build a roadmap in which the vertices are grasping configurations and the edges are local plans that transition between these configurations. 

We present the roadmap construction method in \Cref{alg:roadmap}. 
The set of grasping configurations $\mathcal{G}$ in \Cref{alg:roadmap:generate_contact_configurations} are generated by solving \eqref{eq:nonlinear-grasp-sampling} for $q^\mathrm{o}$'s from a set of stable object configurations which sufficiently cover the object workspace.
For each pair of configurations $(q_i, q_j)$ in $\mathcal{G}$, we first try to reach $q^\mathrm{o}_j$ from $q_i$ by running MPC (\Cref{alg:roadmap:mpc}), and then reach $q^\mathrm{a}_j$ by standard collision-free motion planning (\Cref{alg:roadmap:collision_free}). This procedure is illustrated in \Cref{fig:roadmap}. We repeat this procedure and add all successful connections to the roadmap.

\begin{algorithm}
\caption{Roadmap Construction}\label{alg:roadmap}
\textbf{Input:} MPC Parameters: $T$, $n_\mathrm{max}$ and $H$\; \label{alg:roadmap:input}
\textbf{Output:} Sets of vertices $V$ and edges $E$\;
$V \gets \emptyset$, $E \gets \emptyset$\;
Generate contact configurations $\mathcal{G} \coloneqq \{(q^\mathrm{o}_i, q^\mathrm{a}_i)\}_{i=1}^m$ \label{alg:roadmap:generate_contact_configurations}\;
\For {ordered pair $q_i \coloneqq (q^\mathrm{o}_i, q^\mathrm{a}_i), q_j \coloneqq (q^\mathrm{o}_j, q^\mathrm{a}_j)$ in $\mathcal{G}$}{
    $q_{0:H}, u_{0:H-1} \gets \mathbf{MPC}(q_i, q^\mathrm{o}_j, T, n_\mathrm{max}, H)$\; \label{alg:roadmap:mpc}
    $q_{H:M}, u_{H:M-1} \gets \mathbf{CollisionFree}(q_H, q_j)$\; \label{alg:roadmap:collision_free}
    \If {$\mathbf{MPC}$ and $\mathbf{CollisionFree}$ both successful}{
        $V$.\texttt{add}($q_i$), $V$.\texttt{add}($q_j$)\;
        $E$.\texttt{add}($q_i$, $q_j$, $(q_{0:M}, u_{0:M-1})$)\;
    }
}
\algorithmicreturn $\; V, \; E$
\end{algorithm}

\begin{figure}[t]
\centering\includegraphics[width = 0.40\textwidth]{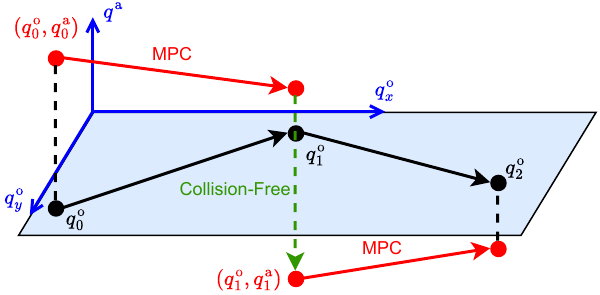}
\caption{Illustration of roadmap construction described by \Cref{alg:roadmap}. The vertical axis represents the actuator configuration $\qa$, while the horizontal plane the space of object configurations $\qu$. To connect the vertices $q_0 := (q^\mathrm{o}_0, q^\mathrm{a}_0)$ and $q_1 := (q^\mathrm{o}_1, q^\mathrm{a}_1)$, we first plan from $q_0$ to $q^\mathrm{o}_1$ with MPC, reaching the top red dot. As the top red dot and $q_1$ share the same object configuration $q^\mathrm{o}_1$, we can connect them with a collision free planner.} 
\label{fig:roadmap}
\vskip -0.2 true in
\end{figure}

We further note that in the special case of manipulating objects with geometric symmetries (such as \code{AllegroHand}), a single grasping configuration can represent multiple equivalent configurations. 
For example, a single grasping configuration for a cube can be expanded into $24$ distinct configurations, due to the cube's $24$ rotational symmetries. 
By exploiting this property, the same grasp and action sequence can be used to generate multiple edges in the roadmap. 

Consider the cube configuration in \Cref{fig:roadmap_example}a, which corresponds to the identity rotation (i.e. no rotation). By leveraging symmetry, we can reach its 24 rotational symmetries using just three basic operations: (i) yaw from $0^\circ$ to $90^\circ$, (ii) yaw from $0^\circ$ to $-90^\circ$ and (iii) pitch from $0^\circ$ to $90^\circ$. These three operations require just five grasps in total: 2 for each yaw and 1 for the pitch (see \Cref{fig:synthesis-results}). The grasps take about 5 minutes to generate (\Cref{tab:synthesis-results}). Moreover, connecting each pair of grasping configurations with MPC and collision-free motion planning takes a few seconds (\Cref{tab:planning_time}), keeping the total roadmap construction time under 10 minutes. Parallelizing these steps can further reduce computation time. 

\begin{figure*}[t]
\centering\includegraphics[width = 0.99\textwidth]{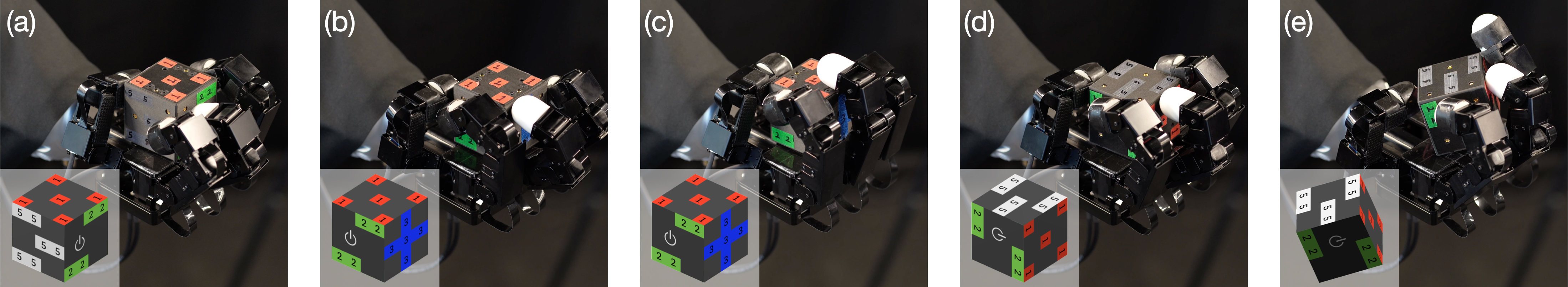}
\caption{A complete path generated by our roadmap-based global planner. The object configuration in each frame is highlighted at the lower-left corner.
(\textbf{a}) shows the starting configuration, which in this example is a vertex already in the roadmap.
(\textbf{b}) can be reached from (\textbf{a}) with a $-90^\circ$ yaw. 
(\textbf{c}) has the same object configuration as (\textbf{b}), but the hand has repositioned for a $90^\circ$ pitch.
(\textbf{d}) is system configuration after the pitch. 
(\textbf{e}) shows the system reaching the goal configuration from (\textbf{d}). 
The path (\textbf{a})-(\textbf{d}) is part of the roadmap and generated offline. 
(\textbf{d})-(\textbf{e}) is generated online using collision-free planning and local MPC. 
}
\label{fig:roadmap_example}
\vskip -0.15 true in
\end{figure*}

To verify the robustness of the constructed roadmap, we conducted a random walk on the roadmap on hardware, and recorded 150 successful consecutive edge transitions before hardware failure occurred (the hand overheated). Recording of this experiment can be found in the supplementary video. The high robustness of the roadmap can be attributed to (i) the simplicity of the cube's geometry, whose symmetry makes connecting nodes with MPC a lot easier; (ii) the caging property of the grasps (i.e. nodes), which consistently guides the cube back to the center of the palm.

\subsection{Inference on a Roadmap}
After the roadmap $(V, E)$ is constructed, we can synthesize plans connecting any starting configuration $q_0$ to any goal object configuration $q^\mathrm{o}_\text{goal}$. To do this, we first connect $q_0$ and $q^\mathrm{o}_\text{goal}$ to their respective nearest vertices in the vertex set $V$, which can be done using the same procedure in \Cref{alg:roadmap:mpc} and \Cref{alg:roadmap:collision_free} of \Cref{alg:roadmap}. Then, the problem reduces to finding the shortest path between two vertices on a graph, which can be solved with standard methods. An example path generated using this approach is shown in \Cref{fig:roadmap_example}. More roadmap planning examples can be found in the supplementary video.

\subsection{Limitations}
The primary limitation of our global planner is that its success depends on a connection recipe that is not yet general-purpose. Our current implementation heavily leverages the specific kinematics of the \code{AllegroHand} and the symmetries of the cube.

Our method for connecting two roadmap nodes, $(q^\mathrm{o}_i, q^\mathrm{a}_i) \in \mathcal{G}$ and $(q^\mathrm{o}_j, q^\mathrm{a}_j) \in \mathcal{G}$, involves two steps: (i) an \textbf{MPC} plan moves the object from $q^\mathrm{o}_i$ to $q^\mathrm{o}_j$, and (ii) a collision-free motion plan repositions the hand from the hand configuration reached by \textbf{MPC} to $q^\mathrm{a}_j$. Both steps are made easy by carefully selecting nodes that correspond to the 24 stable, symmetric rotations of the cube resting flat on the palm.

Our choice of nodes trivializes the first step, as connecting any $q^\mathrm{o}_i$ to $q^\mathrm{o}_j$ only needs to roll, pitch or yaw the cube by 90 degrees. 
However, for generic object shapes without exploitable symmetry, connectivity of $q^\mathrm{o}_j$ from $(q^\mathrm{o}_i, q^\mathrm{a}_i)$ needs to be computed more explicitly. One possible strategy is to build a convex inner approximation of the set of object poses reachable from $(q^\mathrm{o}_i, q^\mathrm{a}_i)$ using a method similar to \citet{petersen2023growing}, and attempt to connect only if $q^\mathrm{o}_j$ belongs to the approximated reachable set.

The second step critically assumes that the object remains stable and stationary at $q^\mathrm{o}_j$ while the hand moves.
This assumption, however, would likely fail for objects without nice symmetries or for tasks requiring grasps on unstable object poses, as the collision-free planner could no longer assume a static object. A more general method would need to explicitly reason about maintaining object stability during these hand-repositioning phases, such as a finger-gaiting strategy where a subset of fingers hold the object still while the other fingers reposition.

\section{Discussion and Conclusion}
\subsection{Synergies with Learning-based Methods}
Our contact-rich planners have strong connections to modern robot learning, particularly in addressing key bottlenecks in (i) large-scale imitation learning (IL) and (ii) reinforcement learning (RL).

\subsubsection{Data Synthesis and Augmentation for IL}
Recent advances in large-scale imitation learning are impressive \cite{black2024pi_0, lbmtri2025}, but their performance hinges on large-scale, embodiment-specific datasets. This data is typically collected via human teleoperation, which presents a significant bottleneck for two reasons: not only is teleoperation time-consuming and expensive, teleoperation interfaces are also often limited to controlling robot end effectors, making it challenging for humans to provide high-quality demonstrations for tasks involving whole-body or in-hand contact.

Our global planner directly addresses this challenge by serving as an automated expert data generator. By leveraging a model-based planner, we can create vast datasets of successful trajectories for complex, contact-rich tasks that are hard to teleoperate. For instance, we can generate synthetic expert demonstrations for tasks like \code{IiwaBimanual} and \code{AllegroHand} \citep{huaijiang-ral} to train a goal-conditioned diffusion policy \citep{chi2024diffusionpolicy}. The resulting policy could achieve a high success rate with zero human demonstrations and would be more computationally efficient at runtime than using the planner, as it wouldn't need to replan from scratch.

Beyond replacing human data, our planners can also be used to augment it. Human demonstrations often cover a narrow distribution of successful trajectories. A policy trained solely on this data can be brittle and fail when faced with small perturbations. Our local MPC planner can significantly enhance policy robustness. Following the approach in \citet{lu-rss}, we can take a human demonstration, inject noise at various points, and use a local MPC planner to generate a corrective action that brings the robot back toward the goal. This process creates a rich dataset of near-failure states and successful recoveries, teaching the learned policy how to handle perturbations far more effectively than a policy trained only on the original, unperturbed data.

\subsubsection{Efficient Exploration for RL}
Our global planner can also serve as efficient exploration mechanisms for RL. Despite many algorithmic advances, efficient exploration still remains a significant challenge for RL. For problems in \Cref{fig:exploration-difficulty}, for example, a naive Gaussian sampling exploration scheme would have a really hard time finding an effective policy. In order to bypass this difficulty, many practical RL algorithms leverage expert demonstrations either as rewards \cite{rajeswaran2018learningcomplexdexterousmanipulation, deepmimic, peng2021amp} or as random resets during training \cite{song2023reaching} to guide RL exploration towards relevant states that are more beneficial to accomplish the task.

Both \cite{khandate} and \cite{barreiros2025example-guided-rl} have successfully explored the effectiveness of using model-based planners such as \cite{gqdp} to vastly improve the sample efficiency of exploration in RL. We hypothesize that the stable states in our roadmap, for example, can serve as an extremely efficient exploration distribution in RL.

\subsection{Concluding Remarks and Limitations}
Have we solved the problem of planning and control through contact dynamics? Locally—on CQDC dynamics—the proposed MPC with Contact Trust Region constraints can achieve small tracking errors for the vast majority of the goals we sampled based on local reachability criteria. The local MPC can be readily integrated with a sampling-based planner to enable global search, although more work is necessary to generalize our global planner beyond the cube. 

We demonstrated our planner's efficacy on two distinct platforms: \code{IiwaBimanual} and \code{AllegroHand}. These examples were intentionally chosen to highlight the method's generalizability, as they represent significantly different scales, kinematic structures, and contact scenarios—from whole-body manipulation to dexterous, in-hand motion. Since our method only needs robot and object URDFs, it can be readily extended to other contact-rich systems, including different hands, grippers, or even legged robots (for which we may need to add balancing constraint to trust regions).

However, we do not understand this problem nearly as well as we do the simple pendulum: not only do we lack solid explanations for the small but non-zero number of planner failures, but we also have yet to fully understand the different roles played by feasibility constraints on \code{IiwaBimanual} vs \code{AllegroHand}.

Moreover, much more remains to be understood for second-order dynamics, both in simulation and on hardware. In particular, keeping the robot in contact with the object, without exploiting the artifact of CQDC dynamics, remains one of the biggest unsolved challenges. Incorporating the CTR constraints in MPC greatly alleviates the problem of lost contact, and the initial guess heuristics empirically helps a bit more. However, we occasionally still observe loss of contact, and its root cause remains unclear. 

Nevertheless, the tools presented in this paper already enable capabilities that were previously out of reach for model-based methods.
By accounting for the unilateral nature of contact, our contact trust region makes it possible to apply a broad range of robotics algorithms to contact-rich manipulation problems. 
We hope the MPC, grasp synthesis, and roadmap-based global planning methods introduced here are only a small sample of the many contact-rich manipulation algorithms yet to come.

\begin{acks}
We thank Stephen Proulx, Xinpei Ni, and Velin Dimitrov for building the cube with active OptiTrack markers, Giovanni Remigi for making the project website, and Bridget Hogan for building visualization tools. 
\end{acks}

\begin{dci}
The author(s) declared no potential conflicts of interest with respect to the research, authorship, and/or publication of this article
\end{dci}

\begin{funding}
This work is funded by the Defense Science \& Technology Agency Award DST00OECI20300823, Amazon Award PO\#2D-15694085, and the Boston Dynamics AI Institute Award Agmd Dtd 8/1/2023.
\end{funding}

\bibliographystyle{SageH}
\bibliography{references.bib}

\appendix
\section{Appendix}
\subsection{Derivation of \eqref{eq:kkt-sensitivity}}\label{proof:kkt-sensitivity}
The optimality conditions of \eqref{eq:q_dynamics_log} consists of the stationarity condition \eqref{eq:unconstrained_staionarity_with_lambda} and the 
relaxed complementary slackness \eqref{eq:perturbed-kkt}, which we reproduce below as
\begin{subequations}
\begin{align}
    \mathbf{P}q_+ + b - \sum_i \mathbf{J}_i^\top \lambda_i & = 0, \\
    \nu^\top_i\lambda_i & = 2 \kappa^{-1}, \quad \forall i.
\end{align}
\end{subequations}
The structure of the optimization problem parameters $(\mathbf{P},b,\mathbf{J}_i,c_i)$ tells us that $b$ is the only variable dependent on $u$. As the primal and dual variables $q_+,\lambda_i$ are also dependent variables, differentiating both equations w.r.t. $u$ gives us 
\begin{subequations}
\begin{align}
    \mathbf{P}\frac{\partial q_+}{\partial u} + \frac{\partial b}{\partial u} - \sum_i \mathbf{J}_i^\top \frac{\partial\lambda_i}{\partial u} & = 0 \\
    \left(\mathbf{J}_i\frac{\partial q_+}{\partial u}\right)^\top\lambda_i   + \frac{\partial \lambda_i}{\partial u}^\top (\mathbf{J}_i q_+ + c_i)   & = 0 \quad \forall i
\end{align}
\end{subequations}
Rewriting in matrix form gives us \eqref{eq:kkt-sensitivity}. 

We note that the matrix equation \eqref{eq:kkt-sensitivity} is not invertible unless the primal and dual cones defined in \eqref{eq:friction_constraints} were one-dimensional, e.g. $\mathcal{K} = \mathbb{R}_+$ and $\mathcal{K^\star} = \mathbb{R}_+$. This holds in the quadratic programming formulation of contact dynamics \cite{pangsimulator} where constraints are simple inequalities instead of conic inequalities. For higher-dimensional conic constraints such as \eqref{eq:friction_constraints}, differentiating \eqref{eq:unconstrained_staionarity_with_lambda} and \eqref{eq:dual_feasible_point} (instead of \eqref{eq:perturbed-kkt}) together would produce an invertible systems of equation to which the implicit function theorem can be applied. However, since deriving the derivatives involves complicated matrix differential calculus \cite{magnus2019matrix}, we omit its derivation in this paper.

\subsection{Proof of \Cref{lemma:motion-wrench}}\label{app:proof}
Writing the wrench set gives us 
\begin{subequations}
\begin{align}
    \mathcal{W}(\bar{q},\bar{u}) = \{w & | w = \sum_i \Ju[i]^\top \hat{\lambda}_i \\ 
    \hat{\lambda}_i & = \mathbf{D}_i\delta u + \lambda(\bar{q},\bar{u}) \\
    \delta u^\top\mathbf{\Sigma}\delta u & \leq 1 \\    
    \hat{\lambda}_i & \in \mathcal{K}^\star_i\}.
\end{align}
\end{subequations}
and we want to prove that the set, 
\begin{equation}
    \mathcal{M}_{\Sigma, \kappa}^{\mathcal{A},\mathrm{o}}(\bar{q},\bar{u}) \coloneqq \{ \mathbf{B}^\mathrm{o}\delta u + f^\mathrm{o}(\bar{q},\bar{u}) | \delta u \in \tilde{\mathcal{S}}^\mathcal{A}_{\Sigma, \kappa}(\bar{q},\bar{u})\}
\end{equation}
is equivalent to 
\begin{equation}
    \{\quplus | 
    \frac{\epsilon}{h}\Mu(\bar{q})(q^\mathrm{o}_+ - \bar{q}^\mathrm{o}) = h\tau^\mathrm{o} + w, w\in\mathcal{W}\}.
\end{equation}

To prove this, we utilize \Cref{lemma:taylor-approximation} to argue that for any given $\delta u$, the next configuration $\hat{q}_+^\mathrm{o}$ and contact impulses $\bar{\lambda}_i$ that are defined by a linear map on $\delta u$, 
\begin{align}
    \hat{q}_+^\mathrm{o} & = \mathbf{B}^\mathrm{o}\delta u +f^\mathrm{o}(\bar{q},\bar{u}), \\
    \hat{\lambda}_i & = \mathbf{D}_i \delta u + \lambda_i(\bar{q},\bar{u}),
\end{align}
must jointly satisfy
\begin{align}
    \frac{\varepsilon}{h}\Mu\left(\hat{q}_+^\mathrm{o} - \bar{q}^\mathrm{o}\right) = h\tau^\mathrm{o} + \sum_i \Ju[i]^\top \hat{\lambda}_i.
\end{align}

\subsection{Goal Generation for Local MPC}
\label{appendix:goal_selection}
To generate goals that are considered more locally reachable yet far enough to be challenging for the planner, we utilize the following schemes. For the \code{IiwaBimanual} environment, we first prescribe some initial state $\bar{q}$ by sampling contact points on the surface of the object and solving inverse kinematics; then, the goals are sampled from the boundary of the action-only object motion set $\mathcal{M}^{\mathcal{A},\mathrm{o}}_{\mathbf{\Sigma}, \kappa}(\bar{q},\bar{u})$ with $\bar{u}=\bar{q}^\mathrm{a}$. To make the goals sufficiently challenging, we use large ellipsoid radius for the purpose of sampling goals. The resulting goals have maximum $120^\circ$ rotation and $0.4m$ translation from the initial state (see \Cref{fig:planning-comparison} for the distribution of goals). Similarly, we prescribe some grasp for \code{AllegroHand} and sample orientations that have about a $60^\circ$ difference from the initial state according to the axis-angle metric. With the above procedure, we sampled $1233$ pairs of initial states and goals for \code{IiwaBimanual} and $1000$ pairs for \code{AllegroHand}.

We summarize in \Cref{tab:goal_average_movement} the average amount of translation and rotation between the initial and goal object configurations generated here. These would be the object pose tracking errors if the planning algorithm being evaluated did nothing.
\begin{table}[h]
\centering
\small
\begin{tabular}{@{}lcc@{}}
\toprule
 & \code{IiwaBimanual} & \code{AllegroHand} \\
\midrule
Mean $\lVert p_{\text{goal}} - p_{0}\rVert$ [mm]            & 152 & 16  \\
Mean $\lvert\Delta\theta(Q_{\text{goal}},Q_{0})\rvert$ [mrad] & 356 & 788 \\
\bottomrule
\end{tabular}
\caption{Mean translation and rotation differences for pairs of initial and goal object states generated in this section. }
\label{tab:goal_average_movement}
\vskip -0.2 true in
\end{table}

\subsection{Effect of Planning Horizon for Local MPC}
\label{appendix:local-planning-results:planning_horizon}
We study how the planning horizon $T$ used in \eqref{eq:nonlinear-to} affects the MPC errors. Specifically, for the \code{IiwaBimanual} system and the corresponding goals generated using the scheme in \Cref{appendix:goal_selection}, we run \Cref{alg:mpc} with planning horizon from $1$ to $5$, a maximum iteration of $2$ ($T=1 \dots 5$, $n_\mathrm{max} = 2$ in \Cref{alg:trajopt_ctr}), and a MPC rollout horizon $H$ of $10$. 

The results are summarized in \Cref{tab:planning-horizon-comparison}. Surprisingly, we found that the planning horizon $T$ has little to no effect on the final error for MPC: our method is able to reach the goals within a tight tolerance for all values of $T$, even $T=1$. 

We believe this can be explained by the Bellman's principle of optimality. A 1-step greedy policy is long-term optimal if it minimizes the sum of the immediate 1-step cost and the true optimal cost-to-go. This has been experimentally verified on smooth dynamical systems~\cite{orrico2024building}. 

When casting our MPC cost \eqref{eq:nonlinear-to:cost} for $T=1$ into the Bellman equation, $\|u_1 - u_0\|^2$ corresponds to the 1-step cost, and $\|q_\text{goal} - q_1\|^2_\mathbf{Q}$ serves as an approximation of the cost-to-go. The empirical success of $T=1$ suggests that for goals generated by our method (\Cref{appendix:goal_selection}), the simple quadratic term is a highly effective approximation of the true optimal cost-to-go. 
\begin{table}[h]
\centering
\small
\begin{tabular}{@{}lccccc@{}}
\toprule
 & $T=1$ & $T=2$ & $T=3$ & $T=4$ & $T=5$ \\
\midrule
\multirow{2}{*}{Translation\,[mm]}
  & 2.02 & 1.99 & 1.64 & 2.11 & 2.10 \\
  & (3.10) & (4.38) & (3.14) & (3.79) & (3.17) \\
\addlinespace
\multirow{2}{*}{Rotation\,[mrad]}
  & 2.72 & 3.51 & 3.25 & 4.13 & 5.03 \\
  & (0.75) & (1.18) & (2.87) & (1.41) & (2.82) \\
\bottomrule
\end{tabular}
\caption{Translation and rotation Errors for \code{IiwaBimanual} under different planning horizon $T$. The first row is the mean; the second row (numbers in brackets) is the standard deviation.}
\label{tab:planning-horizon-comparison}
\vskip -0.2 true in
\end{table}

\subsection{Hyperparameters for Local MPC}
\label{app:hyperparameters_for_lcoal_mpc}
In \Cref{tab:hyperparameters_for_lcoal_mpc}, we summarize additional hyperparameters for local MPC. These parameters are kept the same for all experiments involving \Cref{alg:mpc} and \Cref{alg:mpc_real_dynamics}, including the results in \Cref{fig:planning-comparison} and \Cref{fig:second-order-mpc-comparison}.

The MPC cost function is defined by the quadratic cost matrices $\mathbf{Q}$ and $\mathbf{R}$ in \eqref{eq:linear_trajopt:cost}. $\mathbf{Q}_\mathrm{o}$ is the submatrix of $\mathbf{Q}$ corresponding to the object.
As we are primarily concerned with moving objects to goal poses, we set the robot-related entries in the cost matrix $\mathbf{Q}$ in \eqref{eq:linear_trajopt} to $0$, and rely on the action cost \eqref{eq:linear_trajopt:cost} and the CTR constraint \eqref{eq:linear_trajopt:ctr} to stay close to the linearization points. 

Other key hyperparameters include the CQDC smoothing parameter $\kappa$ and time step $h$. The choice of $\kappa$ and $h$ is a trade-off between the ease of planning and model accuracy. Larger $\kappa$ reduces the effect of contact dynamics smoothing: at $\kappa = 10^7$, the smoothed CQDC dynamics \eqref{eq:q_dynamics_log} is very close to the non-smooth formulation \eqref{eq:q_dynamic_socp}. However, gradients of non-smooth dynamics are well-known to be less informative for local MPC. Similarly, smaller $h$ increases simulation fidelity, but also means MPC needs to take more steps to reach the goal. In practice, we tune these by sweeping across a range of values and selecting the best performers. Specifically, we tested $\kappa$ from $10^2$ and $10^6$ in orders or magnitude (e.g. $10^2, 10^3, \dots$), and $h$ from $0.01$ to $0.1$.

To improve computational efficiency, we introduce a signed distance threshold, denoted as $\phi$-threshold in \Cref{tab:hyperparameters_for_lcoal_mpc}. When constructing a CTR or R-CTR, we only include feasibility constraints for contact pairs whose signed distance $\phi_i$ is less than this threshold. This technique effectively prunes contact pairs that are far from making contact, which reduces the number of constraints in the trust region and speeds up the MPC. 

For our experiments, the values were chosen based on the object's size: slightly smaller than the $0.28\mathrm{m}$ bucket diameter for \code{IiwaBimanual} and the $0.06\mathrm{m}$ cube edge length for \code{AllegroHand}. This is a very conservative choice. To speed up computation, we can compute a lower bound on the signed distance at contact $i$ at the next time step from $\phi_i$, $\mathbf{J}_i$ and $\mathbf{\Sigma}$. Excluding feasibility constraints corresponding to contact pairs whose lower bound is greater than 0 (or some small positive threshold) would significantly reduce problem size, thereby speeding up computation.

\begin{table}[h]
\centering
\small
\begin{tabular}{llcc}
\toprule
 &  &  \code{IiwaBimanual} &  \code{AllegroHand} \\
\midrule
\multirow{2}{*}{$\mathbf{Q}_\mathrm{o}$} & Rotation & $0.1\mathbf{I}_1$  & $\mathbf{I}_3$ \\
  & Translation & $\mathbf{I}_2$ & $5\mathbf{I}_3$  \\
$\mathbf{R}$ & & $0.01\mathbf{I}_6$ & $0.01\mathbf{I}_{16}$ \\
$\kappa$ & & $10^4$ & $10^4$ \\
$h$ [s] & & $0.02$ & $0.05$ \\
$\phi$-threshold [m] & & 0.2 & 0.05 \\
\bottomrule
\end{tabular}
\caption{Additional hyperparameters for MPC, used by both \Cref{alg:mpc} and \Cref{alg:mpc_real_dynamics}.}
\label{tab:hyperparameters_for_lcoal_mpc}
\vskip -0.1 true in
\end{table}

\subsection{Hyperparameters for 2nd-order Dynamics Stabilization}
\label{app:2nd_order_mpc:hyperparameters}
We use the same $\mathbf{CtrTrajOpt}$ parameters from \Cref{tab:planning-hyperparameters-allegro-and-iiwa}. Other important MPC hyperparameters are summarized in \Cref{tab:MPC_2nd_order_hyperparameters}. For closed-loop \code{IiwaBimanual} experiments, we use larger $N$ for goals that are further away, while keeping $N \times H$ around $25$. In contrast, as slippage happens a lot more often on the \code{AllegroHand} system, we keep $N$ constant for all goals to give MPC more chances to recover.

To validate our choice of the rollout horizon $H$, we studied how different values of $H$ affect tracking errors in simulation, with results presented in \Cref{fig:different_H_closed_loop}. We computed the Pearson correlation coefficient ($r$) between the  and tracking error. The analysis revealed a negligible correlation for \code{IiwaBimanual} ($r=0.00082$) and a very weak one for \code{AllegroHand} ($r=-0.059$). These findings confirm that performance is largely insensitive to $H$.
Ultimately, our selection of $H$ is constrained by computational limitations. Because the robot must pause while the MPC replans, we selected values of $H$ that balanced the replanning frequency and duration of these pauses.

\begin{table}[ht]
\centering
\small
\setlength{\tabcolsep}{2pt}
\begin{tabular}{@{}lcccc@{}}
\toprule
& \multicolumn{2}{c}{$N$ (Number of re-plans)} & \multicolumn{2}{c}{$H$ (MPC rollout horizon)} \\
\cmidrule(lr){2-3}\cmidrule(lr){4-5}
\textbf{System} 
                & open-loop & closed-loop
                & open-loop & closed-loop \\
\midrule
\code{IiwaBimanual}    & 1 & Variable & 25 & 5 \\
\code{AllegroHand} & 1 & 5           & 50 & 10 \\
\bottomrule
\end{tabular}
\caption{Hyperparameters for the MPC in \Cref{alg:mpc_real_dynamics}. \textbf{No Heuristics} and \textbf{Closed-loop} share the same parameters.}
\label{tab:MPC_2nd_order_hyperparameters}
\vskip -0.1 true in
\end{table}

\begin{figure}
\centering\includegraphics[width = 0.48\textwidth]{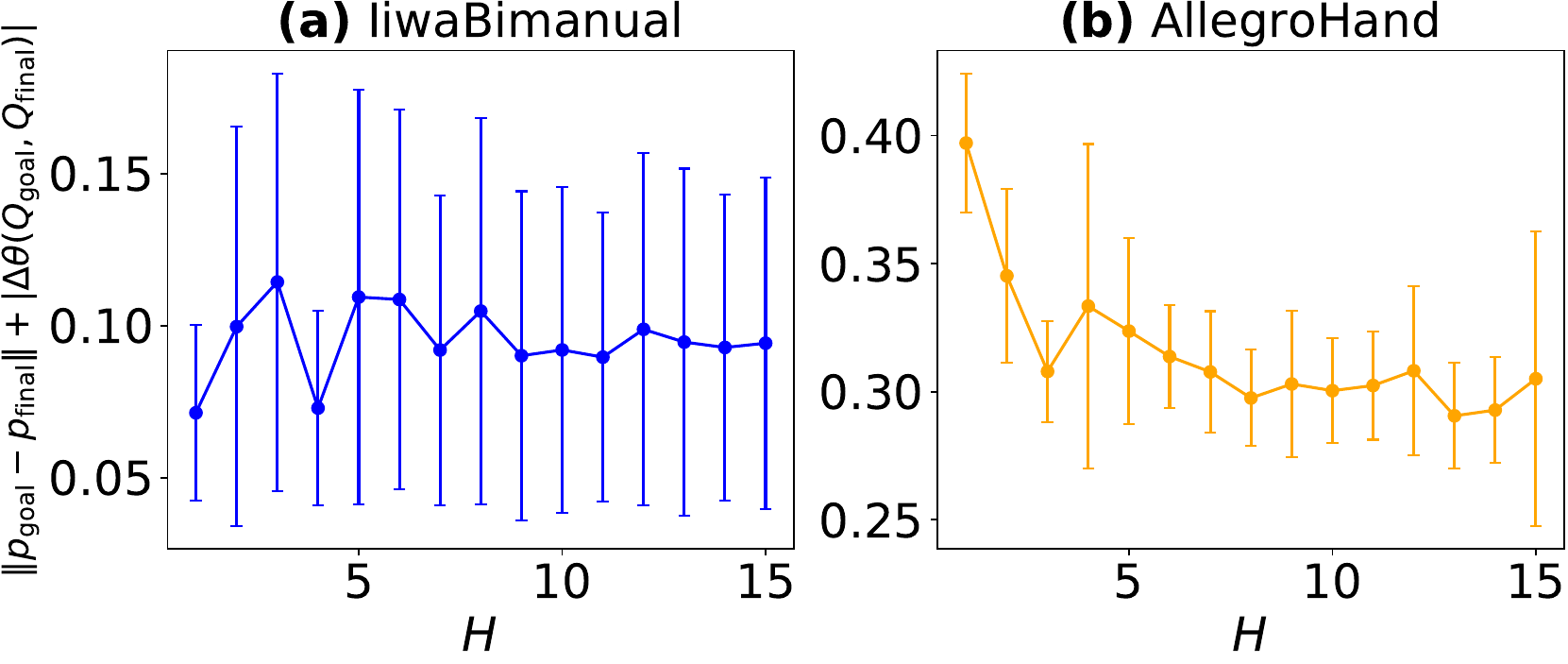}
\caption{Analysis of closed-loop tracking error as a function of the rollout horizon $H$. Performance is largely insensitive to this hyperparameter. For \code{IiwaBimanual}, there is a negligible correlation between $H$ and tracking error ($r = 0.00082$). A minor negative correlation is observed for \code{AllegroHand} ($r = -0.059$), which we attribute to violations of the quasi-dynamic assumption. Error bars represent $0.1\sigma$.}
\label{fig:different_H_closed_loop}
\vskip -0.1 true in
\end{figure}

\begin{figure}
\centering\includegraphics[width = 0.45\textwidth]{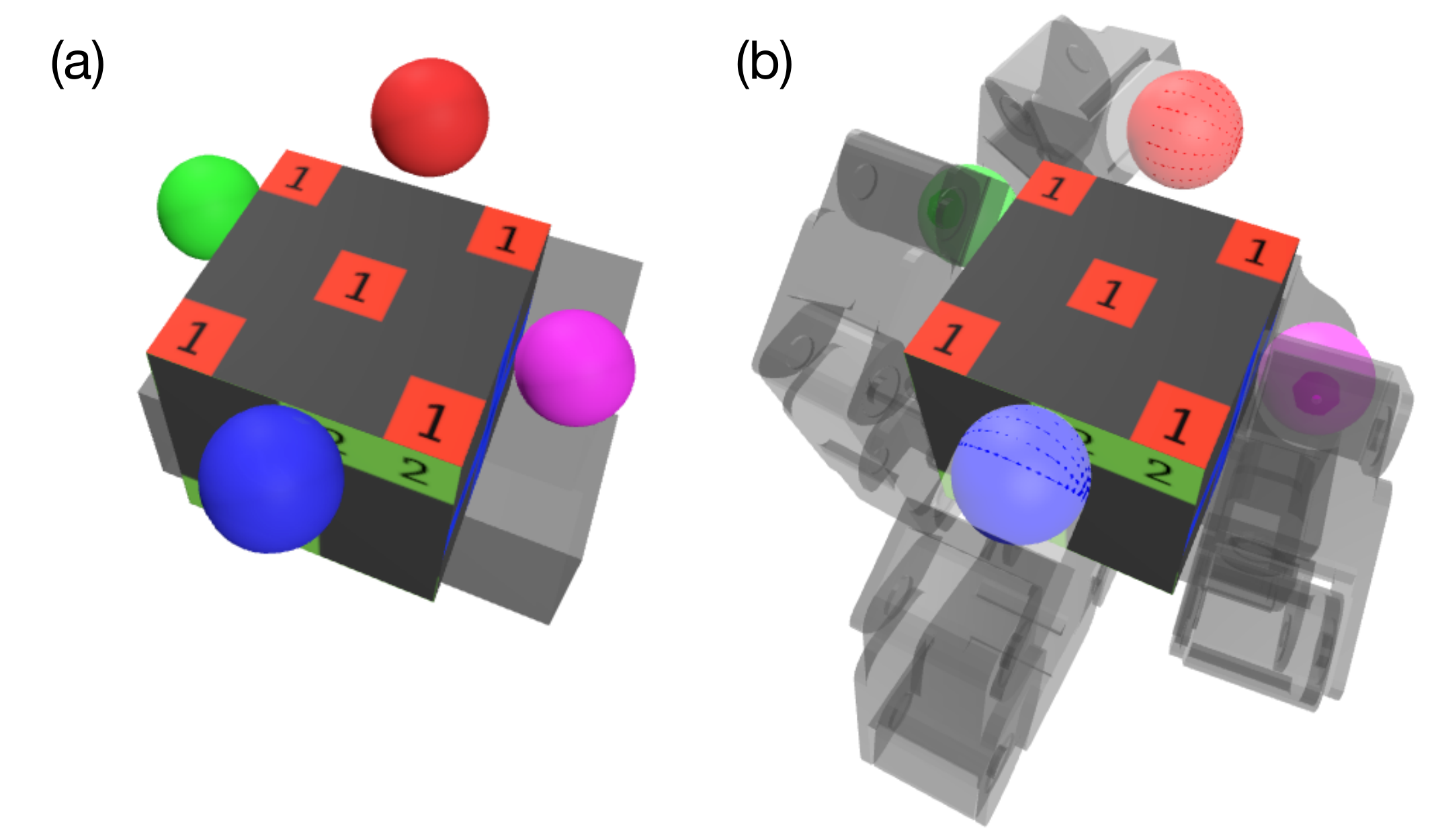}
\caption{(\textbf{a}) Visualization of the reduced-order system for the Allegro in \Cref{sec:reduced-order}, which consists of four spheres. (\textbf{b}) Visualization of the IK result on the optimized solution, mapping the reduced-order system to the full system.} 
\label{fig:reduced-order}
\vskip -0.1 true in
\end{figure}
\subsection{Heuristics for Generating Allegro Contact Configurations}
\label{appendix:allegro_contact_sampling_heurisitics}
\subsubsection{A Reduced-Order Model}\label{sec:reduced-order} Inspired by \cite[\S 5.4]{mls}, instead of solving \eqref{eq:nonlinear-grasp-sampling} directly on the full configuration space of $\qa$, we first solve the problem on a reduced-order model, then map the solution to $\qa$. For the Allegro hand, our reduced-order model consists of four spheres, each free to move in 3 dimensions with bounding-box joint limits (\Cref{fig:reduced-order}a). Then, we solve for $\qa$ by matching the fingertip positions to those of the spheres with Inverse Kinematics (IK), as shown in \Cref{fig:reduced-order}b. Our IK procedure solves the following Quadratic Program (QP) iteratively, 
\begin{subequations}\label{eq:diffik}
    \begin{align}
        \min_{\delta q, p_{+,k}}\;\; & \sum_k\|p_{+,k} - p_{\text{des},k}\|^2\label{eq:diffik:cost}\\
        \text{s.t.}\;\; & p_{+,k} = p_k +\frac{ \partial{p_k}}{\partial q}\delta q\label{eq:diffik:jacobian} \\
        & \mathbf{J}_{n_i}\delta q + \phi_i \geq 0 \label{eq:diffik:non-penetration}\\
        & -\varepsilon\mathbf{1}\leq \delta q \leq \varepsilon\mathbf{1},\label{eq:diffik:bounds}
    \end{align}
\end{subequations}
where $k$ indexes each fingertip and its corresponding sphere, $p_{\text{des},k}\in\mathbb{R}^3$ is the location of the sphere, $p_k$ is the location of the fingertips at the current iteration $q$. Note that \eqref{eq:diffik:jacobian} corresponds to a linearization of forward kinematics and \eqref{eq:diffik:non-penetration} enforces non penetration at every iteration. After an optimal $\delta q^\star$ is found, we start the next iteration with $q\leftarrow q + \delta q^\star$.

\begin{figure*}[h]
\centering
\subfloat[A goal which MPC with R-CTR successfully reached but the ETR baseline failed to reach. Note the initial contact points in frame (1) do not form a force-closure grasp. Without the dual feasibility constraint, ETR does not recognize the unilateral-ness of the initial contact configuration and fails catastrophically. ]{
    \includegraphics[width=0.98\textwidth]{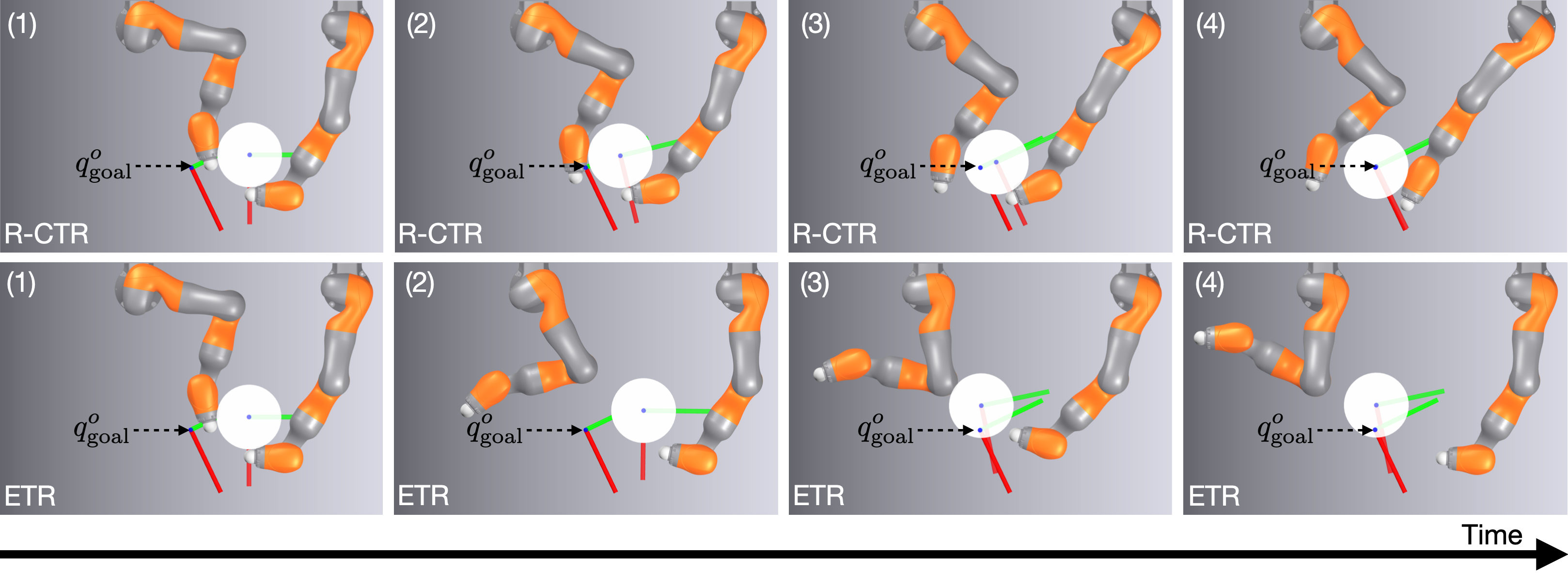}
    \label{fig:iiwa_planning_examples:etr_failure}
}
\hfill
\subfloat[A goal which both R-CTR and ETR-based MPC successfully reached. Note how the initial contact points form a nice anti-podal grasp on the cylinder.]{
    \includegraphics[width=0.98\textwidth]{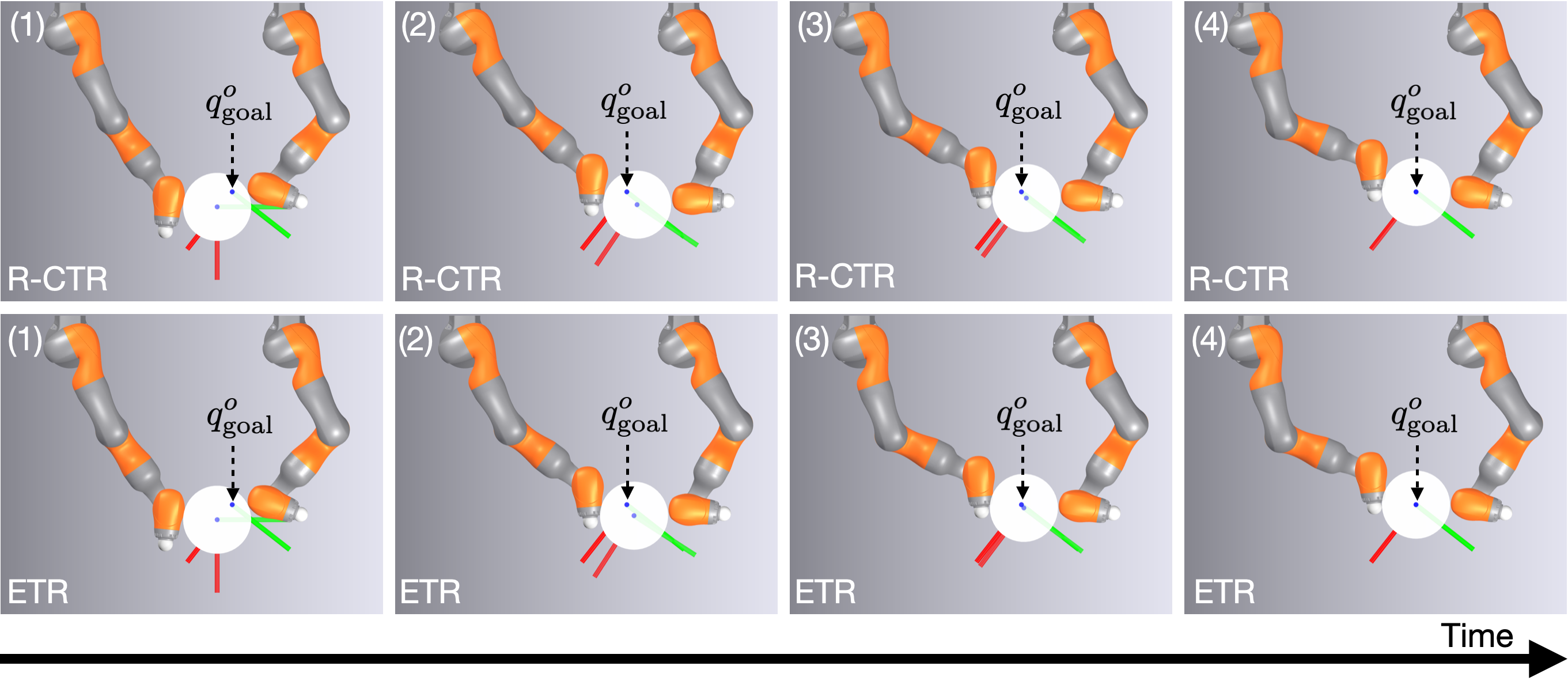}
    \label{fig:iiwa_planning_examples:etr_success}
}
\caption{Example trajectories for \code{IiwaBimanual} from running local MPC (\Cref{alg:mpc}) to reach the goals generated in \Cref{sec:local-planning-results:goal_selection}. While R-CTR-based MPC reaches both goals successfully, ETR-based MPC fails in (\textbf{a}) but succeeds in (\textbf{b}).}
\label{fig:iiwa_planning_examples}
\end{figure*}

\subsubsection{Contact Sampling Distribution}
Furthermore, we noticed that when we sample from the feasible set of \eqref{eq:nonlinear-grasp-sampling}, the spheres in the reduced-order model frequently do not fully end up in contact with the cube, resulting in a low robustness metric. Thus, after sampling sphere positions from their respective joint limits, we project the spheres to the nearest point on the surface of the cube, so that we are effectively sampling from a distribution of grasps.

\subsection{Local MPC Examples}
\label{app:local_mpc_examples}
For brevity and to maintain the narrative flow of the main text, we present detailed examples of local MPC (\Cref{alg:mpc}) here, which include \Cref{fig:iiwa_planning_examples} and \Cref{fig:allegro_planning_examples}.

\begin{figure*}[t]
\subfloat[A goal which MPC with R-CTR successfully reached but the ETR baseline failed to reach. The failure begins in frame (2), where a small gap appears between the thumb and the cube. Without the dual feasibility constraint, the ETR-based MPC incorrectly calculates that moving the thumb away from the object will reduce tracking error, leading to a catastrophic failure.]{
    \includegraphics[width=0.98\textwidth]{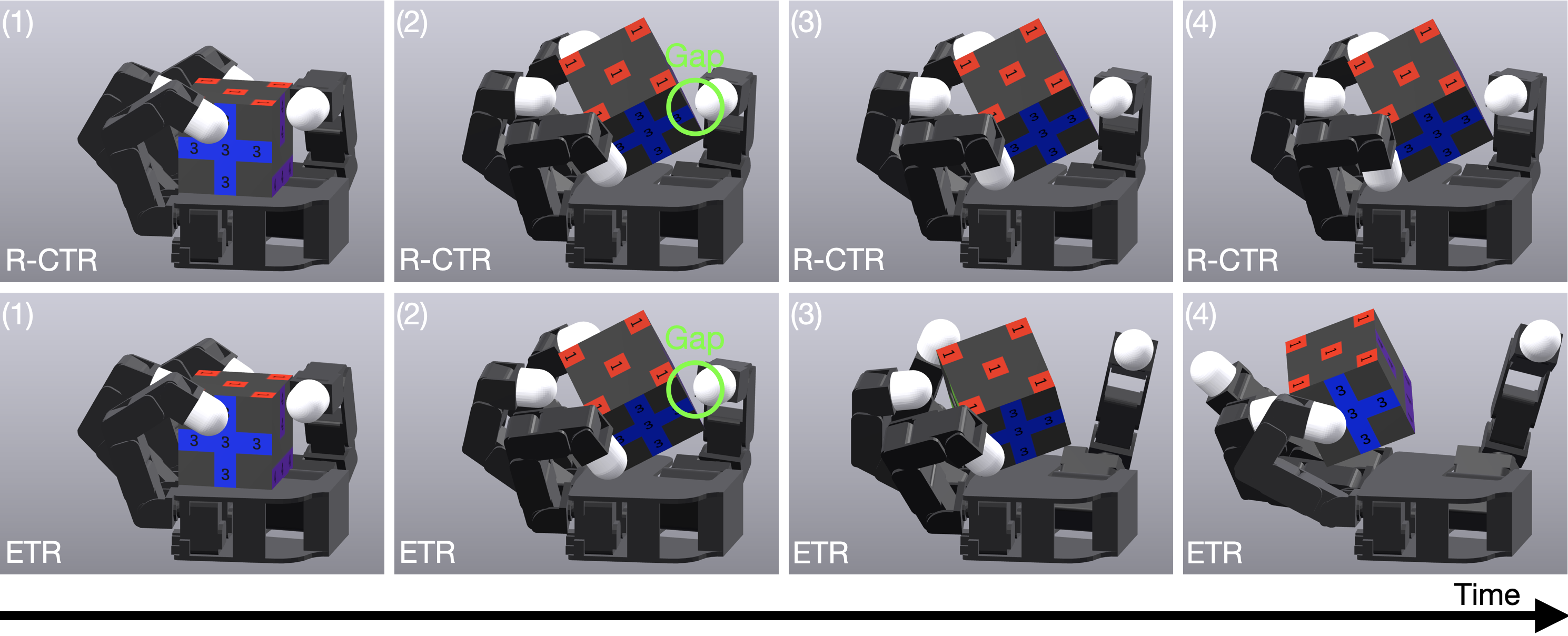}
    \label{fig:allegro_planning_examples:etr_failure}
}
\hfill
\subfloat[A goal which both R-CTR and ETR-based MPC successfully reached. Note how the fingers form a nice closure grasp on the cube.]{
    \includegraphics[width=0.98\textwidth]{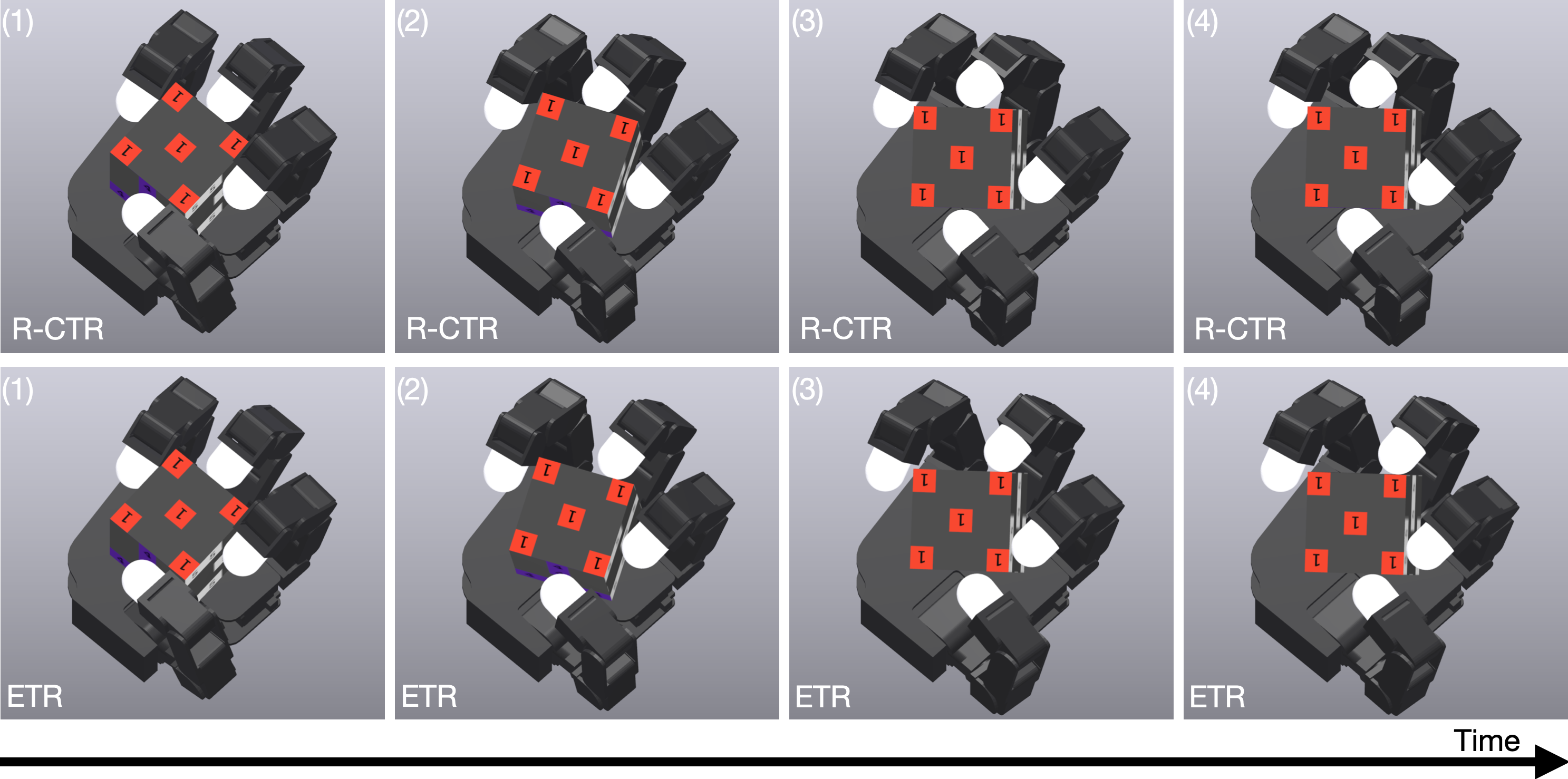}
    \label{fig:allegro_planning_examples:etr_success}
}
\caption{Example trajectories for \code{AllegroHand} from running local MPC (\Cref{alg:mpc}) to reach the goals generated in \Cref{sec:local-planning-results:goal_selection}. While R-CTR-based MPC reaches both goals successfully, ETR-based MPC fails in (\textbf{a}) but succeeds in (\textbf{b}).}
\label{fig:allegro_planning_examples}
\end{figure*}


\end{document}